%% file: iclr2027_conference.tex
\documentclass{article}
\usepackage{iclr2027_conference,times}

\input{math_commands.tex}

\usepackage{hyperref}
\usepackage{url}
\hypersetup{colorlinks=true, linkcolor=blue!55!black, citecolor=blue!55!black,
            urlcolor=blue!55!black, filecolor=blue!55!black}

\usepackage{amssymb}
\usepackage{amsthm}
\usepackage{mathtools}
\usepackage{booktabs}
\usepackage{multirow}
\usepackage{graphicx}
\usepackage[table,dvipsnames]{xcolor}
\usepackage{tikz}
\usetikzlibrary{arrows.meta,positioning,fit,backgrounds,calc,decorations.pathreplacing}
\usepackage{array}
\usepackage{caption}
\usepackage{subcaption}
\usepackage{enumitem}
\usepackage{algorithm}
\usepackage{algpseudocode}
\usepackage{float}

\graphicspath{{figures/}}

\theoremstyle{plain}
\newtheorem{theorem}{Theorem}
\newtheorem{proposition}[theorem]{Proposition}
\newtheorem{lemma}[theorem]{Lemma}

\theoremstyle{definition}
\newtheorem{definition}[theorem]{Definition}
\newtheorem{assumption}{Assumption}
\theoremstyle{remark}
\newtheorem{remark}[theorem]{Remark}

\newcommand{\Rb}{\mathbb{R}}
\newcommand{\best}[1]{\textbf{#1}}

\DeclareMathOperator{\diag}{diag}
\DeclareMathOperator{\blkdiag}{blkdiag}
\DeclareMathOperator{\spls}{softplus}
\DeclareMathOperator{\tr}{tr}
\newcommand{\Top}{\mathcal{T}}
\newcommand{\Mop}{\mathcal{M}}
\newcommand{\Fcal}{\mathcal{F}}
\newcommand{\Acal}{\mathcal{A}}
\newcommand{\up}{$\uparrow$}
\newcommand{\dn}{$\downarrow$}

\title{CAMOS: Coupled Oscillatory State-Space Model for Multimodal Clinical
Time-Series}

\author{Maxx Richard Rahman \\
Saarland University and \\
German Research Center for Artificial Intelligence (DFKI) \\
Saarbr\"{u}cken, Germany \\
\And
Mostafa Hammouda \\
Saarland University \\
Saarbr\"{u}cken, Germany \\
\AND
Wolfgang Maass \\
Saarland University and \\
German Research Center for Artificial Intelligence (DFKI) \\
Saarbr\"{u}cken, Germany
}

\iclrfinalcopy 
\begin{document}

\maketitle
\lhead{Preprint}

\begin{abstract}
Longitudinal clinical cohorts are multimodal, irregularly sampled and
pervasively incomplete: in ADNI, positron emission tomography and cerebrospinal
fluid assays are absent from roughly half of all visits. Linear state-space
models handle irregular sampling gracefully but treat a missing modality by
masking the \emph{input}, leaving the transition operator untouched. We prove
that this is a representational limitation: the latent state of any linear
state-space layer whose transition operator does not depend on the availability
pattern is an \emph{additive} function of the availability indicators, so no
such layer can represent an interaction between two modalities being jointly
present or jointly absent. We propose CAMOS, which gives each modality a bank of
second-order oscillators coupled through a matrix that sits \emph{inside} the
differential equation and is gated by availability, so the transition operator
itself becomes a function of which measurements were taken. Coupling invalidates
the analysis of uncoupled oscillatory models, and we restore it: a per-channel
Gershgorin budget makes the effective stiffness positive definite uniformly over
all $2^M$ availability patterns and all gaps, an energy argument charges
amplification to availability transitions rather than sequence length, and a
channel factorization preserves exact associative parallel scans. On ADNI, CAMOS
outperforms uncoupled oscillatory state-space models and clinical fusion models
on same-visit staging, landmark prediction and longitudinal forecasting, and under zero-shot transfer to OASIS-3
it is the only model that avoids collapse to the majority class.
\end{abstract}

\section{Introduction}

State-space models (SSMs) have been widely adopted for learning on long sequences \citep{gu2022s4,smith2023s5,orvieto2023lru,gu2024mamba}, and they suit
clinical time series because they discretize a continuous-time system and so
treat a variable step size as a parameter rather than an approximation.
Oscillatory SSMs sharpen this further, deriving stability from a nonnegative
diagonal state matrix alone and admitting associative parallel scans in
logarithmic depth \citep{rusch2025linoss,boyer2025dlinoss,blelloch1990prefix}.
However, clinical cohorts pose a second difficulty alongside irregular
sampling, i.e., measurements arrive as several modalities ordered by clinical need,
cost and invasiveness, so each has its own gaps, while the underlying processes
are coupled. In Alzheimer's disease, amyloid deposition precedes tau pathology,
then neurodegeneration, then cognitive decline
\citep{hardy2002amyloid,jack2010cascade,jack2013updated}, yet in ADNI
\citep{petersen2010adni} imaging and cognitive testing occur at most visits
whereas amyloid PET and cerebrospinal fluid assays are far rarer, so the cascade
should be tracked through a missingness process that is neither uniform nor
ignorable \citep{rubin1976missing}.

Existing models treat that missingness as an input phenomenon. An absent
measurement is handled by zeroing or masking the forcing term, sometimes with an
appended indicator \citep{che2018grud}, while the transition operator, which
propagates latent state across time, is left untouched. Unrolling such a layer
shows the cost: the latent state is a sum of per-modality, per-step
contributions, each scaled by its own availability indicator, so the effect of
one modality being present cannot depend on whether another was present at a
different step. Joint availability is
nevertheless informative, since whether an unverified amyloid estimate should
influence a tau prediction depends on whether amyloid was measured, not merely
on its value. Multimodal fusion methods provide a solution to this additive class
\citep{hayat2022medfuse,zhang2022m3care,wang2023shaspec,yao2024drfuse}, but they
fuse after the temporal model has been applied, so nothing lets one modality shape
another's trajectory across a long gap.

In this paper, we place availability inside the dynamics. Each modality
receives a bank of second-order oscillators, and modalities drive one another
through a coupling matrix in the differential equation itself, gated by
availability. Because the gate multiplies a matrix in the homogeneous part, the
transition operator becomes a function of which measurements were taken, and the
additivity obstruction disappears. However, coupling invalidates the analysis
of uncoupled oscillatory models in three places: i) stability is a property of the
effective stiffness, which no condition on the individual coupling blocks alone
controls; ii) parallel evaluation relies on a diagonal state matrix, which coupling
destroys; and iii) a per-step spectral radius bound no longer implies boundedness
once the operator varies over time. Our contributions, each addressing one of them are:

\begin{itemize}[itemsep=2pt,topsep=3pt,leftmargin=*]
\item We introduce CAMOS, a coupled oscillatory state-space model whose
  transition operator is gated by the availability pattern and factored
  channel-wise, preserving exact associative parallel scans.
\item We prove that every availability-independent linear state-space layer is
  additive in the availability indicators while CAMOS is not.
\item We perform empirical evaluation on ADNI and OASIS-3 where CAMOS outperforms uncoupled         oscillatory SSMs and
  clinical fusion models on same-visit staging, landmark prediction and longitudinal forecasting.
\end{itemize}

\section{Related Work}

\paragraph{Structured, oscillatory and coupled state-space models.} S4, S5, LRU
and Mamba established structured and selective state matrices solved by FFT or
associative scans
\citep{gu2020hippo,gu2022s4,smith2023s5,orvieto2023lru,gu2024mamba,dao2024mamba2}.
LinOSS \citep{rusch2025linoss} instead discretizes a second-order oscillator
system, obtaining stability from a nonnegative diagonal state matrix alone and
universality within continuous causal operators; its two variants, LinOSS-IM and
LinOSS-IMEX, differ in whether the oscillator system is integrated by an
implicit or a symplectic implicit-explicit scheme, and D-LinOSS
\citep{boyer2025dlinoss} decouples damping from frequency. These are the direct
parents of CAMOS and our primary baselines. All of them take a single monolithic
input, so concatenating modalities leaves the state matrix shared and the
modality partition invisible to the dynamics, and no object in the model
represents one modality driving another; they correspond to a zero coupling with
a dense input matrix. 
Coupled Mamba \citep{li2024coupledmamba} is the closest on the oscillator side,
coupling the hidden-state chains of several modalities, but it couples a
discrete first-order recurrence rather than a continuous-time second-order
system, so an irregular gap has no natural meaning.

\paragraph{Irregular sampling and missing modalities.} Irregular sampling is
handled by GRU-D, ODE-based models, controlled differential equations, mTAN and
Raindrop
\citep{che2018grud,chen2018neuralode,rubanova2019latentode,debrouwer2019gruodebayes,kidger2020ncde,walker2024logncde,shukla2021mtan,zhang2022raindrop},
all of which are sequential in the sequence length and none of which gates a
coupled dynamical system by availability. Clinical fusion methods instead handle
missing modalities at the level of \emph{representations}
\citep{ma2021smil,zhang2022m3care,wang2023shaspec,yao2024drfuse,hayat2022medfuse,joze2020mmtm,polsterl2021daft},
fusing after the temporal model has be applied. MedFuse \citep{hayat2022medfuse} learns
a representation of the missing modality inside an LSTM fusion, and DrFuse
\citep{yao2024drfuse} disentangles shared from modality-specific
representations so that the shared component survives an absent modality. Both
are nonlinear in the availability pattern, which places them outside the
additive class, and both are our second family of baselines.

 \begin{figure}[ht]
    \centering
    \includegraphics[width=\textwidth]{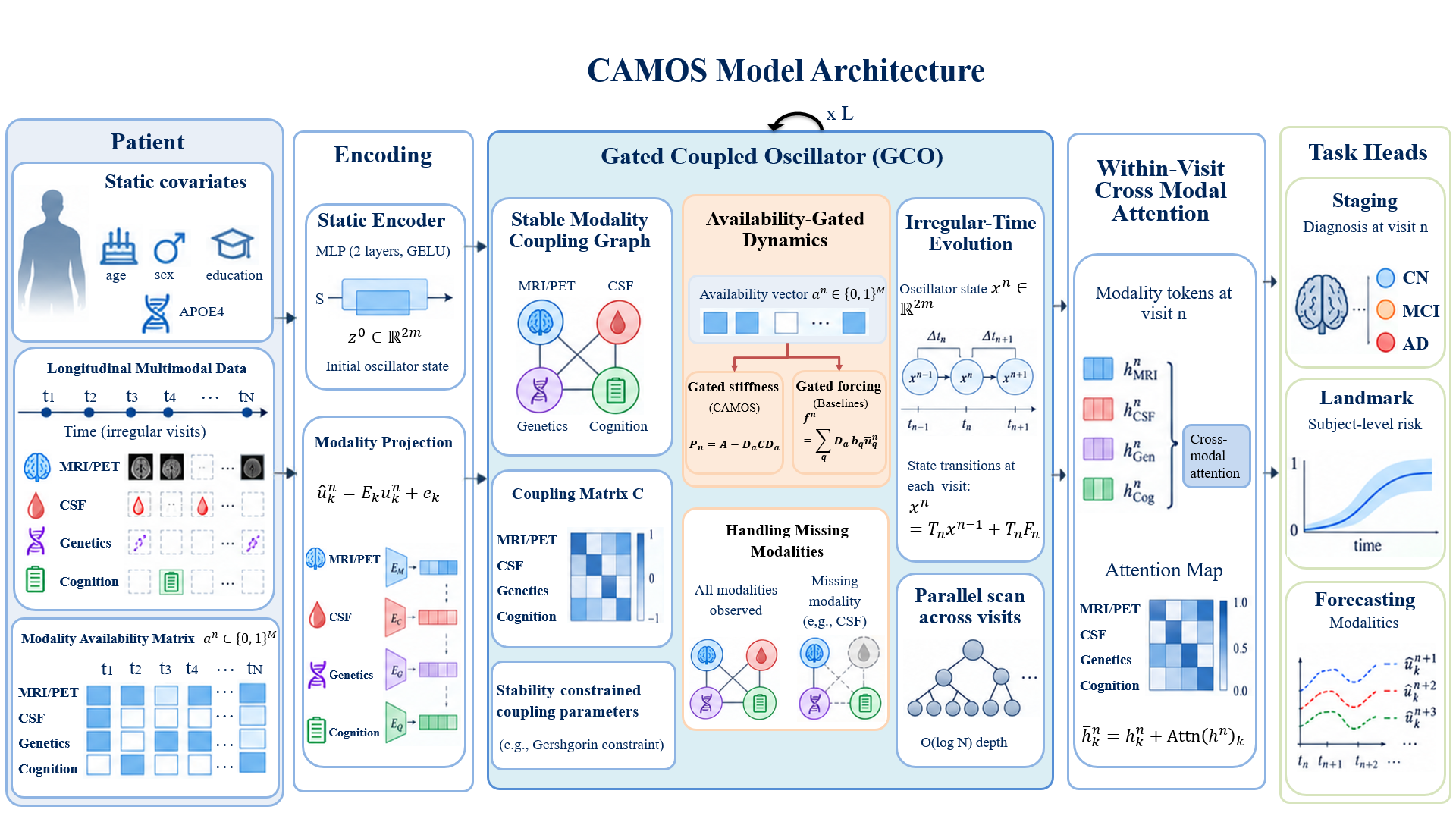}
    \caption{CAMOS architecture consists of static and longitudinal multimodal patient data through availability-gated coupled oscillators, irregular-time dynamics, and cross-modal attention.}
    \label{fig:camos}
\end{figure}

\section{Problem Formulation}
\label{sec:problem}

Let us consider a subject contribute $N$ visits at
strictly increasing times $t_1 < \dots < t_N$ with gaps
$\Delta t_n := t_n - t_{n-1} > 0$ in years, where $\Delta t_1 := t_1 - t_0$ and
$t_0$ is the cohort baseline. At visit $n$, each of the $M$ modalities
$k \in [M] := \{1,\dots,M\}$ contributes a feature vector
$u_k^n \in \Rb^{p_k}$ and a binary availability indicator $a_k^n \in \{0,1\}$,
equal to one exactly when modality $k$ was measured, with $u_k^n$ undefined and
never read when $a_k^n = 0$. We write
$a^n = (a_1^n,\dots,a_M^n) \in \{0,1\}^M$ for the \emph{availability pattern},
$a^{1:N}$ for its sequence, $m := Md$ for the stacked latent width, and
$D_{a} := \blkdiag(a_1 I_d, \dots, a_M I_d) \in \Rb^{m \times m}$ for the gate
matrix. Each subject also has static covariates $s \in \Rb^{p_s}$. The three prediction targets are: i) \textit{same-visit staging}: a visit-level ordinal
label $\ell^n \in \{\mathrm{CN},\mathrm{MCI},\mathrm{AD}\}$ predicted as
$\hat p^n$ in the $2$-simplex; ii) \textit{landmark prediction}: a subject-level binary
label $q \in \{0,1\}$ for progression from mild cognitive impairment (MCI) to
Alzheimer's disease (AD) within three years, predicted as $\hat q \in [0,1]$;
and iii) \textit{longitudinal forecasting}: the values $u_k^{n+h}$ at horizon
$h \ge 1$, predicted as $\hat u_k^{n+h}$ and scored only where
$a_k^{n+h} = 1$. The model is a map:
$  \Phi:\ \bigl(\{u_k^n\},\, a^{1:N},\, \{t_n\},\, s\bigr)
  \;\longmapsto\;
  \bigl(\{\hat p^n\},\, \hat q,\, \{\hat u_k^{n+h}\}\bigr),
$
which uses $a^{1:N}$ as a genuine input rather than merely as a mask on $u$.

\section{Proposed CAMOS Model}
\label{sec:method}

CAMOS is a stack of $L$ gated coupled oscillator (GCO) blocks between an encoder
and three task heads,
$ 
  \Phi = \bigl\{\mathrm{Head}_\tau\bigr\}_{\tau \in \{1,2,3\}}
  \circ\ \mathrm{Attn}\ \circ\
  \mathrm{GCO}^{(L)} \circ \cdots \circ \mathrm{GCO}^{(1)}
  \circ\ \mathrm{Enc}.
$ 
Everything except the GCO recurrence is applied pointwise in $n$ and is
therefore trivially parallel. The GCO recurrence is evaluated by an associative
scan in $O(\log N)$ depth. All proofs are in Appendix~\ref{app:proofs}.

\subsection{Input encoders}
\label{sec:encoders}

Static covariates $s \in \Rb^{p_s}$ set the initial condition
$x^0 = (z^0, y^0)^\top \in \Rb^{2m}$ of the oscillator bank through a two-layer
perceptron with GELU activation \citep{hendrycks2016gelu}, rather than adding an
offset at the readout, which encodes the hypothesis that baseline demographics
modulate the phase and amplitude of the cascade rather than shifting outcomes
additively. Modalities are measured on incommensurable scales, so each is
projected affinely into a shared width $h$,
\begin{equation}
  \tilde u_k^n = E_k\, u_k^n + e_k \in \Rb^{h},
  \qquad E_k \in \Rb^{h \times p_k},\ e_k \in \Rb^{h},
  \label{eq:proj}
\end{equation}
in parallel across visits and evaluated only when $a_k^n = 1$; the value of
$\tilde u_k^n$ is irrelevant when $a_k^n = 0$ because the forcing is gated in
\eqref{eq:gated}.

\subsection{Coupled multimodal oscillators, and why the coupling must be
budgeted}
\label{sec:cmo}

Each modality $k$ carries a bank of $d$ oscillators obeying
\begin{equation}
  z_k'(t) = -A_k\, y_k(t) + \sum_{j \neq k} C_{kj}\, y_j(t)
            + B_k\, \tilde u_k(t),
  \qquad
  y_k'(t) = z_k(t),
  \label{eq:ode}
\end{equation}
with $A_k, C_{kj} \in \Rb^{d\times d}$ and $B_k \in \Rb^{d \times h}$. Stacking
$y = (y_1^\top,\dots,y_M^\top)^\top \in \Rb^{m}$ and likewise for $z$,
\eqref{eq:ode} becomes $y'' = -P y + B\tilde u$ with
\begin{equation}
\begin{aligned}
  P &:= A - C,
  \qquad
  A := \blkdiag(A_1,\dots,A_M), \\
  [C]_{kj} &:= C_{kj}\quad (k \neq j),
  \qquad
  [C]_{kk} := 0,
  \qquad
  B := \blkdiag(B_1,\dots,B_M).
\end{aligned}
\label{eq:P}
\end{equation}
Stability, discretization and cost are properties of the \emph{effective
stiffness} $P$ rather than of the individual blocks, which invalidates the
natural first guess.

\begin{remark}[Block-level conditions do not give stability]
\label{rem:psdfails}
Parameterizing each block as $C_{kj} = Q_{kj}Q_{kj}^\top \succeq 0$ does
\emph{not} make $P \succeq 0$. Under Assumption~\ref{as:sym} the assembled $C$
is symmetric with zero diagonal blocks, so $\tr(C) = 0$ and $C$ has a strictly
negative eigenvalue whenever $C \neq 0$. The only free bound is
$\lambda_{\min}(P) \ge \lambda_{\min}(A) - \|C\|_2$, which can be negative
whenever $\|C\|_2 > \lambda_{\min}(A)$; then $I + \Delta t^2 P$ can be singular,
the implicit solve fails, and the transition operator can have spectral radius
exceeding one. A \emph{joint} condition on $A$ and $C$ is required.
\end{remark}

\paragraph{Channel factorization.} We take $A_k$ and $C_{kj}$ diagonal,
$A_k = \diag(\alpha_{k,1},\dots,\alpha_{k,d})$ and
$C_{kj} = \diag(c_{kj,1},\dots,c_{kj,d})$, so coupling mixes modalities within
each latent channel but not across channels. With $\Pi$ the permutation
$\Pi\, e_{(k-1)d + r} = e_{(r-1)M + k}$, reindexing from (modality-major,
channel-minor) to (channel-major, modality-minor),
\begin{equation}
  \Pi\, P\, \Pi^\top = \blkdiag\bigl(P^{(1)},\dots,P^{(d)}\bigr),
  \qquad
  \bigl[P^{(r)}\bigr]_{kk} = \alpha_{k,r}, \quad
  \bigl[P^{(r)}\bigr]_{kj} = -\,c_{kj,r}\ \ (k \neq j),
  \label{eq:channelblock}
\end{equation}
so the coupled system decomposes into $d$ independent networks of $M$
oscillators, each carrying its own weighted modality graph
$P^{(r)} \in \Rb^{M\times M}$. Appendix~\ref{app:dense} shows that a dense
$C_{kj}$ raises the per-step cost from $O(dM^3)$ to $O(d^3M^3)$, and that a
low-rank $C_{kj}$ does not help, because low-rank-plus-diagonal structure is not
closed under the scan combine.

\paragraph{Symmetry and Gershgorin budget.} We impose two assumptions.
\begin{assumption}[Symmetry]\label{as:sym}
$c_{kj,r} = c_{jk,r}$ for all $k,j \in [M]$, $r \in [d]$, so every $P^{(r)}$ is
symmetric and $P = P^\top$.
\end{assumption}
\begin{assumption}[Gershgorin budget]\label{as:budget}
There is $\epsilon \in (0,1]$ such that
$\sum_{j\neq k} |c_{kj,r}| \le (1-\epsilon)\,\alpha_{k,r}$ for all
$k \in [M]$, $r \in [d]$, with $\alpha_{k,r} > 0$.
\end{assumption}

Both hold by construction under the following parameterization, which needs no
projection step, no power iteration, and is differentiable everywhere. Let
$\hat\alpha_{k,r}$ and $\hat c_{kj,r} = \hat c_{jk,r}$ be free parameters
(parameterize the strict upper triangle and mirror it) and set, with $\delta > 0$
a small constant,
\begin{equation}
  \alpha_{k,r} = \spls(\hat\alpha_{k,r}),
  \quad
  \sigma_{k,r} = \sum_{l \neq k} |\hat c_{kl,r}|,
  \quad
  c_{kj,r}
  = \frac{(1-\epsilon)\,\min\{\alpha_{k,r},\alpha_{j,r}\}}
         {\max\{\sigma_{k,r},\,\sigma_{j,r},\,\delta\}}
    \;\hat c_{kj,r}.
  \label{eq:param}
\end{equation}
Symmetry is immediate because the prefactor is symmetric in $(k,j)$, and the
budget follows termwise. Writing $\alpha_{\min} := \min_{k,r}\alpha_{k,r}$,
$\alpha_{\max} := \max_{k,r}\alpha_{k,r}$ and
$\gamma := \max_{r}\|C^{(r)}\|_2$ for the coupling strength, we set
$\mu := \epsilon\,\alpha_{\min}$ and $L_P := (2-\epsilon)\,\alpha_{\max}$, and
Assumption~\ref{as:budget} gives $\gamma \le (1-\epsilon)\alpha_{\max}$.

\begin{lemma}[Uniform positive definiteness]\label{lem:psd}
For every $a \in [0,1]^M$ the matrix $P(a) = A - D_a C D_a$ is symmetric and
$\mu I_m \preceq P(a) \preceq L_P I_m$. In particular $P(a) \succ 0$
simultaneously for all $2^M$ binary availability patterns,
$I_m + \Delta t^2 P(a)$ is symmetric positive definite for every
$\Delta t \in \Rb$, and its Cholesky factorization exists.
\end{lemma}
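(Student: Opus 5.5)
The plan is to reduce everything to the channel blocks via the permutation $\Pi$ of \eqref{eq:channelblock} and then apply Gershgorin's theorem to each block.

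\emph{Symmetry.} Since $D_a$ is diagonal, $D_a = D_a^\top$. $C$ is symmetric by Assumption~\ref{as:sym}, so $D_a C D_a$ is symmetric, and $A$ is diagonal. Hence $P(a) = P(a)^\top$.

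\emph{Block structure of the gated matrix.} The gate $D_a$ is also diagonal and channel-compatible: $\Pi D_a \Pi^\top = \blkdiag(\diag(a),\dots,\diag(a))$, with one $M\times M$ copy $\diag(a)$ per channel. It follows that
\[
\Pi P(a)\Pi^\top = \blkdiag\bigl(P^{(1)}(a),\dots,P^{(d)}(a)\bigr),
\]
where $[P^{(r)}(a)]_{kk} = \alpha_{k,r}$ and $[P^{(r)}(a)]_{kj} = -a_k a_j c_{kj,r}$ for $k\neq j$. The diagonal of the coupling block vanishes, so the gate does not affect the diagonal entries.

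\emph{Gershgorin bound.} Because $\Pi$ is orthogonal, the spectrum of $P(a)$ is the union of the spectra of the blocks $P^{(r)}(a)$. Each block is real symmetric, so its eigenvalues are real. By Gershgorin, every eigenvalue lies in some interval $[\alpha_{k,r} - R_{k,r},\ \alpha_{k,r} + R_{k,r}]$ with radius
\[
R_{k,r} = \sum_{j\neq k} a_k a_j |c_{kj,r}|.
\]
Since $a\in[0,1]^M$, we have $0\le a_k a_j \le 1$, so Assumption~\ref{as:budget} gives $R_{k,r} \le (1-\epsilon)\alpha_{k,r}$. The eigenvalue therefore lies in $[\epsilon\alpha_{k,r},\ (2-\epsilon)\alpha_{k,r}] \subseteq [\mu, L_P]$. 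Together with symmetry, this yields $\mu I_m \preceq P(a) \preceq L_P I_m$.

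\emph{Consequences.} Since $\epsilon>0$ and $\alpha_{\min}>0$, we have $\mu>0$, so $P(a)\succ 0$. This holds in particular for all $2^M$ binary patterns. For any real $\Delta t$, $\Delta t^2\ge 0$, so $I_m + \Delta t^2 P(a) \succeq (1+\Delta t^2\mu) I_m \succeq I_m$. This matrix is symmetric, hence symmetric positive definite, and the standard theorem gives existence (and uniqueness) of its Cholesky factor.

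The step needing the most care is the block-structure identity: one must check that $\Pi$ commutes appropriately with the gate, so that each channel block sees the same $\diag(a)$. A more basic point is that Gershgorin's theorem applies to the assembled symmetric matrix rather than to individual $C_{kj}$. This is exactly the joint condition Remark~\ref{rem:psdfails} calls for. One could even skip $\Pi$ and apply Gershgorin directly to $P(a)$, since its rows have the same off-diagonal mass, but the channel form makes the bookkeeping transparent.
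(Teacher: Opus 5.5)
Your proof is correct and follows the paper's argument. Both apply a per-channel Gershgorin bound to $P^{(r)}(a)$ using $0 \le a_k a_j \le 1$ and the budget, then transfer the bounds to $P(a)$ because conjugation by $\Pi$ preserves the spectrum, and finish with $1+\Delta t^2\nu \ge 1+\Delta t^2\mu > 0$ for $I_m + \Delta t^2 P(a)$. You also check the symmetry of $D_a C D_a$ and the form of $\Pi D_a \Pi^\top$ explicitly, which the paper leaves implicit.
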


The proof is a per-channel Gershgorin argument using only $|a_k a_j| \le 1$, so
gating can shrink but never enlarge an off-diagonal row sum: the budget is
imposed once, on the \emph{ungated} coupling, and gating preserves it
automatically, so no availability pattern can destabilize the model.

\subsection{Availability gating and what input masking cannot represent}
\label{sec:gating}

Availability enters in exactly two places, the stiffness and the forcing:
\begin{equation}
  P_n := A - D_{a^n}\, C\, D_{a^n},
  \qquad
  f^n := \sum_{k=1}^{M} a_k^n\, \iota_k\, B_k\, \tilde u_k^n \in \Rb^m,
  \label{eq:gated}
\end{equation}
where $\iota_k : \Rb^d \to \Rb^m$ embeds into the $k$-th modality block. Since
$[D_{a^n} C D_{a^n}]_{kj} = a_k^n a_j^n C_{kj}$, the coupling gate is the
\emph{symmetric product} $g_{kj}^n = a_k^n a_j^n \in \{0,1\}$, so
$[P_n^{(r)}]_{kj} = -a_k^n a_j^n c_{kj,r}$ off the diagonal. Two properties
motivate this over the asymmetric alternative $g_{kj}^n = a_j^n$, which gates
only by the source modality: $P_n$ remains symmetric, which
Assumption~\ref{as:sym} and every result below require, and an unobserved
modality is removed from the graph entirely.

\begin{proposition}[Exact isolation of unobserved modalities]\label{prop:isolation}
If $a_j^n = 0$, then $P_n$ is block diagonal with respect to the splitting
$\{j\} \cup ([M]\setminus\{j\})$, hence so are $S_n$ and $\Top_n$ of
\eqref{eq:recurrence}. Consequently, at visit $n$ the latent state of modality
$j$ has exactly zero influence on every other modality and vice versa, while
modality $j$ continues to evolve under its own $A_j$ with
$S_n^{jj} = (I_d + \Delta t_n^2 A_j)^{-1}$. Under the asymmetric gate
$g_{kj} = a_j$, $P_n$ is block \emph{triangular} rather than block diagonal, so
the isolation is one-sided and $P_n$ is no longer symmetric, invalidating
Lemma~\ref{lem:psd}.
\end{proposition}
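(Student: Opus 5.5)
The argument is a direct block computation. The recurrence \eqref{eq:recurrence} appears only later in the paper, so I assume that $S_n$ is the implicit-solve matrix $S_n = (I_m + \Delta t_n^2 P_n)^{-1}$. I also assume that $\Top_n$ is assembled from $S_n$, $P_n$, $\Delta t_n$ and identity blocks by sums, products and inverses only. This is the usual LinOSS-IM form, with entries such as $S_n$, $\Delta t_n S_n$, $-\Delta t_n P_n S_n$, and so on, acting on the $(z,y)$ pair.

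The key steps are as follows.

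\textbf{Step 1: $P_n$ is block diagonal.} Suppose $a_j^n = 0$. By \eqref{eq:gated} the off-diagonal blocks of $P_n$ are $[P_n]_{kj} = -a_k^n a_j^n C_{kj}$ and $[P_n]_{jk} = -a_j^n a_k^n C_{jk}$. Both vanish for every $k \neq j$. The diagonal block is $[P_n]_{jj} = A_j - (a_j^n)^2 [C]_{jj} = A_j$, since $[C]_{jj} = 0$ anyway. So after permuting modality $j$ to the front, $P_n = \blkdiag(A_j, P_n^{\setminus j})$.

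\textbf{Step 2: $S_n$ and $\Top_n$ inherit the structure.} The set of matrices that are block diagonal with respect to this fixed splitting is a subalgebra, and it is closed under inversion of invertible elements. Invertibility of $I_m + \Delta t_n^2 P_n$ is guaranteed by Lemma~\ref{lem:psd}. Hence
\[
S_n = \blkdiag\bigl((I_d+\Delta t_n^2 A_j)^{-1},\ (I + \Delta t_n^2 P_n^{\setminus j})^{-1}\bigr),
\]
which gives the stated formula for $S_n^{jj}$. Each $2m\times 2m$ block of $\Top_n$ is built from these pieces. After the compatible permutation, which splits both $z$ and $y$ by modality, $\Top_n$ is therefore block diagonal as well.

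\textbf{Step 3: exact isolation.} The homogeneous update is $x^n = \Top_n x^{n-1} + (\text{forcing})$. Here the forcing from \eqref{eq:gated} lies blockwise in $\iota_k$ and carries the modality-$j$ term multiplied by $a_j^n = 0$. So the $j$-components of $x^n$ depend only on the $j$-components of $x^{n-1}$, and symmetrically for the rest. Hence there is zero cross-influence at step $n$, and modality $j$ evolves under $A_j$ alone.

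\textbf{Step 4: the asymmetric gate.} With $g_{kj} = a_j$ and $a_j = 0$, the column-$j$ blocks $C_{kj}a_j$ vanish. The row-$j$ blocks $C_{jk}a_k$ generally do not. So $P_n$ is block triangular: others no longer influence $j$, but $j$ still influences others. This isolation is one-sided, and only half of the claim holds. Moreover $[P_n]_{jk} \neq [P_n]_{kj}^\top$ whenever some $a_k C_{jk} \neq 0$, so $P_n$ is not symmetric, and Lemma~\ref{lem:psd}'s premise fails. A two-modality example with $d=1$ makes this concrete. It suffices to show that the symmetric-PD conclusion fails, e.g., that $P_n$ is not symmetric, and hence the proof of Lemma~\ref{lem:psd} no longer applies.

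\textbf{Main obstacle.} The only delicate point is Step 2, because it depends on the exact form of $\Top_n$ in \eqref{eq:recurrence}. I would verify that every entry there is a polynomial in $S_n$, $P_n$ and $\Delta t_n$. If an IMEX variant is used, it involves only products of such terms, so the closure argument still applies.
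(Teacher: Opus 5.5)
Your argument is the paper's proof. The gate kills the $(k,j)$ and $(j,k)$ blocks, so $P_n$ is block diagonal. Block diagonality then passes to $S_n$ by inversion and to the four blocks of $\Top_n$, which are $S_n$, $\Delta t_n S_n$ and $-\Delta t_n P_n S_n$, as you guessed. Restricting to the $j$ block gives $S_n^{jj} = (I_d + \Delta t_n^2 A_j)^{-1}$.

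There is one slip in Step 4. You correctly identify which blocks vanish, but you read the direction of influence backwards. The $(k,j)$ block of $P_n$ is the coefficient of $y_j$ in the equation for $z_k'$. So when the column-$j$ blocks $-a_j^n C_{kj}$ vanish, it is modality $j$ that stops influencing the others. The surviving row-$j$ blocks $-a_k^n C_{jk}$ mean the observed modalities still drive the unobserved one. This matches the paper's description of $g_{kj}=a_j$ as gating by the source. With $j$ ordered first, $P_n$ and $S_n$ are block upper triangular, so $y_j^n$ depends on $y_{\setminus j}^{n-1}$ but not conversely.

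The proposition only asserts one-sided isolation, block triangularity and non-symmetry, which you establish correctly. So the slip does not break the proof, but that sentence should be reversed.
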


Without the gate, the one-step cross-modal influence is
$\bigl\lvert [S_n^{(r)}]_{kj} \bigr\rvert = \Delta t_n^{2}\, \lvert c_{kj,r} \rvert + O\bigl(\Delta t_n^{4} L_P^{2}\bigr)$
at short gaps, peaks at intermediate gaps and decays as $\Delta t_n^{-2}$ at long
ones (Proposition~\ref{prop:leak}), so the gate matters most in the intermediate
regime and for slow channels. The structural point is that $P_n$ sits
in the homogeneous part of the dynamics and is a function of $a^n$, and the
following theorem is the formal consequence.

\begin{definition}[Input-gated linear layers]\label{def:classA}
Let $\Acal$ be the class of layers
$x^n = T_n x^{n-1} + \sum_{k=1}^{M} a_k^n B_k^n \tilde u_k^n$ with $x^0 = 0$ and
$o^n = R x^n$, in which $T_n$, $B_k^n$ and $R$ may depend arbitrarily on the
parameters, on $\{t_m\}$ and on $\{\Delta t_m\}$, but \emph{not} on the
availability pattern $a^{1:N}$.
\end{definition}

$\Acal$ contains the state-space layers of S4, S5, LRU, LinOSS and D-LinOSS
applied to concatenated multimodal inputs with masked or zero-filled missing
entries. It excludes selective models such as Mamba when the availability vector
is appended to the input, since selection then makes $T_n$ depend on $a$.

\begin{theorem}[Availability additivity]\label{thm:additivity}
For every layer in $\Acal$ and every $n$,
$o^n = \sum_{m \le n} \sum_{k} a_k^m \psi_{k,m}$ with
$\psi_{k,m} := R\,T_{n}\cdots T_{m+1} B_k^m \tilde u_k^m$, so $o^n$ is a
multilinear polynomial of degree one in $a^{1:n}$ and every mixed availability
difference of order two or higher vanishes identically,
$\Delta_{(k,m)}\Delta_{(j,m')} o^n \equiv 0$ for all $(k,m)\neq(j,m')$, where
$\Delta_{(k,m)} F(a) := F(a\,|\,a_k^m{=}1) - F(a\,|\,a_k^m{=}0)$. CAMOS does not
lie in $\Acal$: already for $M=2$, $d=1$, $L=1$,
\begin{equation}
  [S_n]_{12}
  = \frac{\Delta t_n^2\, g\, c}
         {(1+\Delta t_n^2\alpha_1)(1+\Delta t_n^2\alpha_2)
          - \Delta t_n^4 g^2 c^2},
  \qquad g = a_1^n a_2^n,
  \label{eq:S12}
\end{equation}
which vanishes when $g=0$ and is nonzero when $g=1$ for any $c \neq 0$, giving
$\Delta_{(1,n)}\Delta_{(2,n)} o^n \neq 0$.
\end{theorem}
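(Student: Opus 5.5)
The plan has two parts: additivity for the class $\Acal$, and a witness showing CAMOS lies outside it.

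For the first part, I unroll the recurrence of Definition~\ref{def:classA} by induction on $n$. Since $x^0=0$ and $x^n = T_n x^{n-1} + \sum_k a_k^n B_k^n \tilde u_k^n$, induction gives
\[
x^n = \sum_{m\le n}\sum_k a_k^m\, T_n\cdots T_{m+1} B_k^m \tilde u_k^m ,
\]
with the empty product equal to $I$ when $m=n$. Applying $R$ then yields the stated formula for $o^n$.

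The point to stress is that $T_i$, $B_k^m$ and $R$ are independent of $a^{1:N}$ by hypothesis. The encoded input $\tilde u_k^m$ enters only through the product $a_k^m\tilde u_k^m$, so its value when $a_k^m=0$ is irrelevant. We may therefore fix any value for it and treat each $\psi_{k,m}$ as a constant with respect to $a$. Hence $o^n$ is affine-linear, of degree one, in the indicators $a_k^m$.

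The vanishing of the mixed differences is then immediate. $\Delta_{(k,m)}$ applied to a degree-one multilinear polynomial returns the coefficient $\psi_{k,m}$. This coefficient does not depend on $a_j^{m'}$ for $(j,m')\neq(k,m)$, so a second difference annihilates it, and higher orders follow in the same way.

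For the second part, I take $M=2$, $d=1$, $L=1$, so that $P_n = \begin{pmatrix}\alpha_1 & -gc\\ -gc & \alpha_2\end{pmatrix}$ with $g=a_1^na_2^n$. I then invoke the implicit discretization $S_n=(I+\Delta t_n^2P_n)^{-1}$ from \eqref{eq:recurrence}, which is well defined by Lemma~\ref{lem:psd}. Inverting the $2\times2$ matrix explicitly gives \eqref{eq:S12}; the determinant is positive by the same lemma. Its value is $0$ at $g=0$ and nonzero at $g=1$ whenever $c\neq0$.

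To turn this into $\Delta_{(1,n)}\Delta_{(2,n)}o^n\neq0$, I choose a concrete configuration. I take $R$ to read out $y_1$, and set the forcing of modality $2$ and all previous states so that the only contribution to $o^n$ comes through the off-diagonal entry $[S_n]_{12}$ acting on a nonzero state of modality $2$. Concretely, I take a nonzero initial condition $x^0$ from the static encoder, or a nonzero $x^{n-1}$ produced by earlier visits. I then evaluate $o^n$ at the four patterns $(a_1^n,a_2^n)\in\{0,1\}^2$.

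The main obstacle is that the forcing terms $a_k^n B_k\tilde u_k^n$ also depend on $a^n$, and these contributions must not cancel the interaction. I would handle this by setting $\tilde u_k^n=0$ or $B_k=0$ in the witness, which is legitimate because only existence of a nonzero mixed difference is needed. Then three of the four patterns give the uncoupled value, the pattern $(1,1)$ differs by a term proportional to $[S_n]_{12}$ times the modality-2 state, and the mixed difference is nonzero. If the paper's $\Top_n$ also involves $z$ through an IMEX-type block, the same argument applies to the relevant off-diagonal block, which is a polynomial multiple of $[S_n]_{12}$.
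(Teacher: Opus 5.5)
Your treatment of the class $\Acal$ matches the paper's: you unroll from $x^0=0$, read off the degree-one form, and note that the coefficients $\psi_{k,m}$ are independent of $a$. The $2\times2$ inversion giving \eqref{eq:S12} is also the same. Where you differ is the witness for $\Delta_{(1,n)}\Delta_{(2,n)}o^n\neq0$.

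The paper uses zero prior state (e.g.\ $n=1$, $x^0=0$), an input on modality $1$ at visit $n$, and the readout $y_2^n$. Since the implicit step is $x^n=\Top_n(x^{n-1}+F_n)$, the forcing itself passes through $S_n$, so $y_2^n=\Delta t_n^2[S_n]_{21}\,a_1^nB_1\tilde u_1^n\propto a_1^na_2^n$. The forcing gate you treat as an obstacle is therefore one of the two factors the paper exploits. This lets its witness stay inside the $x^0=0$ convention of Definition~\ref{def:classA} with no prior history.

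Your witness switches the forcing off at visit $n$ and lets $\Top_n$ act on a prior modality-$2$ state, so the interaction comes purely from the homogeneous part. This exhibits directly the availability-dependent $T_n$ that Definition~\ref{def:classA} forbids. The paper's witness instead shows that the effective input matrix, which in CAMOS is $\Top_n$ applied to the embedded forcing, depends on $a^n$. Your route would still work in a scheme whose forcing bypasses $\Top_n$. The price is needing $n\ge2$ or a nonzero $x^0$; the latter is harmless, since an $a$-independent homogeneous term creates no mixed differences.

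Two details need tightening. First, with modality-$1$ state zero and no forcing at visit $n$, the actual operator gives $y_1^n=[S_n]_{12}\,(\Delta t_n z_2^{n-1}+y_2^{n-1})$. So you need this combination to be nonzero, not merely a nonzero modality-$2$ state. One clean choice is $n=2$, $x^0=0$, $a^1=(0,1)$, with forcing on modality $2$ only. By Proposition~\ref{prop:isolation} this keeps modality $1$ at zero and makes the combination equal to $\Delta t_1(\Delta t_2+\Delta t_1)\,B_2\tilde u_2^1/(1+\Delta t_1^2\alpha_2)$, which is nonzero whenever $B_2\tilde u_2^1\neq0$. Second, drop the option $B_k=0$: with $x^0=0$ it kills the earlier forcing that produces $x^{n-1}$. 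Set only $\tilde u^n=0$ at visit $n$.
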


Theorem~\ref{thm:additivity} concerns a \emph{single linear layer}. It does not
say that deep networks built from such layers cannot represent availability
interactions, since a nonlinear readout applied to an additive statistic is not
itself additive, nor does it separate CAMOS from representation-level fusion
such as MedFuse or DrFuse, whose encoders are nonlinear in $a$. It says that in
class $\Acal$ the interaction must be reconstructed by the readout from an
additive summary, whereas CAMOS represents it inside the recurrence, in the part
of the model that is scannable and carries the stability guarantees above
(Remark~\ref{rem:scope}).

\subsection{Stable discretization on the irregular visit grid}
\label{sec:disc}

We discretize \eqref{eq:ode} on $\{t_n\}$ with the fully implicit (backward
Euler) scheme applied to the first-order form,
$z^n = z^{n-1} + \Delta t_n(-P_n y^n + f^n)$ and
$y^n = y^{n-1} + \Delta t_n z^n$. \label{eq:im} Collecting
$x^n := (z^n, y^n)^\top \in \Rb^{2m}$ and inverting through the Schur complement
$S_n := (I_m + \Delta t_n^2 P_n)^{-1}$, which exists for every
$\Delta t_n \in \Rb$ by Lemma~\ref{lem:psd}, gives
\begin{equation}
  x^n = \Top_n\, x^{n-1} + \Top_n F_n,
  \qquad
  \Top_n = \begin{pmatrix} S_n & -\Delta t_n P_n S_n \\
                        \Delta t_n S_n & S_n \end{pmatrix},
  \qquad F_n = (\Delta t_n f^n, 0)^\top .
  \label{eq:recurrence}
\end{equation}
Under \eqref{eq:channelblock}, $S_n$ is block diagonal with $d$ blocks
$S_n^{(r)} = (I_M + \Delta t_n^2 P_n^{(r)})^{-1}$, each costing $O(M^3)$ by
Cholesky. The step size enters as a per-visit scalar, so irregular sampling
requires no modification: the model integrates the same continuous system over
whatever gap actually occurred, and stability is uniform in that gap.

\begin{proposition}[Strict spectral contraction, uniform in $\Delta t_n$ and $a^n$]
\label{prop:spectral}
Let $\nu_1,\dots,\nu_m > 0$ be the eigenvalues of $P_n$. The $2m$ eigenvalues of
$\Top_n$ are
$\lambda_{i}^{\pm} = (1 \pm \mathrm{i}\Delta t_n\sqrt{\nu_i})/(1+\Delta t_n^2\nu_i)$,
so $|\lambda_i^{\pm}| = (1+\Delta t_n^2\nu_i)^{-1/2}
\le (1+\Delta t_n^2\mu)^{-1/2} < 1$ for every $\Delta t_n > 0$ and every
availability pattern.
\end{proposition}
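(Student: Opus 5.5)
The plan is to diagonalize $\Top_n$ through the eigenbasis of $P_n$, which Lemma~\ref{lem:psd} makes available. Since $P_n$ is symmetric with $\mu I_m \preceq P_n$, write $P_n = U\Lambda U^\top$ with $U$ orthogonal and $\Lambda = \diag(\nu_1,\dots,\nu_m)$, $\nu_i \ge \mu > 0$. Then $S_n = U(I+\Delta t_n^2\Lambda)^{-1}U^\top$, and $P_n S_n = U\Lambda(I+\Delta t_n^2\Lambda)^{-1}U^\top$, because $P_n$ and $S_n$ commute. Conjugating $\Top_n$ by $\blkdiag(U,U)$, which is orthogonal and preserves the spectrum, replaces every block of \eqref{eq:recurrence} by a diagonal matrix in the same basis. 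A further permutation pairs the $z$- and $y$-coordinates of each eigendirection. This reduces $\Top_n$ to a block diagonal matrix of $m$ independent $2\times 2$ blocks
\begin{equation*}
  \tau_i = \frac{1}{1+\Delta t_n^2\nu_i}
  \begin{pmatrix} 1 & -\Delta t_n\nu_i \\ \Delta t_n & 1 \end{pmatrix}.
\end{equation*}

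Next I compute the spectrum of each block directly. The characteristic polynomial of the inner matrix is $(1-\lambda)^2 + \Delta t_n^2\nu_i$, so its roots are $1 \pm \mathrm{i}\Delta t_n\sqrt{\nu_i}$, using $\nu_i>0$. Dividing by $1+\Delta t_n^2\nu_i$ gives exactly $\lambda_i^\pm$ as stated. The modulus is $\sqrt{1+\Delta t_n^2\nu_i}/(1+\Delta t_n^2\nu_i) = (1+\Delta t_n^2\nu_i)^{-1/2}$. Counting multiplicities, the $m$ blocks give all $2m$ eigenvalues.

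For the uniform bound, the map $\nu \mapsto (1+\Delta t_n^2\nu)^{-1/2}$ is nonincreasing. Lemma~\ref{lem:psd} gives $\nu_i \ge \mu = \epsilon\alpha_{\min}$ for every $a^n\in[0,1]^M$, hence for every binary pattern, and for every $\Delta t_n$. So $|\lambda_i^\pm| \le (1+\Delta t_n^2\mu)^{-1/2}$, and this is strictly less than one whenever $\Delta t_n>0$, because $\mu>0$.

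The only step needing care is the simultaneous diagonalization. It depends on $S_n$ and $P_n S_n$ being functions of the same symmetric matrix $P_n$, which holds because $S_n$ is its resolvent-type inverse and so commutes with it. The symmetry hypothesis (Assumption~\ref{as:sym}) is what allows an orthogonal $U$. Without symmetry one would still get the eigenvalues through a Schur or Jordan argument, but Lemma~\ref{lem:psd} would no longer guarantee that the $\nu_i$ are real and positive, and that is precisely the case excluded by Proposition~\ref{prop:isolation}.
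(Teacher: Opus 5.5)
Your proof is correct and follows essentially the same route as the paper: orthogonally diagonalize the symmetric $P_n$ (using Lemma~\ref{lem:psd}), conjugate by $\blkdiag(U,U)$, reduce to $m$ scalar $2\times 2$ blocks, read off $\lambda_i^{\pm}$, and get the uniform bound from $\nu_i \ge \mu$. The only cosmetic difference is that the paper conjugates $\Mop_n$ and then inverts each $2\times2$ block, whereas you conjugate $\Top_n$ directly, using that $S_n$ and $P_nS_n$ are functions of $P_n$.
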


The proof does not follow from the uncoupled case, because $P_n$ is not
diagonal; it orthogonally diagonalizes the symmetric $P_n$ and reduces to $m$
scalar $2\times2$ problems. Proposition~\ref{prop:spectral} bounds each
$\Top_n$ individually, which is not sufficient for boundedness of the
trajectory, since the $\Top_n$ vary with $n$ and are not normal, and finite
products of matrices with spectral radius below one can grow transiently. This
gap does not arise in the constant-coefficient setting, and we close it with an
energy argument.

\begin{theorem}[Switching bound governed by availability transitions]
\label{thm:switching}
Let $E_n := \tfrac12(\|z^n\|^2 + (y^n)^\top P_n y^n)$ with $f^n = 0$, and let
$V_N := \#\{\,n \le N : a^n \neq a^{n-1}\,\}$ count the visits at which the
availability pattern changes. Then
\begin{equation}
  E_N \le E_0 \,
  \exp\!\Bigl(\tfrac{1}{\mu}\textstyle\sum_{n=1}^{N}\|P_n - P_{n-1}\|_2\Bigr)
  \le E_0\, \exp\!\bigl(2\gamma V_N/\mu\bigr),
  \label{eq:switchbound}
\end{equation}
and $E_n$ is non-increasing on every maximal run of visits over which $a^n$ is
constant, with exact dissipation
$E_n - E_{n-1} = -\tfrac12\Delta t_n^2((y^n)^\top P^2 y^n + (z^n)^\top P z^n)$
when $P_n \equiv P$ (Proposition~\ref{prop:energy}).
\end{theorem}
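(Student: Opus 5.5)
The plan is a discrete energy identity for the backward Euler step with the stiffness frozen, followed by a perturbation argument that charges every change of $P_n$ to the energy through Lemma~\ref{lem:psd}. Throughout, $f^n=0$, and $P_0$ is the stiffness attached to the pattern $a^0$ with which $E_0$ is defined. If no $a^0$ is given, I take $a^0:=a^1$, so that the first step is not counted as a switch.

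\emph{Step 1 (one-step dissipation with frozen stiffness).} Fix $n$ and write $P:=P_n$. With $f^n=0$ the scheme is $z^n-z^{n-1}=-\Delta t_n P y^n$ and $y^n-y^{n-1}=\Delta t_n z^n$. I would use the polarization identity $2v^\top Q(v-w)=v^\top Qv-w^\top Qw+(v-w)^\top Q(v-w)$, valid for symmetric $Q$, twice.
\begin{itemize}
\item With $Q=I$ and $(v,w)=(z^n,z^{n-1})$ it gives $\|z^n\|^2-\|z^{n-1}\|^2=-2\Delta t_n (z^n)^\top P y^n-\Delta t_n^2\|Py^n\|^2$.
\item With $Q=P$ and $(v,w)=(y^n,y^{n-1})$ it gives $(y^n)^\top Py^n-(y^{n-1})^\top Py^{n-1}=2\Delta t_n (y^n)^\top Pz^n-\Delta t_n^2(z^n)^\top Pz^n$.
\end{itemize}
Adding the two, the cross terms cancel because $P$ is symmetric (Assumption~\ref{as:sym}, preserved by the symmetric gate). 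This yields
\[
\tfrac12\bigl(\|z^n\|^2+(y^n)^\top P_n y^n\bigr)=\tfrac12\bigl(\|z^{n-1}\|^2+(y^{n-1})^\top P_n y^{n-1}\bigr)-\tfrac12\Delta t_n^2\bigl((y^n)^\top P_n^2y^n+(z^n)^\top P_nz^n\bigr).
\]
The subtracted term is nonnegative by Lemma~\ref{lem:psd}. When $P_n\equiv P$ this is exactly the stated dissipation identity, and it gives monotonicity of $E_n$ on every run of constant availability, since there $P_n=P_{n-1}$.

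\emph{Step 2 (switching cost).} The right-hand side of Step 1 is measured with $P_n$, but $E_{n-1}$ uses $P_{n-1}$. I would bound the mismatch by
\[
(y^{n-1})^\top P_n y^{n-1}\le (y^{n-1})^\top P_{n-1}y^{n-1}+\|P_n-P_{n-1}\|_2\|y^{n-1}\|^2 .
\]
Lemma~\ref{lem:psd} gives $\|y^{n-1}\|^2\le \mu^{-1}(y^{n-1})^\top P_{n-1}y^{n-1}$, so the mismatch is at most $\tfrac{\|P_n-P_{n-1}\|_2}{\mu}$ times the potential part of $E_{n-1}$. Hence
\[
E_n\le\bigl(1+\|P_n-P_{n-1}\|_2/\mu\bigr)E_{n-1}.
\]
Iterating this bound and using $1+x\le e^x$ gives the first inequality in \eqref{eq:switchbound}.

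\emph{Step 3 (counting switches).} Since $P_n-P_{n-1}=D_{a^{n-1}}CD_{a^{n-1}}-D_{a^n}CD_{a^n}$, this difference vanishes unless $a^n\neq a^{n-1}$. When the pattern does change, I would bound each term by $\gamma$. The permutation $\Pi$ of \eqref{eq:channelblock} simultaneously block-diagonalizes $C$ and each $D_a$, with channel blocks $C^{(r)}$ and $\diag(a)$. This gives $\|D_aCD_a\|_2=\max_r\|\diag(a)C^{(r)}\diag(a)\|_2\le\max_r\|C^{(r)}\|_2=\gamma$, because $\|\diag(a)\|_2\le1$. Therefore $\|P_n-P_{n-1}\|_2\le 2\gamma\,\mathbf 1[a^n\neq a^{n-1}]$, and summing over $n\le N$ gives the second bound $\exp(2\gamma V_N/\mu)$.

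The step I expect to need the most care is Step 2. The energy must be compared across two different quadratic forms, and the factor $1/\mu$ is available only because Lemma~\ref{lem:psd} provides a lower bound on $P_{n-1}$ that is uniform over all patterns, not just a bound for the current one. The convention for $P_0$ also has to be fixed consistently so that $V_N$ counts the first step correctly.
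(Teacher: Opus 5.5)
Your proposal is correct and follows the paper's proof essentially step for step. The steps you share are the frozen-stiffness one-step energy identity, the mismatch bound $\tfrac12\|P_n-P_{n-1}\|_2\|y^{n-1}\|^2\le(\|P_n-P_{n-1}\|_2/\mu)\,E_{n-1}$ from Lemma~\ref{lem:psd}, telescoping with $1+x\le e^x$, and $\|D_aCD_a\|_2\le\gamma$ on switching steps; your polarization derivation is the same computation as the paper's expansion of $z^{n-1}$ and $y^{n-1}$, and your explicit convention for $a^0$ fixes a point the paper leaves implicit.
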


The bound is length-independent: amplification is charged to \emph{availability
transitions}, not to the number of visits or the size of the gaps, and it makes
$\epsilon$ quantitative through
$\gamma/\mu \le \frac{1-\epsilon}{\epsilon}\,\alpha_{\max}/\alpha_{\min}$. The
choice of scheme is not free. The implicit-explicit symplectic discretization
used by uncoupled oscillatory SSMs keeps unit-modulus eigenvalues only while
$\Delta t_n^2\nu \le 4$ for every eigenvalue $\nu$ of $P_n$, and otherwise has a
real eigenvalue pair with product one, so one of them exceeds unity and the
recurrence diverges. In ADNI, gaps range over
roughly $[0.5, 4]$ years, so enforcing that condition uniformly would require
$L_P \le 0.25$, hence $\alpha_{\max} \lesssim 0.14$, an order of magnitude below
our initialization range and enough to collapse the representable frequency
band.

\subsection{Deep blocks, within-visit attention and task heads}
\label{sec:blocks}

\paragraph{GCO block.} Following the block design of oscillatory SSMs
\citep{rusch2025linoss}, block $\ell$ maps
$\{\tilde u^{(\ell-1),n}\}_{n=1}^N \subset \Rb^{Mh}$ to
$\{\tilde u^{(\ell),n}\}_{n=1}^N$ by scanning \eqref{eq:recurrence}, forming
$w^{(\ell),n} = C^{(\ell)}_{\mathrm{ro}} y^{(\ell),n}
+ D^{(\ell)}_{\mathrm{ro}} \tilde u^{(\ell-1),n}$ with
$C_{\mathrm{ro}} \in \Rb^{Mh\times m}$, and applying a residual GLU
\citep{dauphin2017glu} with dropout $0.1$
(Appendix~\ref{app:blocks}). Each block owns its own
$\hat\alpha, \hat c, B, C_{\mathrm{ro}}, D_{\mathrm{ro}}$, while the pattern
$a^{1:N}$ and the gaps $\{\Delta t_n\}$ are shared. We write
$h_k^n := \iota_k^\top y^{(L),n} \in \Rb^{d}$.

\paragraph{Within-visit cross-modal attention.} GCO layers model how modalities
drive one another \emph{over time}. Contemporaneous associations at a single
visit are captured by one multi-head attention block over the $M$ modality
tokens at each visit, applied in parallel across $n$
\citep{vaswani2017attention}, with an additive availability bias
$-\beta(1-a_j^n)$ on the logit of key $j$, where $\beta = \spls(\hat\beta) \ge 0$
is learnable. Taking $\beta \to \infty$ recovers the hard availability mask of
\citet{yao2024drfuse}; a finite $\beta$ down-weights but does not erase an
unobserved modality, consistent with its latent state continuing to evolve under
Proposition~\ref{prop:isolation}. We use $H = 4$ heads with a residual
connection and layer normalization, and write
$\bar h_k^n := h_k^n + \mathrm{Attn}(h^n)_k$ and
$\bar h^n := \Vert_{k} \bar h_k^n$.

\paragraph{Task heads and objective.} All heads are affine in $\bar h$: same-visit staging
is $\hat p^n = \mathrm{softmax}(W_{\mathrm{st}} \bar h^n + b_{\mathrm{st}})$;
landmark prediction is $\hat q = \sigma(w_{\mathrm{cv}}^\top \sum_{n} \omega_n \bar h^n
+ b_{\mathrm{cv}})$, pooled over visits by learned weights $\omega_n$ so that
subjects with different numbers of visits are handled without padding artefacts;
and longitudinal forecasting is
$\hat u_k^{n+h} = W_{\mathrm{fc},k}^{(h)} \bar h_k^n + b_{\mathrm{fc},k}^{(h)}$
for $h \le H_{\max} = 3$. The objective is
\begin{equation}
  \mathcal{L}
  = \lambda_{\mathrm{st}} \mathcal{L}_{\mathrm{st}}
  + \lambda_{\mathrm{cv}} \mathcal{L}_{\mathrm{cv}}
  + \lambda_{\mathrm{fc}} \mathcal{L}_{\mathrm{fc}}
  + \lambda_{\mathrm{reg}} \sum_{\ell,k,j,r} \bigl|\hat c^{(\ell)}_{kj,r}\bigr|,
  \label{eq:loss}
\end{equation}
with $\mathcal{L}_{\mathrm{st}}$ class-weighted cross-entropy over visits with a
recorded stage label, $\mathcal{L}_{\mathrm{cv}}$ class-weighted binary
cross-entropy, and $\mathcal{L}_{\mathrm{fc}}$ a horizon-discounted $\ell_1$
error masked by $a_k^{n+h}$ and normalized by the number of observed targets
(Appendix~\ref{app:blocks}). The mask is essential: with roughly half of all
modality-visit pairs unobserved, scoring against imputed targets would train the
model on its own imputation error. The forecasting term also acts as a physical
regularizer, forcing the latent oscillators to track measurable quantities, and
the $\ell_1$ penalty on the raw couplings encourages a sparse modality graph
without interfering with the budget \eqref{eq:param}. We use
$(\lambda_{\mathrm{st}}, \lambda_{\mathrm{cv}}, \lambda_{\mathrm{fc}},
\lambda_{\mathrm{reg}}) = (1, 0.5, 1, 10^{-3})$.

\begin{table}[t]
\caption{Same-visit CN/MCI/AD staging on ADNI and OASIS-3. $^{\dagger}$ denotes that CAMOS is significantly better than the best baseline ($p < 0.01$, two-sided Welch's $t$-test).}
\label{tab:staging}
\centering
\small
\setlength{\tabcolsep}{3pt}
\begin{tabular}{lccccc}
\toprule
Models & Accuracy \up & Macro F1 \up & Macro Prec.\ \up & Macro Recall \up & Macro Spec.\ \up \\
\midrule
\multicolumn{6}{l}{\textbf{ADNI}} \\
\cmidrule(r){1-1}
LinOSS-IM   & $0.6593_{\pm 0.0041}$ & $0.6815_{\pm 0.0039}$ & $0.6804_{\pm 0.0045}$ & $0.6864_{\pm 0.0037}$ & $0.8147_{\pm 0.0021}$ \\
LinOSS-IMEX & $0.7206_{\pm 0.0058}$ & $0.7234_{\pm 0.0061}$ & $0.7394_{\pm 0.0055}$ & $0.7424_{\pm 0.0059}$ & $0.8513_{\pm 0.0030}$ \\
MedFuse     & $0.6648_{\pm 0.0072}$ & $0.6697_{\pm 0.0076}$ & $0.6564_{\pm 0.0069}$ & $0.7139_{\pm 0.0070}$ & $0.8284_{\pm 0.0034}$ \\
DrFuse      & $0.7322_{\pm 0.0049}$ & $0.7354_{\pm 0.0046}$ & $0.7488_{\pm 0.0044}$ & $0.7683_{\pm 0.0042}$ & $0.8599_{\pm 0.0024}$ \\
\textbf{CAMOS} & $\best{0.7824}^{\dagger}_{\pm \best{0.0016}}$ & $\best{0.7790}^{\dagger}_{\pm \best{0.0018}}$ & $\best{0.7970}^{\dagger}_{\pm \best{0.0022}}$ & $\best{0.7786}^{\dagger}_{\pm \best{0.0025}}$ & $\best{0.8836}^{\dagger}_{\pm \best{0.0013}}$ \\
\midrule
\multicolumn{6}{l}{\textbf{OASIS-3}} \\
\cmidrule(r){1-1}
LinOSS-IM   & $0.7271_{\pm 0.0017}$ & $0.3253_{\pm 0.0053}$ & $0.5334_{\pm 0.0082}$ & $0.3541_{\pm 0.0025}$ & $0.6773_{\pm 0.0014}$ \\
LinOSS-IMEX & $0.7231_{\pm 0.0020}$ & $0.3027_{\pm 0.0064}$ & $0.4222_{\pm 0.0094}$ & $0.3414_{\pm 0.0028}$ & $0.6756_{\pm 0.0016}$ \\
MedFuse     & $0.6682_{\pm 0.0032}$ & $0.3176_{\pm 0.0059}$ & $0.2871_{\pm 0.0091}$ & $0.3558_{\pm 0.0026}$ & $0.6844_{\pm 0.0019}$ \\
DrFuse      & $0.7263_{\pm 0.0023}$ & $0.2805_{\pm 0.0081}$ & $0.2421_{\pm 0.0086}$ & $0.3333_{\pm 0.0000}$ & $0.6667_{\pm 0.0000}$ \\
\textbf{CAMOS} & $\best{0.7600}^{\dagger}_{\pm \best{0.0014}}$ & $\best{0.5557}^{\dagger}_{\pm \best{0.0028}}$ & $\best{0.6280}^{\dagger}_{\pm \best{0.0031}}$ & $\best{0.5261}^{\dagger}_{\pm \best{0.0030}}$ & $\best{0.7682}^{\dagger}_{\pm \best{0.0015}}$ \\
\bottomrule
\end{tabular}
\end{table}

\section{Experiments}
\label{sec:experiments}

\paragraph{Datasets.}
\label{sec:datasets}
We evaluate on ADNI \citep{petersen2010adni} with $M = 4$ modalities grouped by acquisition route: \emph{imaging} ($p_1 = 8$; intracranial-volume-normalized volumes of hippocampus, entorhinal cortex, fusiform gyrus, middle temporal gyrus, whole brain and ventricles, plus FDG and AV45 PET SUVRs), \emph{CSF} ($p_2 = 3$; A$\beta_{1\text{-}42}$, total tau, phosphorylated tau), \emph{genetics} ($p_3 = 3$; \textit{APOE}~$\varepsilon4$ allele count, genotype indicators, polygenic risk summary) and \emph{cognition} ($p_4 = 5$; MMSE, ADAS-Cog-13, CDR-SB, RAVLT immediate recall, FAQ), with baseline age, sex and years of education as static covariates ($p_s = 3$). We retain subjects with at least three diagnosed visits and one fully observed modality per visit, and mark a modality available only when all of its features are recorded. We also evaluate on OASIS-3 \citep{lamontagne2019oasis3}, which carries the three modalities (\emph{imaging}, \emph{genetics}, \emph{cognition}) and three targets, refitting only the normalization statistics.

\paragraph{Baselines.}
\label{sec:baselines}
LinOSS-IM and LinOSS-IMEX \citep{rusch2025linoss} are uncoupled oscillatory SSMs, the direct parents of CAMOS, and isolate the effect of coupling. Having no native mechanism for missing modalities, they receive concatenated features with unobserved entries zero-filled, the availability vector $a^n$ as $M$ extra channels, and $\Delta t_n$ both as an input channel and as the per-step discretization interval. MedFuse \citep{hayat2022medfuse} and DrFuse \citep{yao2024drfuse} handle missing modalities at the representation level; we replace their modality encoders by per-modality gated recurrent encoders over the visit sequence, so they receive the same temporal information as the SSMs. Appendix \ref{app:extended} contains results for all the different families of baselines.
\section{Results}
\label{sec:results}

\paragraph{Same-visit Staging.}
\label{sec:res-staging}
On ADNI, CAMOS outperforms all the baselines (Table~\ref{tab:staging}), improving over DrFuse, by $0.0502$ in accuracy ($+6.9\%$ relative) and $0.0436$ in macro F1 ($+5.9\%$). LinOSS-IMEX outperforms LinOSS-IM, since the implicit-explicit scheme is stable only while $\Delta t_n^2 \nu \le 4$. This suggests that the tuned baseline learned frequencies low enough to stay in the stable regime. On OASIS-3, evaluated zero-shot with only the normalization statistics refitted, accuracy is weakly informative because one class dominates. DrFuse attains macro recall of exactly $1/3$ and macro specificity of exactly $2/3$, the signature of predicting a single class, so its accuracy of $0.7263$ equals the majority-class rate. The other baselines are close to this solution, with macro F1 between $0.2805$ and $0.3253$. CAMOS alone retains classification among the three stages, raising macro F1 from $0.3253$ to $0.5557$ and macro recall from $0.3558$ to $0.5261$, while exceeding the majority-class accuracy. Since these relative gains are measured against near-collapsed baselines, the absolute macro F1 of CAMOS is the informative quantity for transfer.

\paragraph{Landmark Prediction.}
\label{sec:res-conversion}
Our CAMOS improves AUROC from $0.9386$ to $0.9611$ ($+2.4\%$) and AUPRC from $0.7689$ to $0.8687$ ($+13.0\%$) over DrFuse on ADNI dataset (Table~\ref{tab:conversion}). The asymmetry between the two metrics is the main finding. AUROC is compressed near its ceiling, spanning only $4.8$ points across the five models ($0.9134$ to $0.9611$), whereas AUPRC, which is sensitive to the minority positive class, spans $17.9$ points ($0.6900$ to $0.8687$). This matters practically because landmark prediction is used for trial enrichment, a precision-limited application in which AUPRC is the relevant summary. On OASIS-3 the pattern is sharper. AUROC barely separates CAMOS from the oscillatory baselines ($0.9600$ vs. $0.9560$ for LinOSS-IMEX, $+0.4\%$), while AUPRC rises from $0.4185$ to $0.7894$ ($+88.6\%$), with all baselines between $0.2594$ and $0.4185$.

\begin{table}[t]
\caption{3-year MCI-AD landmark prediction on ADNI and OASIS-3. CAMOS is significantly better than the best baseline at
$^{\dagger}p < 0.01$ or $^{*}p < 0.05$ (two-sided Welch's $t$-test).}
\label{tab:conversion}
\centering
\small
\setlength{\tabcolsep}{5pt}
\begin{tabular}{lcccc}
\toprule
& \multicolumn{2}{c}{\textbf{ADNI}} & \multicolumn{2}{c}{\textbf{OASIS-3}} \\
\cmidrule(lr){2-3}\cmidrule(lr){4-5}
Models & AUROC \up & AUPRC \up & AUROC \up & AUPRC \up \\
\midrule
LinOSS-IM
& $0.9223_{\pm 0.0031}$
& $0.7102_{\pm 0.0087}$
& $0.9533_{\pm 0.0024}$
& $0.3588_{\pm 0.0142}$ \\

LinOSS-IMEX
& $0.9307_{\pm 0.0027}$
& $0.7389_{\pm 0.0074}$
& $0.9560_{\pm 0.0029}$
& $0.4185_{\pm 0.0163}$ \\

MedFuse
& $0.9134_{\pm 0.0045}$
& $0.6900_{\pm 0.0102}$
& $0.8467_{\pm 0.0058}$
& $0.2594_{\pm 0.0121}$ \\

DrFuse
& $0.9386_{\pm 0.0022}$
& $0.7689_{\pm 0.0069}$
& $0.8787_{\pm 0.0047}$
& $0.3183_{\pm 0.0135}$ \\

\textbf{CAMOS}
& $\best{0.9611}^{\dagger}_{\pm \best{0.0014}}$
& $\best{0.8687}^{\dagger}_{\pm \best{0.0041}}$
& $\best{0.9600}^{*}_{\pm \best{0.0018}}$
& $\best{0.7894}^{\dagger}_{\pm \best{0.0093}}$ \\
\bottomrule
\end{tabular}
\end{table}

\paragraph{Longitudinal forecasting.}
\label{sec:res-forecast}
Figure~\ref{fig:forecast} reports next-visit ($h=1$) error per modality. On ADNI, CAMOS attains the lowest MAE and RMSE on all four modalities. The largest reductions are on the sparsely observed genetics (MAE $0.7535 \to 0.4088$, $-45.8\%$; RMSE $-38.3\%$) and CSF (MAE $0.7676 \to 0.5906$, $-23.1\%$; RMSE $-18.3\%$), where the four baselines are tightly clustered (MAE $0.7535$ to $0.8270$ and $0.7676$ to $0.7895$), so uncoupled oscillatory dynamics and representation-level fusion perform alike there while CAMOS separates clearly. Gains are smaller on cognition (MAE $-14.1\%$, RMSE $-7.0\%$) and imaging (MAE $-7.2\%$, RMSE $-11.8\%$), the only modality on which the baselines split, with LinOSS-IM and MedFuse (MAE $0.4468$, $0.4538$) ahead of LinOSS-IMEX and DrFuse ($0.5567$, $0.5562$). The same ordering holds zero-shot on OASIS-3: CAMOS reduces MAE most on genetics ($0.7875 \to 0.2847$, $-63.9\%$), then on clinical measures ($0.5775 \to 0.3756$, $-35.0\%$), and least on imaging ($0.7664 \to 0.7333$, $-4.3\%$).

\begin{figure}[t]
\centering
\begin{subfigure}[t]{0.5\textwidth}
    \centering
    \includegraphics[width=\linewidth]{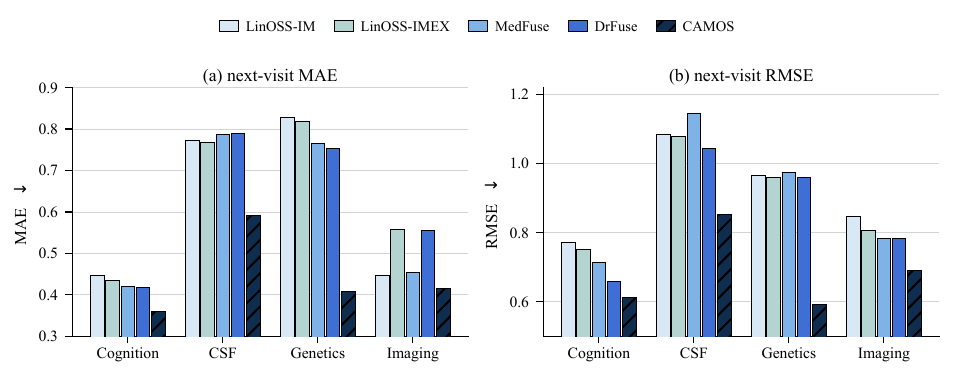}
\end{subfigure}
\hfill
\begin{subfigure}[t]{0.49\textwidth}
    \centering
    \includegraphics[width=\linewidth]{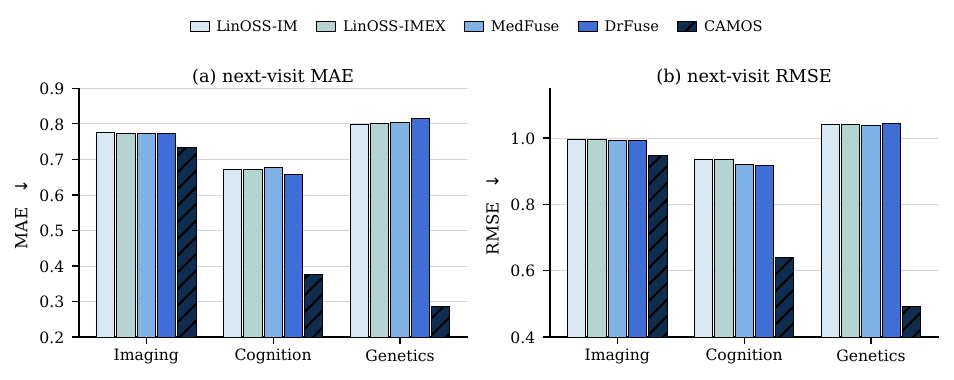}
\end{subfigure}
\caption{Next-visit ($h=1$) forecasting on ADNI (left) and OASIS (right) per modality.}
\label{fig:forecast}
\end{figure}

\paragraph{Relative improvements over baselines.}
All relative improvements in Figure~\ref{fig:combined} are positive, ranging from $+1.3\%$ (macro recall) to $+45.8\%$ (genetics MAE) on ADNI and from $+0.4\%$ (AUROC) to $+88.6\%$ (AUPRC) on OASIS-3, where margins are largest on metrics for which baselines collapse to the majority class (F1 $+70.8\%$, recall $+47.9\%$) and smallest on the imbalance-saturated accuracy and AUROC. The ADNI ablations trace these gains to three design choices. The implicit discretization, stable for every gap, matters most: switching to the implicit-explicit scheme, unstable once $\Delta t_n^2 \nu > 4$ as the $0.5$ to $4$ year gaps allow, lowers macro F1 from $0.7790$ to $0.653$ and AUPRC from $0.8687$ to $0.8063$, and raises forecasting MAE by $39.8\%$ (sparse) and $25.5\%$ (dense). Coupling lets each modality's state absorb cross-modal information at co-observed visits before an unobserved interval; removing it raises MAE by $7.7\%$ (sparse) and $8.1\%$ (dense) and lowers AUPRC to $0.8193$, the two regions of largest margin. The availability gate acts selectively: removing it lowers AUROC to $0.9531$ and AUPRC to $0.8296$ but leaves staging (F1 $0.7804$) and forecasting (MAE $0.5016$ sparse, $0.4014$ dense) essentially unchanged. It also explains the transfer result, since the CSF oscillator is not observed in OASIS-3 and stays isolated instead of entering as a zero-filled input block.

\begin{figure}[t]
\centering
\begin{subfigure}[t]{0.5\textwidth}
    \centering
    \includegraphics[width=\linewidth]{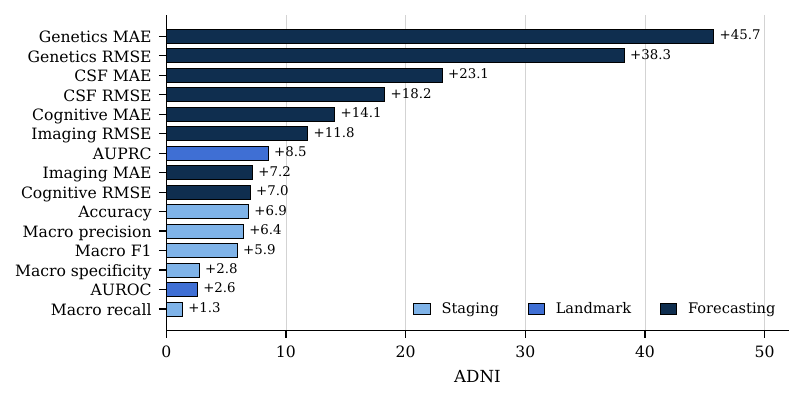}
\end{subfigure}
\hfill
\begin{subfigure}[t]{0.49\textwidth}
    \centering
    \includegraphics[width=\linewidth]{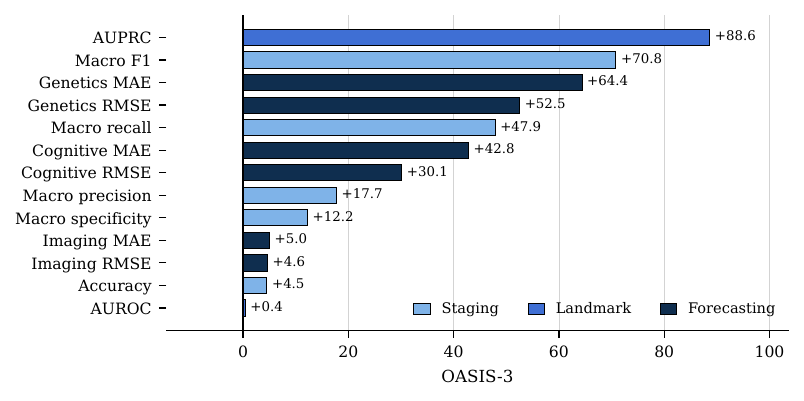}
\end{subfigure}
\caption{Relative improvement of CAMOS over the best baseline on each of
    the reported metrics.}
\label{fig:combined}
\end{figure}

\paragraph{Ablation studies.}
\label{sec:ablation}
Table~\ref{tab:ablation} evaluates different variants of CAMOS on ADNI. Removing cross-modal coupling ($C \equiv 0$) degrades every task, lowering staging macro F1 by $2.3$ points and landmark prediction AUPRC by $4.9$ points while increasing forecasting MAE by $7.7\%$ (sparse) and $8.1\%$ (dense), which shows that information flow between modalities is beneficial. Coupling alone is not sufficient, without gating ($g_{kj}^{n} \equiv 1$), staging improves slightly over the uncoupled variant, but landmark prediction and forecasting become worse than having no coupling at all (AUPRC $0.7916$, MAE $+11.2\%$ sparse and $+11.4\%$ dense), plausibly because unobserved or unreliable states propagate indiscriminately across modalities. An asymmetric gate ($g_{kj}^{n}=a_j^n$) recovers much of this loss and is the strongest ablation on landmark prediction and forecasting, yet it yields the lowest staging F1 among the non-IMEX variants ($0.7532$), so only the full gating performs well on all tasks at once. Replacing the implicit discretization with an IMEX scheme causes the largest degradation, increasing forecasting MAE by $39.8\%$ on sparse and $25.5\%$ on dense modalities; the stronger effect on sparse modalities is consistent with their irregular observation gaps.

\begin{table}[t]
\caption{Ablation study evaluating the contribution of coupling, gating, and discretisation choices.}
\label{tab:ablation}
\centering
\small
\setlength{\tabcolsep}{4pt}
\resizebox{\textwidth}{!}{%
\begin{tabular}{@{}p{0.30\textwidth}ccccc@{}}
\toprule
& \multicolumn{1}{c}{Staging} & \multicolumn{2}{c}{Landmark Prediction}
& \multicolumn{2}{c}{Longitudinal Forecasting} \\
\cmidrule(lr){2-2}\cmidrule(lr){3-4}\cmidrule(lr){5-6}
Variants & Macro F1 \up & AUROC \up & AUPRC \up
& Sparse MAE \dn & Dense MAE \dn \\
\midrule
\textit{w/o} coupling ($C \equiv 0$)
& $0.7561_{\pm 0.0034}$ & $0.9530_{\pm 0.0021}$ & $0.8193_{\pm 0.0068}$ & $0.5491_{\pm 0.0057}$ & $0.4354_{\pm 0.0032}$ \\
\textit{w} coupling ungated ($g_{kj}^{n} \equiv 1$)
& $0.7694_{\pm 0.0029}$ & $0.9478_{\pm 0.0026}$ & $0.7916_{\pm 0.0081}$ & $0.5668_{\pm 0.0063}$ & $0.4487_{\pm 0.0038}$ \\
\textit{w} asymmetric gate ($g_{kj}^{n}=a_j^n$)
& $0.7532_{\pm 0.0041}$ & $0.9562_{\pm 0.0019}$ & $0.8384_{\pm 0.0059}$ & $0.5315_{\pm 0.0049}$ & $0.4176_{\pm 0.0027}$ \\
\textit{w} IMEX discretization
& $0.6530_{\pm 0.0062}$ & $0.9546_{\pm 0.0024}$ & $0.8063_{\pm 0.0074}$ & $0.7127_{\pm 0.0089}$ & $0.5057_{\pm 0.0046}$ \\
\midrule
\textbf{CAMOS}
& $\best{0.7790}_{\pm \best{0.0018}}$
& $\best{0.9611}_{\pm \best{0.0014}}$
& $\best{0.8687}_{\pm \best{0.0041}}$
& $\best{0.5097}_{\pm \best{0.0036}}$
& $\best{0.4029}_{\pm \best{0.0022}}$ \\
\bottomrule
\end{tabular}%
}
\end{table}

\section{Conclusion}
In this paper, we argue that missingness in multimodal clinical sequences belongs in the transition operator rather than only in the forcing term. We propose CAMOS, a coupled oscillatory state-space model whose transition operator is gated by availability. Making coupling admissible required three new results: a per-channel Gershgorin budget that keeps the effective stiffness positive definite over all $2^M$ availability patterns and all gaps, an energy bound that charges amplification to availability transitions rather than sequence length, and a channel-block matrix family that keeps the associative scan exact at $O(M^2)$ overhead. On ADNI, CAMOS outperforms uncoupled oscillatory SSMs and clinical fusion models on same-visit staging, landmark prediction and longitudinal forecasting. On zero-shot OASIS-3, it is the only model that avoids collapse to the majority class. The ablations attribute these gains to the implicit discretization and the coupling.


\subsection*{AI use statement}

Generative AI tools were used for language editing of the manuscript. They were not used to generate experimental results, numerical values,
or the statements and proofs of the theoretical results, all of which were
derived by the authors. The authors take
full responsibility for the final content of this work, including all text,
claims and artifacts.

\subsection*{Ethics statement}

CAMOS predicts conversion to Alzheimer's disease and assigns clinical stages, outputs that could influence trial enrolment or a person's understanding of their own prognosis. We regard it as a research instrument rather than a decision-support tool. The results here are obtained on volunteer research cohorts that are not representative of the general population. The predicted probabilities are not calibrated, and the availability gate makes the model sensitive to which tests were ordered, a variable that reflects access to care as much as clinical need. Subgroup evaluation, calibration, and an audit of performance conditional on the availability pattern are therefore prerequisites for any clinical claim. All data are used under the existing data use agreements of ADNI and OASIS-3, and no new human-subject data were collected. The authors declare no competing interests.

\subsection*{Reproducibility statement}

The model is fully specified: 
Table~\ref{tab:hparams} for every hyperparameter and optimizer setting,
Appendix~\ref{app:algo} the algorithm in pseudocode, Appendix~\ref{app:impl} the
complete configuration table, and Appendix~\ref{app:data} the exact feature
definitions, inclusion criteria, preprocessing and label construction. The
assumptions underlying the theory are isolated as Assumptions~\ref{as:sym} and
\ref{as:budget}, and every proof appears in Appendix~\ref{app:proofs}.
An anonymized code repository accompanies this
submission, containing the model implementation, the cohort extraction and
preprocessing scripts, the training and evaluation pipeline for CAMOS and all
baselines. All datasets are available to qualified researchers from their original providers.

\bibliography{iclr2027_conference}
\bibliographystyle{iclr2027_conference}

\appendix
\newpage
\section{Appendix: Notation}
\label{app:notation}

\begin{table}[h]
\centering
\footnotesize
\setlength{\tabcolsep}{3pt}
\begin{tabular}{@{}ll@{\hspace{5mm}}ll@{}}
\toprule
Symbol & Meaning & Symbol & Meaning \\
\midrule
$M$ & number of modalities ($M=4$)          & $d$ & oscillators per modality \\
$N$ & visits for a subject                  & $m = Md$ & total latent width \\
$h$ & shared projection width               & $L$ & number of GCO blocks \\
$p_k$ & raw dimension of modality $k$       & $p_s$ & static covariate dimension \\
$t_n$ & time of visit $n$ (years)           & $\Delta t_n$ & $t_n - t_{n-1} > 0$ \\
$u_k^n \in \Rb^{p_k}$ & raw features        & $\tilde u_k^n \in \Rb^{h}$ & projected \\
$a_k^n \in \{0,1\}$ & availability indicator & $a^n \in \{0,1\}^M$ & availability pattern \\
$D_a$ & $\blkdiag(a_1 I_d,\dots,a_M I_d)$   & $g_{kj}^n = a_k^n a_j^n$ & coupling gate \\
$y(t), z(t) \in \Rb^m$ & position, velocity & $x = (z,y)^\top$ & full state, $\Rb^{2m}$ \\
$A_k = \diag(\alpha_{k,\cdot})$ & natural freqs. & $C_{kj} = \diag(c_{kj,\cdot})$ & coupling \\
$P = A - C$ & effective stiffness           & $P_n = A - D_{a^n}CD_{a^n}$ & gated stiffness \\
$P^{(r)} \in \Rb^{M\times M}$ & channel-$r$ block & $\Pi$ & channel permutation \\
$S_n = (I{+}\Delta t_n^2 P_n)^{-1}$ & inverse Schur cmpl. & $\Top_n$ & transition operator \\
$\epsilon$ & Gershgorin budget ($0.2$)      & $\delta$ & numerical floor ($10^{-6}$) \\
$\mu = \epsilon\alpha_{\min}$ & stiffness lower bound & $L_P = (2{-}\epsilon)\alpha_{\max}$ & upper bound \\
$\gamma = \max_r\|C^{(r)}\|_2$ & coupling strength & $H_{\max}$ & forecast horizons \\
$E_n$ & discrete energy                     & $V_N$ & availability transitions \\
$\Fcal$ & channel-block matrix family       & $\Acal$ & input-gated linear layers \\
\bottomrule
\end{tabular}
\end{table}

\section{Model Component Details}
\label{app:blocks}

\paragraph{Static encoder.} The initial condition of Section~\ref{sec:encoders}
is
\begin{equation}
  x^0 = (z^0, y^0)^\top
  = W_2^{s}\,\mathrm{GELU}\!\left(W_1^{s} s + b_1^{s}\right) + b_2^{s}
  \in \Rb^{2m},
  \qquad W_1^{s} \in \Rb^{2m \times p_s},\ W_2^{s} \in \Rb^{2m \times 2m}.
  \label{eq:static}
\end{equation}

\paragraph{GCO block.} Block $\ell$ maps
$\{\tilde u^{(\ell-1),n}\}_{n=1}^N$ to $\{\tilde u^{(\ell),n}\}_{n=1}^N$ by
\begin{align}
  \{x^{(\ell),n}\}_n
    &= \textsc{Scan}_\bullet\bigl(\{\Top_n^{(\ell)}\}_n,
        \{\Top_n^{(\ell)}F_n^{(\ell)}\}_n\bigr),
    \label{eq:block1}\\
  w^{(\ell),n} &= C^{(\ell)}_{\mathrm{ro}}\, y^{(\ell),n}
                 + D^{(\ell)}_{\mathrm{ro}}\, \tilde u^{(\ell-1),n},
    \qquad C_{\mathrm{ro}} \in \Rb^{Mh\times m},
    \label{eq:block2}\\
  \tilde u^{(\ell),n} &= \tilde u^{(\ell-1),n}
      + \mathrm{Drop}\Bigl(\mathrm{GLU}\bigl(\mathrm{GELU}(w^{(\ell),n})\bigr)\Bigr),
    \label{eq:block3}
\end{align}
with $\mathrm{GLU}(v) = \sigma(W_1 v)\odot W_2 v$ \citep{dauphin2017glu} and
dropout $0.1$.

\paragraph{Within-visit attention.} With $H$ heads and $d_H = d/H$,
\begin{equation}
  \mathrm{Attn}(h^n)_k
  = W_O \Bigl\Vert_{\eta=1}^{H}
    \sum_{j=1}^{M}
    \frac{\exp\bigl(
       \langle W_Q^{(\eta)\top} h_k^n, W_K^{(\eta)\top} h_j^n\rangle / \sqrt{d_H}
       - \beta(1-a_j^n)\bigr)}
      {\sum_{l=1}^{M}\exp\bigl(
       \langle W_Q^{(\eta)\top} h_k^n, W_K^{(\eta)\top} h_l^n\rangle / \sqrt{d_H}
       - \beta(1-a_l^n)\bigr)}
    \; W_V^{(\eta)\top} h_j^n ,
  \label{eq:attn}
\end{equation}
with $\beta = \spls(\hat\beta) \ge 0$ learnable and initialized at $\beta_0 = 2$.

\paragraph{Task heads and forecasting loss.} The three heads are
\begin{align}
  \hat p^n &= \mathrm{softmax}\bigl(W_{\mathrm{st}} \bar h^n + b_{\mathrm{st}}\bigr),
  &&W_{\mathrm{st}} \in \Rb^{3 \times m},
  \label{eq:head-st}\\
  \hat q &= \sigma\Bigl(w_{\mathrm{cv}}^\top
      \textstyle\sum_{n} \omega_n \bar h^n + b_{\mathrm{cv}}\Bigr),
  &&\omega_n = \tfrac{\exp(v^\top \bar h^n)}{\sum_{n'}\exp(v^\top \bar h^{n'})},
  \label{eq:head-cv}\\
  \hat u_k^{n+h} &= W_{\mathrm{fc},k}^{(h)}\, \bar h_k^n + b_{\mathrm{fc},k}^{(h)},
  &&h \in \{1,\dots,H_{\max}\},
  \label{eq:head-fc}
\end{align}
and the masked forecasting term of \eqref{eq:loss} is
\begin{equation}
  \mathcal{L}_{\mathrm{fc}}
    = \frac{\sum_{h=1}^{H_{\max}} \gamma_h
             \sum_{n,k} a_k^{n+h}\,
             \bigl\| \hat u_k^{n+h} - u_k^{n+h} \bigr\|_1}
            {\sum_{h=1}^{H_{\max}} \gamma_h \sum_{n,k} a_k^{n+h}\, p_k},
  \qquad \gamma_h = 2^{-(h-1)} .
  \label{eq:loss-fc}
\end{equation}

\begin{figure}[h]
\centering
\resizebox{0.9\textwidth}{!}{\input{figures/fig_separation.tex}}
\caption{Input gating versus transition gating. In (a) the indicators multiply
the forcing only, so the state unrolls into a sum of per-modality, per-visit
contributions and every mixed availability difference vanishes. In (b) the gate
enters the homogeneous part, so the state is a product of availability-dependent
operators and contains genuine interactions among availability indicators.}
\label{fig:separation}
\end{figure}

\section{Proofs}
\label{app:proofs}

\subsection{Derivation of the implicit recurrence \eqref{eq:recurrence}}

Scheme \eqref{eq:im} reads $\Mop_n x^n = x^{n-1} + F_n$ with
$\Mop_n = \bigl(\begin{smallmatrix} I & \Delta t_n P_n \\ -\Delta t_n I & I
\end{smallmatrix}\bigr)$. Direct multiplication gives
\begin{equation}
  \begin{pmatrix} I & \Delta t_n P_n \\ -\Delta t_n I & I \end{pmatrix}
  \begin{pmatrix} S_n & -\Delta t_n P_n S_n \\ \Delta t_n S_n & S_n\end{pmatrix}
  =
  \begin{pmatrix}
    (I + \Delta t_n^2 P_n) S_n & 0 \\
    0 & (I + \Delta t_n^2 P_n) S_n
  \end{pmatrix}
  = I_{2m},
\end{equation}
since $(I + \Delta t_n^2 P_n)S_n = I_m$ by definition of $S_n$. Hence
$\Top_n = \Mop_n^{-1}$ has the stated form. \hfill$\square$

\subsection{Proof of Lemma~\ref{lem:psd}}

Fix $a \in [0,1]^M$ and a channel $r$. By Assumption~\ref{as:sym} the matrix
$P^{(r)}(a) \in \Rb^{M\times M}$ with $[P^{(r)}(a)]_{kk} = \alpha_{k,r}$ and
$[P^{(r)}(a)]_{kj} = -a_k a_j c_{kj,r}$ for $k \neq j$ is symmetric, so its
eigenvalues are real. By the Gershgorin circle theorem
\citep[Thm.~6.1.1]{horn2012matrix} every eigenvalue $\nu$ satisfies
$|\nu - \alpha_{k,r}| \le R_k$ for some $k$, with
\begin{equation}
  R_k = \sum_{j\neq k} \bigl|a_k a_j c_{kj,r}\bigr|
      \le \sum_{j\neq k} \bigl|c_{kj,r}\bigr|
      \le (1-\epsilon)\,\alpha_{k,r},
\end{equation}
the first inequality using $a_k, a_j \in [0,1]$ and the second
Assumption~\ref{as:budget}. Hence
$\nu \in [\epsilon\alpha_{k,r}, (2-\epsilon)\alpha_{k,r}]$ for some $k$, so
$\nu \ge \mu$ and $\nu \le L_P$. Since
$\Pi P(a)\Pi^\top = \blkdiag(P^{(1)}(a),\dots,P^{(d)}(a))$ and permutation
conjugation preserves the spectrum, the same bounds hold for $P(a)$. The
eigenvalues of $I_m + \Delta t^2 P(a)$ are
$1 + \Delta t^2\nu \ge 1 + \Delta t^2\mu > 0$, so it is symmetric positive
definite and admits a Cholesky factorization. \hfill$\square$

\subsection{Proof of Proposition~\ref{prop:isolation}}

If $a_j^n = 0$ then for every $k \neq j$ the $(k,j)$ and $(j,k)$ blocks of
$D_{a^n}CD_{a^n}$ are $a_k^n a_j^n C_{kj} = 0$ and $a_j^n a_k^n C_{jk} = 0$.
Since $A$ is block diagonal, $P_n$ is block diagonal with respect to the
splitting $\{j\} \cup ([M]\setminus\{j\})$. The inverse of a block diagonal
matrix is block diagonal, so $S_n$ is block diagonal, and each of the four
blocks of $\Top_n$ in \eqref{eq:recurrence} is a product of $S_n$, $P_n$ and
scalars, hence block diagonal. Restricting to the $j$ block gives
$P_n|_{jj} = A_j$ and $S_n|_{jj} = (I_d + \Delta t_n^2 A_j)^{-1}$. Under the
asymmetric gate $g_{kj} = a_j$, the $(k,j)$ blocks vanish for all $k\neq j$ but
the $(j,k)$ blocks equal $-a_k^n C_{jk}$, which need not vanish, so $P_n$ is
block triangular and in general non-symmetric. \hfill$\square$

\subsection{Proof of Theorem~\ref{thm:additivity}}

Let $T_{n:m+1} := T_n T_{n-1}\cdots T_{m+1}$ with $T_{n:n+1} := I$. Unrolling
the recurrence of Definition~\ref{def:classA} from $x^0 = 0$,
\begin{equation}
  x^n = \sum_{m=1}^{n} T_{n:m+1} \sum_{k=1}^M a_k^m B_k^m \tilde u_k^m
      = \sum_{m=1}^n \sum_{k=1}^M a_k^m \, T_{n:m+1}B_k^m \tilde u_k^m ,
\end{equation}
and $o^n = Rx^n$ gives the additive form. The map $a^{1:n} \mapsto o^n$ is
affine in each coordinate with no products of distinct coordinates, so for any
$(k,m)\neq(j,m')$,
$\Delta_{(k,m)}\Delta_{(j,m')} o^n
= (\psi_{k,m} + \psi_{j,m'}) - \psi_{k,m} - \psi_{j,m'} = 0$.
For the second claim, take $M = 2$, $d = 1$, $L = 1$ and consider the $(1,2)$
entry of $S_n = (I_2 + \Delta t_n^2 P_n)^{-1}$ with
$P_n = \bigl(\begin{smallmatrix}\alpha_1 & -gc \\ -gc &
\alpha_2\end{smallmatrix}\bigr)$, $g = a_1^n a_2^n$, $c \neq 0$. The
$2\times2$ inverse gives \eqref{eq:S12}, which vanishes iff $g = 0$. Choosing an
input supported only on modality $1$ at visit $n$ and reading out coordinate $2$
of $y^n$, $o^n$ is a nonzero multiple of $[S_n]_{12}$ when $a_1^n = a_2^n = 1$
and zero if either indicator is zero, so
$\Delta_{(1,n)}\Delta_{(2,n)} o^n \neq 0$ and CAMOS is not in $\Acal$.
\hfill$\square$

\begin{remark}[Scope of Theorem~\ref{thm:additivity}]\label{rem:scope}
The theorem concerns a single \emph{linear state-space layer}. It does not say
that deep networks built from such layers cannot represent availability
interactions, since a nonlinear readout applied to an additive statistic is not
itself additive; it says that in class $\Acal$ the interaction must be
reconstructed by the readout from an additive summary, whereas CAMOS represents
it inside the recurrence, in the part of the model that is scannable and carries
the guarantees of Lemma~\ref{lem:psd} and Theorem~\ref{thm:switching}. Nor does
it separate CAMOS from representation-level fusion such as MedFuse or DrFuse,
whose encoders are nonlinear in $a$. We state it narrowly on purpose.
\end{remark}

\subsection{Proof of Proposition~\ref{prop:spectral}}

Since $P_n$ is symmetric, write $P_n = U\Lambda U^\top$ with $U$ orthogonal and
$\Lambda = \diag(\nu_1,\dots,\nu_m)$, $\nu_i \ge \mu > 0$ by
Lemma~\ref{lem:psd}. Put $\mathcal{U} = \blkdiag(U,U)$, which is orthogonal.
Then $\mathcal{U}^\top \Mop_n \mathcal{U} = \bigl(\begin{smallmatrix} I &
\Delta t_n \Lambda \\ -\Delta t_n I & I\end{smallmatrix}\bigr)$, which decouples
into $m$ independent $2\times2$ problems
$\bigl(\begin{smallmatrix} 1 & \Delta t_n \nu_i \\ -\Delta t_n &
1\end{smallmatrix}\bigr)$, each with inverse
$s_i\bigl(\begin{smallmatrix} 1 & -\Delta t_n \nu_i \\ \Delta t_n &
1\end{smallmatrix}\bigr)$, $s_i = (1+\Delta t_n^2\nu_i)^{-1}$. The eigenvalues
of this inverse are $s_i(1 \pm \mathrm{i}\Delta t_n\sqrt{\nu_i})$, so
$|\lambda_i^{\pm}|^2 = s_i^2(1 + \Delta t_n^2\nu_i) = s_i \le
(1+\Delta t_n^2\mu)^{-1} < 1$. Since $\Top_n = \Mop_n^{-1}$ and orthogonal
conjugation preserves the spectrum, the claim follows. The argument uses only
symmetry and positivity of $P_n$, not diagonality, which is why the uncoupled
proof does not transfer. \hfill$\square$

\begin{proposition}[Exact dissipation at fixed availability]\label{prop:energy}
Suppose $f^n = 0$ and $P_n \equiv P$. Then
$E_n := \tfrac12(\|z^n\|^2 + (y^n)^\top P y^n)$ satisfies, for every
$\Delta t_n > 0$,
\begin{equation}
  E_n - E_{n-1}
  = -\tfrac12 \Delta t_n^2 \bigl( (y^n)^\top P^2 y^n + (z^n)^\top P z^n \bigr)
  \le -\tfrac{\mu}{2}\Delta t_n^2\bigl(\mu\|y^n\|^2 + \|z^n\|^2\bigr) \le 0 .
\end{equation}
\end{proposition}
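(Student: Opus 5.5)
The plan is to prove the identity by expressing the previous state $(z^{n-1},y^{n-1})$ in terms of the current one, using the implicit scheme \eqref{eq:im} with $f^n=0$ and $P_n\equiv P$. This gives
\[
  z^{n-1} = z^n + \Delta t_n P y^n, \qquad y^{n-1} = y^n - \Delta t_n z^n .
\]
I work backwards like this, rather than through $\Top_n$ in \eqref{eq:recurrence}, because backward Euler evaluates the right-hand side at the new time level. Expanding the old energy in new-level quantities then makes every cross term explicit. Symmetry of $P$ (Assumption~\ref{as:sym}) is the only structural fact needed for the equality. Positivity from Lemma~\ref{lem:psd} is used only for the two inequalities.

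\textbf{Step 1 (kinetic part).} Expanding the first relation gives
\[
  \|z^{n-1}\|^2 = \|z^n\|^2 + 2\Delta t_n (z^n)^\top P y^n + \Delta t_n^2 (y^n)^\top P^2 y^n ,
\]
where $P^\top P = P^2$ by symmetry.

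\textbf{Step 2 (potential part).} Expanding the second relation gives
\[
  (y^{n-1})^\top P y^{n-1} = (y^n)^\top P y^n - 2\Delta t_n (z^n)^\top P y^n + \Delta t_n^2 (z^n)^\top P z^n .
\]
Again symmetry is used, to merge $(y^n)^\top P z^n$ and $(z^n)^\top P y^n$ into a single cross term.

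\textbf{Step 3 (combine).} Adding the two expansions, the cross terms $\pm 2\Delta t_n (z^n)^\top P y^n$ cancel exactly. This cancellation is the only delicate point, and it is exactly where symmetry is indispensable. Halving the sum yields
\[
  E_{n-1} = E_n + \tfrac12\Delta t_n^2\bigl((y^n)^\top P^2 y^n + (z^n)^\top P z^n\bigr),
\]
which is the claimed identity. It holds for every $\Delta t_n$, with no smallness condition on the step.

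\textbf{Step 4 (inequalities).} By Lemma~\ref{lem:psd}, $P$ is symmetric with spectrum contained in $[\mu, L_P]$ and $\mu>0$. Hence $P \succeq \mu I$, so $(z^n)^\top P z^n \ge \mu\|z^n\|^2$. Likewise $P^2$ has eigenvalues $\nu_i^2 \ge \mu^2$, so $(y^n)^\top P^2 y^n \ge \mu^2\|y^n\|^2$. Substituting these bounds gives the middle expression $-\tfrac{\mu}{2}\Delta t_n^2(\mu\|y^n\|^2+\|z^n\|^2)$, and nonpositivity is immediate.

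I expect no genuine obstacle. The one thing to check carefully is that the cross terms cancel, which holds by symmetry.
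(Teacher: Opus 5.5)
Your proposal is correct and matches the paper's proof essentially line by line. You rewrite the backward Euler step as $z^{n-1} = z^n + \Delta t_n P y^n$ and $y^{n-1} = y^n - \Delta t_n z^n$, expand both energy terms, cancel the cross terms using the symmetry of $P$, and then obtain the inequalities from $P \succeq \mu I$ and $P^2 \succeq \mu^2 I$ via Lemma~\ref{lem:psd}.
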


\begin{proof}
From \eqref{eq:im}, $z^{n-1} = z^n + \Delta t_n P y^n$ and
$y^{n-1} = y^n - \Delta t_n z^n$, so
\begin{align}
  \|z^{n-1}\|^2 &= \|z^n\|^2 + 2\Delta t_n \langle z^n, Py^n\rangle
                   + \Delta t_n^2 (y^n)^\top P^2 y^n, \\
  (y^{n-1})^\top P y^{n-1} &= (y^n)^\top P y^n
                   - 2\Delta t_n \langle z^n, Py^n\rangle
                   + \Delta t_n^2 (z^n)^\top P z^n .
\end{align}
Adding and halving, the cross terms cancel and
$E_{n-1} = E_n + \tfrac12\Delta t_n^2((y^n)^\top P^2 y^n + (z^n)^\top P z^n)$.
The inequality uses $P \succeq \mu I$ and hence $P^2 \succeq \mu^2 I$.
\end{proof}

The dissipation is negligible for the slowest mode at clinical sequence lengths,
since $\prod_{n}(1+\Delta t_n^2\mu)^{-1/2} \ge
\exp(-\tfrac{\mu}{2}\sum_n \Delta t_n^2)$ exceeds $0.90$ for $N \approx 10$,
$\Delta t_n \approx 1$ year and $\mu \approx 0.02$; high-frequency modes are
damped more strongly at long gaps, which is the price of unconditional
stability.

\subsection{Proof of Theorem~\ref{thm:switching}}

Write $E_n^{(Q)} := \tfrac12(\|z^n\|^2 + (y^n)^\top Q y^n)$, so
$E_n = E_n^{(P_n)}$. Applying Proposition~\ref{prop:energy} with $P_n$ frozen
over the step from $n-1$ to $n$ gives $E_n^{(P_n)} \le E_{n-1}^{(P_n)}$, and
\begin{equation}
  E_{n-1}^{(P_n)} = E_{n-1}^{(P_{n-1})}
    + \tfrac12 (y^{n-1})^\top (P_n - P_{n-1}) y^{n-1}
  \le E_{n-1}^{(P_{n-1})} + \tfrac12\|P_n - P_{n-1}\|_2\,\|y^{n-1}\|^2 .
\end{equation}
By Lemma~\ref{lem:psd}, $\|y^{n-1}\|^2 \le (2/\mu)E_{n-1}^{(P_{n-1})}$, so
$E_n \le E_{n-1}\exp(\tfrac{1}{\mu}\|P_n - P_{n-1}\|_2)$, and telescoping gives
the first inequality. For the second, if $a^n = a^{n-1}$ then $P_n = P_{n-1}$
and the factor is one, which also proves monotonicity; if $a^n \neq a^{n-1}$ then
\begin{equation}
  \|P_n - P_{n-1}\|_2
  \le \|D_{a^{n-1}}CD_{a^{n-1}}\|_2 + \|D_{a^n}CD_{a^n}\|_2 \le 2\gamma,
\end{equation}
using $\|D_a\|_2 \le 1$ and $\|C\|_2 = \max_r\|C^{(r)}\|_2 = \gamma$. Summing
over the $V_N$ transition steps completes the proof. \hfill$\square$

\begin{proposition}[Input-to-state bound]\label{prop:iss}
With forcing $f^n$ and $P_n \equiv P$,
$\sqrt{E_n} \le \sqrt{E_{n-1}} + \sqrt{2}\,\Delta t_n\|f^n\|$.
\end{proposition}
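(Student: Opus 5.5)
The plan is to recast the discrete energy as a weighted Euclidean norm and then use linearity: the homogeneous step is a contraction in that norm (Proposition~\ref{prop:energy}), and the forcing enters only through an additive term, so the triangle inequality finishes the argument. Define $G := \blkdiag(I_m, P)$. By Lemma~\ref{lem:psd}, $P \succeq \mu I_m \succ 0$, so $G \succ 0$ and $\|x\|_G := (x^\top G x)^{1/2}$ is a genuine norm on $\Rb^{2m}$. For $x = (z,y)^\top$ this gives $\tfrac12\|x\|_G^2 = \tfrac12(\|z\|^2 + y^\top P y)$, so $\sqrt{E_n} = \|x^n\|_G/\sqrt{2}$.

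The first step is to reread Proposition~\ref{prop:energy} as an operator statement. With $f^n = 0$ and $P_n \equiv P$, its proof starts from an arbitrary state $x^{n-1}$ and uses only the implicit update $x^n = \Top_n x^{n-1}$. It shows that $\tfrac12\|\Top_n w\|_G^2 \le \tfrac12\|w\|_G^2$ for every $w \in \Rb^{2m}$, so $\|\Top_n\|_{G\to G} \le 1$ for every $\Delta t_n > 0$. I will state this explicitly, since the argument depends on it holding for arbitrary inputs and not only along a trajectory.

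The second step uses the forced recurrence \eqref{eq:recurrence}, $x^n = \Top_n(x^{n-1} + F_n)$ with $F_n = (\Delta t_n f^n, 0)^\top$. By the contraction and then the triangle inequality in $\|\cdot\|_G$,
\begin{equation*}
  \|x^n\|_G \le \|x^{n-1} + F_n\|_G \le \|x^{n-1}\|_G + \|F_n\|_G .
\end{equation*}
Because the $y$-component of $F_n$ is zero, the $P$-weighted part vanishes and $\|F_n\|_G = \Delta t_n\|f^n\|$. Dividing by $\sqrt2$ gives $\sqrt{E_n} \le \sqrt{E_{n-1}} + \Delta t_n\|f^n\|/\sqrt2$. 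This is stronger than the stated $\sqrt2\,\Delta t_n\|f^n\|$, since $1/\sqrt2 \le \sqrt2$.

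I expect no real obstacle. The only point needing care is the first step: the dissipation identity must be read as a bound on $\Top_n$ for arbitrary $w$, not just for the particular state being propagated, so that it can be applied to the shifted vector $x^{n-1}+F_n$. The matching of the forcing convention in \eqref{eq:recurrence}, where $F_n$ is applied before $\Top_n$, also needs checking. If the convention were instead $x^n = \Top_n x^{n-1} + F_n$, the same argument goes through using the bound $\|F_n\|_G \le \Delta t_n\|f^n\|$ directly.
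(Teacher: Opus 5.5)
Your argument is correct, and it takes a different route from the paper's.

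The paper redoes the energy computation of Proposition~\ref{prop:energy} with the forcing kept, using $z^{n-1} = z^n + \Delta t_n(Py^n - f^n)$. It drops the dissipation to get $2E_n \le 2E_{n-1} + 2\Delta t_n\|z^n\|\|f^n\|$. Because this cross term involves the \emph{new} velocity $z^n$, it must then close the quadratic inequality $A_\ast^2 \le B_\ast^2 + 2cA_\ast$, where $A_\ast = \sqrt{2E_n}$, $B_\ast = \sqrt{2E_{n-1}}$, $c = \Delta t_n\|f^n\|$ and $\|z^n\| \le A_\ast$. Solving it gives $A_\ast \le c + \sqrt{c^2 + B_\ast^2} \le B_\ast + 2c$, which is where the constant $\sqrt2$ comes from.

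You instead identify $\sqrt{2E(x)}$ with a genuine norm $\|x\|_G$. You read Proposition~\ref{prop:energy} as the operator bound $\|\Top_n w\|_G \le \|w\|_G$ for arbitrary $w$. This is valid, since that identity is algebraic in the pair $(x^{n-1}, \Top_n x^{n-1})$ with $x^{n-1}$ unrestricted. You then inject the forcing before the nonexpansive map, exactly as in the paper's convention $\Mop_n x^n = x^{n-1} + F_n$, and Minkowski's inequality replaces the quadratic.

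What your route buys:
\begin{itemize}
\item A constant $1/\sqrt2$, a factor $2$ sharper than stated. The paper's chain of inequalities cannot reach this: already at $E_{n-1} = 0$ it yields only $\sqrt{2E_n} \le 2\Delta t_n\|f^n\|$.
\item An easy strengthening. In the eigenbasis of $P$, each $2\times 2$ block of $G^{1/2}\Top_n G^{-1/2}$ is $(1+\Delta t_n^2\nu_i)^{-1/2}$ times a rotation. Hence $\|\Top_n\|_{G\to G} \le (1+\Delta t_n^2\mu)^{-1/2}$, and you could place this strict contraction factor in front of both terms.
\end{itemize}

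What the paper's route buys: it works directly from the scheme and produces an exact forced energy balance before any inequality is applied. It therefore needs neither the norm reinterpretation nor any care about where $F_n$ enters relative to $\Top_n$.
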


\begin{proof}
Repeating the computation of Proposition~\ref{prop:energy} with
$z^{n-1} = z^n + \Delta t_n(P y^n - f^n)$ gives
$2E_n \le 2E_{n-1} + 2\Delta t_n\|z^n\|\|f^n\|$. With
$A_\ast = \sqrt{2E_n}$, $B_\ast = \sqrt{2E_{n-1}}$, $c = \Delta t_n\|f^n\|$ and
$\|z^n\| \le A_\ast$, we get $A_\ast^2 \le B_\ast^2 + 2cA_\ast$, hence
$A_\ast \le c + \sqrt{c^2 + B_\ast^2} \le B_\ast + 2c$. Dividing by $\sqrt{2}$
gives the claim.
\end{proof}

\begin{proposition}[Cross-modal leakage and its dependence on the gap]\label{prop:leak}
Fix a channel $r$, an availability vector $a \in [0,1]^M$ (the ungated case is
$a = \mathbf{1}$), and $k \neq j$. Write $s := \Delta t_n^2$ and
$S_n^{(r)} = (I_M + s P_n^{(r)})^{-1}$ with
$P_n^{(r)} = \operatorname{diag}(\alpha_{\cdot,r}) - D_a C^{(r)} D_a$.
\begin{enumerate}[label=(\roman*)]
  \item \emph{Short gaps.}
  $\bigl|[S_n^{(r)}]_{kj} - s\, a_k a_j\, c_{kj,r}\bigr| \le s^2 L_P^2$,
  so the leakage grows quadratically in $\Delta t_n$ while $s L_P \ll 1$.
  \item \emph{Uniform bound.} For every $\Delta t_n > 0$,
  \begin{equation}\label{eq:leak-bound}
    \bigl|[S_n^{(r)}]_{kj}\bigr| \le \beta(s)
    := \frac12\Bigl(\frac{1}{1 + s\mu} - \frac{1}{1 + s L_P}\Bigr)
    = \frac{s\,(L_P - \mu)}{2(1 + s\mu)(1 + s L_P)} .
  \end{equation}
  $\beta$ increases for $s < (\mu L_P)^{-1/2}$ and decreases afterwards, attains
  its maximum $\tfrac12(\sqrt{L_P} - \sqrt{\mu})/(\sqrt{L_P} + \sqrt{\mu}) < \tfrac12$,
  and satisfies $\beta(s) \le (L_P - \mu)/(2 s \mu L_P)$.
  \item \emph{Long gaps.}
  $\bigl|[S_n^{(r)}]_{kj} - s^{-1} [(P_n^{(r)})^{-1}]_{kj}\bigr| \le s^{-2}\mu^{-2}$,
  so the leakage decays as $\Delta t_n^{-2}$.
  \item \emph{Exact form for $M = 2$.} With $g = a_1 a_2$ and
  $D := \alpha_1\alpha_2 - g^2 c^2 > 0$, \eqref{eq:S12} reads
  \begin{equation}\label{eq:S12-expanded}
    [S_n]_{12} = \frac{s\, g c}{1 + s(\alpha_1 + \alpha_2) + s^2 D},
  \end{equation}
  which is unimodal in $\Delta t_n$ with peak at
  $\Delta t_\star = D^{-1/4}$, peak value
  $|gc| / (\alpha_1 + \alpha_2 + 2\sqrt{D})$, and
  $[S_n]_{12} \sim g c / (\Delta t_n^2 D)$ as $\Delta t_n \to \infty$.
\end{enumerate}
Cross-modal transfer of the previous state is therefore largest at intermediate
gaps and vanishes at long ones.
\end{proposition}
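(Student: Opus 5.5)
The plan is to work entirely inside one channel block. Write $Q := P_n^{(r)}$, which is symmetric with $\mu I_M \preceq Q \preceq L_P I_M$ by Lemma~\ref{lem:psd}; this holds for any $a \in [0,1]^M$, since the Gershgorin argument only used $|a_k a_j|\le 1$. Diagonalize $Q = U\Lambda U^\top$, so that $S := (I+sQ)^{-1} = U\,\diag\bigl((1+s\nu_i)^{-1}\bigr)U^\top$. Every part reduces to a scalar function $\phi(\nu)$ evaluated at $Q$, plus one elementary fact: for a symmetric matrix $G$ with spectrum in $[\ell,u]$, any off-diagonal entry satisfies $|G_{kj}| \le \tfrac12(u-\ell)$. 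This follows from $G_{kj} = \tfrac14\bigl((e_k+e_j)^\top G(e_k+e_j) - (e_k-e_j)^\top G(e_k-e_j)\bigr)$, where each quadratic form lies in $[2\ell,2u]$. I would state this fact first as a small lemma.

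For (i), I would use the exact identity $S = I - sQ + s^2 Q^2 S$, which follows from $(I+sQ)^{-1} = I - sQ(I+sQ)^{-1}$ applied twice. The $(k,j)$ entry of $I - sQ$ is $s\,a_ka_jc_{kj,r}$, and the remainder satisfies $|[s^2Q^2S]_{kj}| \le s^2\|Q^2S\|_2 \le s^2 L_P^2$, because $\|S\|_2\le 1$. For (ii), I would apply the off-diagonal lemma to $S$, whose spectrum lies in $[(1+sL_P)^{-1},(1+s\mu)^{-1}]$. This gives $\beta(s)$ directly, and the closed form is algebra. Writing $\beta(s) = \tfrac{(L_P-\mu)}{2}\cdot\tfrac{s}{1+s(\mu+L_P)+s^2\mu L_P}$, the maximum is found by maximizing $s/(1+bs+cs^2)$, which peaks at $s=c^{-1/2}=(\mu L_P)^{-1/2}$ with value $1/(b+2\sqrt c)$. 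This yields $\tfrac12(L_P-\mu)/(\sqrt{L_P}+\sqrt\mu)^2 = \tfrac12(\sqrt{L_P}-\sqrt\mu)/(\sqrt{L_P}+\sqrt\mu)$. The tail bound comes from dropping the $1$'s in the denominator.

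For (iii), I would write $S = (sQ)^{-1}(I + (sQ)^{-1})^{-1}$. This gives the identity $S - s^{-1}Q^{-1} = -s^{-2}Q^{-2}(I+s^{-1}Q^{-1})^{-1}$, whose spectral norm is at most $s^{-2}\mu^{-2}$ because $(I+s^{-1}Q^{-1})^{-1}$ is a contraction. The entrywise bound follows. For (iv), I would take the explicit $2\times2$ inverse from the proof of Theorem~\ref{thm:additivity}. Expanding the denominator gives $1+s(\alpha_1+\alpha_2)+s^2D$, and $D>0$ follows from Lemma~\ref{lem:psd} since $D=\det P_n$. Unimodality, the peak at $s=D^{-1/2}$ (so $\Delta t_\star = D^{-1/4}$), and the peak value $|gc|/(\alpha_1+\alpha_2+2\sqrt D)$ then come from the same scalar maximization used in (ii). The asymptotic form $gc/(sD)$ is immediate.

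The main obstacle is the uniform bound (ii). A naive estimate $|S_{kj}|\le\|S\|_2\le (1+s\mu)^{-1}$ does not vanish as $s\to 0$ and misses the factor $\tfrac12$. The key step is to recognise that an off-diagonal entry is controlled by the spectral \emph{spread} of $S$, not its norm, which is exactly the polarization lemma above. Everything else is calculus on rational functions of $s$.
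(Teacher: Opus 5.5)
Your proposal is correct and follows the paper's proof essentially step for step. It uses the same identity $S = I - sQ + s^2Q^2S$ for (i), the same spectral-spread bound on off-diagonal entries for (ii), the same identity $S - s^{-1}Q^{-1} = -s^{-1}Q^{-1}S$ for (iii) (your $-s^{-2}Q^{-2}(I+s^{-1}Q^{-1})^{-1}$ simplifies to it), and the same scalar maximization for (ii) and (iv); the only differences are cosmetic, namely proving the off-diagonal lemma by polarization rather than by shifting $S$ by the midpoint of its spectrum, and getting $D>0$ from $\det P_n>0$ via Lemma~\ref{lem:psd} rather than directly from the Gershgorin budget.
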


\begin{proof}
Write $P := P_n^{(r)}$ and $S := S_n^{(r)}$. By Lemma~\ref{lem:psd},
applied channel-wise, $P$ is symmetric with spectrum in $[\mu, L_P]$, so $S$ is
symmetric with spectrum in $[(1 + s L_P)^{-1}, (1 + s\mu)^{-1}]$ and
$\|S\|_2 \le 1$.

(i) From $(I + sP)S = I$ we get $S = I - sP + s^2 P^2 S$. For $k \neq j$ the
identity contributes nothing and $[-sP]_{kj} = s\, a_k a_j c_{kj,r}$, while
$|[s^2 P^2 S]_{kj}| \le s^2 \|P\|_2^2 \|S\|_2 \le s^2 L_P^2$.

(ii) If $X$ is symmetric with spectrum in $[\lambda_-, \lambda_+]$, then for
$k \neq j$ and $m := (\lambda_- + \lambda_+)/2$,
$|X_{kj}| = |e_k^\top (X - mI) e_j| \le \|X - mI\|_2 = (\lambda_+ - \lambda_-)/2$.
Applying this to $S$ gives \eqref{eq:leak-bound}. Differentiating,
$\operatorname{sign}\beta'(s) = \operatorname{sign}\bigl(1 - s^2 \mu L_P\bigr)$,
which gives the monotonicity and the maximizer $s = (\mu L_P)^{-1/2}$;
substituting yields the stated maximum. The decay bound follows from
$(1 + s\mu)(1 + s L_P) \ge s^2 \mu L_P$.

(iii) From $(I + sP)S = I$ we also get $S = s^{-1}P^{-1} - s^{-1}P^{-1}S$, and
$\|P^{-1}S\|_2 \le \mu^{-1}(1 + s\mu)^{-1} \le s^{-1}\mu^{-2}$.

(iv) Expanding the denominator of \eqref{eq:S12} gives \eqref{eq:S12-expanded}.
Assumption~\ref{as:budget} gives $|c| \le (1-\epsilon)\min\{\alpha_1, \alpha_2\}$,
so $g^2 c^2 < \alpha_1 \alpha_2$ and $D > 0$. With $f(s) = s g c / Q(s)$ and
$Q(s) = 1 + s(\alpha_1 + \alpha_2) + s^2 D$, we have
$\operatorname{sign} f'(s) = \operatorname{sign}(gc)\cdot\operatorname{sign}\bigl(Q(s) - sQ'(s)\bigr)
= \operatorname{sign}(gc)\cdot\operatorname{sign}(1 - s^2 D)$, so $|f|$ peaks at
$s = D^{-1/2}$, where $Q = 2 + (\alpha_1 + \alpha_2)D^{-1/2}$, giving the stated
value. The asymptotic form follows because $Q(s) \sim s^2 D$.
\end{proof}

\begin{proposition}[Normal modes and frequency shift]
\label{prop:spectral-structure}
Fix a channel $r$ and an availability pattern $a$, and let
$P^{(r)}(a) = U^{(r)}\Lambda^{(r)}U^{(r)\top}$ with
$\Lambda^{(r)} = \diag(\nu_1^{(r)},\dots,\nu_M^{(r)})$. Then the unforced system
\eqref{eq:ode} restricted to channel $r$ is a superposition of $M$ decoupled
harmonic modes with angular frequencies $\sqrt{\nu_i^{(r)}}$ and mode shapes the
columns of $U^{(r)}$; the frequencies are pinned to the uncoupled ones by Weyl's
inequality, $|\nu_i^{(r)} - \alpha_{(i),r}| \le \gamma \le
(1-\epsilon)\alpha_{\max}$; and the channel transfer function is
$H^{(r)}(s) = (s^2 I_M + P^{(r)}(a))^{-1}$, whose cross-modal entry
$[H^{(r)}(s)]_{kj}$ vanishes for $k \neq j$ when modalities $k$ and $j$ lie in
different connected components of the gated graph $P^{(r)}(a)$, and is nonzero
for generic couplings when they lie in the same component. The entries
$|c_{kj,r}|$ therefore define $d$ weighted modality graphs whose connected
components are the sets of modalities that exchange information in that
frequency band.
\end{proposition}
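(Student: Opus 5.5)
The proof proceeds in three parts, matching the three claims of Proposition~\ref{prop:spectral-structure}: modal decomposition, frequency pinning, and the support of the transfer function. Throughout we fix $r$ and $a$ and work with the $M\times M$ block $P:=P^{(r)}(a)$. By Lemma~\ref{lem:psd} applied channel-wise, $P$ is symmetric with spectrum in $[\mu,L_P]$.

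\emph{Modal decomposition.} Under the permutation $\Pi$ of \eqref{eq:channelblock}, the unforced system \eqref{eq:ode} restricted to channel $r$ reads $y''=-Py$ with $y\in\Rb^M$. We substitute the orthogonal diagonalization $P=U\Lambda U^\top$ and set $\xi:=U^\top y$. This gives $\xi_i''=-\nu_i\xi_i$ for each $i$, with $\nu_i>0$. Hence each $\xi_i$ is a harmonic oscillation of angular frequency $\sqrt{\nu_i}$. Writing $y=\sum_i \xi_i(t)\,U e_i$ exhibits the superposition, with the columns of $U$ as mode shapes. Positivity of every $\nu_i$, guaranteed by Lemma~\ref{lem:psd}, is what rules out growing or constant modes.

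\emph{Frequency pinning.} We write $P=\diag(\alpha_{\cdot,r})-E$ with $E:=D_aC^{(r)}D_a$. Both terms are symmetric, so Weyl's inequality, applied to eigenvalues sorted in the same order, gives $|\nu_i-\alpha_{(i),r}|\le\|E\|_2$. Here $\alpha_{(i),r}$ is the $i$-th order statistic of $\alpha_{1,r},\dots,\alpha_{M,r}$. Next, $\|E\|_2\le\|D_a\|_2^2\|C^{(r)}\|_2\le\gamma$, since $a\in[0,1]^M$ gives $\|D_a\|_2\le 1$. The bound $\gamma\le(1-\epsilon)\alpha_{\max}$ was already recorded after \eqref{eq:param}. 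It follows because $\|C^{(r)}\|_2$ is at most the maximal absolute row sum for a symmetric matrix, and that row sum is controlled by Assumption~\ref{as:budget}.

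\emph{Transfer function.} We take the Laplace transform of $y''=-Py+b$ with zero initial data, which gives $H(s)=(s^2I+P)^{-1}$. For the vanishing claim, we permute modalities so that the connected components of the gated graph form consecutive index sets. Then $P$ is block diagonal, so $s^2I+P$ and its inverse are block diagonal, and the cross-component entries are zero. This is the same argument as in Proposition~\ref{prop:isolation}. For the genericity claim, we take $k,j$ in one component joined by a path $k=k_0,k_1,\dots,k_\ell=j$ with nonzero gated couplings. We expand $(s^2I+P)^{-1}$ as a Neumann series in $E$ around $(s^2I+A^{(r)})^{-1}$, which is valid for large $|s|$. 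The lowest-order term of $[H(s)]_{kj}$ in $E$ is a sum over walks of length $\ell$. Its coefficient contains the monomial $\prod_t c_{k_{t-1}k_t,r}$ divided by a nonzero rational function of $s$. So $[H(s)]_{kj}$ is a nonzero rational function of the couplings and of $s$, and its vanishing set is a proper algebraic subset, i.e.\ it vanishes only non-generically.

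\emph{Main obstacle.} I expect the genericity step to be the main obstacle. The delicate point is ensuring that distinct shortest walks contributing to the same monomial do not cancel. They cannot, because the monomials of distinct paths are distinct polynomials in independent variables; if necessary one can simply choose a single shortest path and treat its couplings as indeterminates. The interpretation of the components as sets of modalities exchanging information within a band then follows directly from the vanishing and non-vanishing of $H^{(r)}$.
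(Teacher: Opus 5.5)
Your proposal is correct and follows the paper's proof essentially step for step: orthogonal diagonalization for the normal modes, Weyl's inequality with $\|D_aC^{(r)}D_a\|_2\le\gamma$ for the frequency pinning, and the Laplace transform together with block-diagonality and a Neumann-series walk expansion for the transfer function. The only variation is the expansion point. You expand in the coupling $E$ around $(s^2I+A^{(r)})^{-1}$, whereas the paper expands in $s^{-2}P$ around $s^2I$; your choice keeps diagonal self-loops out of the walk sum and makes your non-cancellation argument for the genericity claim slightly more explicit than the paper's.
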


\begin{proof}
(i) With $f = 0$, substituting $y = U^{(r)}\zeta$ into the channel-$r$
restriction of \eqref{eq:P} gives $\zeta'' = -\Lambda^{(r)}\zeta$, which is $M$
decoupled scalar oscillators with angular frequency $\sqrt{\nu_i^{(r)}}$, all
$\nu_i^{(r)} > 0$ by Lemma~\ref{lem:psd}. (ii)
$P^{(r)}(a) = \diag(\alpha_{\cdot,r}) - D_aC^{(r)}D_a$ is a symmetric
perturbation of a diagonal matrix by a symmetric matrix of spectral norm at most
$\gamma$, so Weyl's inequality \citep[Thm.~4.3.1]{horn2012matrix} gives the
stated bound. (iii) Laplace transforming $y'' = -P^{(r)}(a)y + \tilde f$ with
zero initial conditions gives $H^{(r)}(s) = (s^2I + P^{(r)}(a))^{-1}$. If $k$
and $j$ are disconnected in $P^{(r)}(a)$, then $s^2I + P^{(r)}(a)$ is block
diagonal and so is its inverse, giving $[H]_{kj} = 0$; conversely, if they are
connected by a path $k = i_0,\dots,i_q = j$, the Neumann series
$(s^2I+P)^{-1} = s^{-2}\sum_{q\ge0}(-s^{-2}P)^q$ contains the monomial
$P_{i_0i_1}\cdots P_{i_{q-1}i_q} \neq 0$ at order $q$, and for generic couplings
the entry is a nonzero rational function of $s$.
\end{proof}

\section{Parallel Evaluation, Complexity and Expressivity}
\label{app:scan}

Equation~\eqref{eq:recurrence} is a linear recurrence with time-varying
coefficients, so it is solvable by the associative scan with binary operator
\begin{equation}
  (T_1, v_1) \bullet (T_2, v_2) := \bigl(T_2 T_1,\; T_2 v_1 + v_2\bigr),
  \label{eq:scanop}
\end{equation}
which is associative
\citep{kogge1973parallel,blelloch1990prefix,martin2018parallel,smith2023s5}.
Applying its inclusive scan to
$[(\Top_1, \Top_1 F_1), \dots, (\Top_N, \Top_N F_N)]$ returns $x^1,\dots,x^N$ in
the second components in $\lceil \log_2 N \rceil$ sequential rounds.
Associativity is immediate; what is not immediate, and what the channel
factorization buys, is that the scan never leaves a cheap structural family.

\begin{definition}[Channel-block family]\label{def:family}
Let $\Fcal \subset \Rb^{2m \times 2m}$ be the set of matrices whose four
$m\times m$ blocks $T_{ab}$ satisfy
$\Pi T_{ab} \Pi^\top = \blkdiag(T_{ab}^{(1)},\dots,T_{ab}^{(d)})$ with
$T_{ab}^{(r)} \in \Rb^{M\times M}$.
\end{definition}

\begin{lemma}[Closure]\label{lem:closure}
$\Fcal$ is closed under addition and multiplication, contains $I_{2m}$, and
contains $\Top_n$ for every $n$ and every availability pattern. A product of two
elements of $\Fcal$ costs $8dM^3$ multiply-accumulate operations, against
$8m^3 = 8d^3M^3$ for dense $2m\times 2m$ blocks.
\end{lemma}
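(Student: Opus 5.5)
The plan is to work entirely in the permuted coordinates. Conjugate each $m\times m$ block by $\Pi$, so that $T\in\Fcal$ corresponds to a $2\times 2$ array of block-diagonal matrices $\blkdiag(T_{ab}^{(1)},\dots,T_{ab}^{(d)})$. Equivalently, after the further reordering that groups the two coordinates $(z,y)$ of each channel together, $T$ is block diagonal with $d$ blocks of size $2M\times 2M$. Either viewpoint reduces every claim to a statement about block-diagonal matrices.

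\textbf{Closure.} Closure under addition is immediate, because conjugation by $\Pi$ is linear and sums of block-diagonal matrices with the same partition are block diagonal. For multiplication, the $(a,b)$ block of $TT'$ is $\sum_{c}T_{ac}T'_{cb}$. We have $\Pi T_{ac}T'_{cb}\Pi^\top=(\Pi T_{ac}\Pi^\top)(\Pi T'_{cb}\Pi^\top)$ since $\Pi^\top\Pi=I$, and a product of block-diagonal matrices with a common partition is block diagonal, with $r$-th block $T_{ac}^{(r)}T_{cb}^{\prime(r)}$. Combining this with additivity gives closure. Membership of $I_{2m}$ follows because its blocks are $I_m$ and $0$, and both are trivially block diagonal after conjugation.

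\textbf{Membership of $\Top_n$.} Here I would invoke \eqref{eq:channelblock} applied to the gated stiffness $P_n=A-D_{a^n}CD_{a^n}$. The gate $D_{a^n}$ is itself diagonal and constant within each modality block, so $\Pi D_{a^n}\Pi^\top$ is diagonal. Hence $\Pi P_n\Pi^\top=\blkdiag(P_n^{(1)},\dots,P_n^{(d)})$, with entries $\alpha_{k,r}$ on the diagonal and $-a_k^na_j^nc_{kj,r}$ off it. Then $\Pi(I_m+\Delta t_n^2P_n)\Pi^\top$ is block diagonal and invertible by Lemma~\ref{lem:psd}. The inverse of an invertible block-diagonal matrix is block diagonal, so $\Pi S_n\Pi^\top=\blkdiag(S_n^{(r)})$. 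The four blocks of $\Top_n$ in \eqref{eq:recurrence} are $S_n$, $-\Delta t_nP_nS_n$, $\Delta t_nS_n$ and $S_n$. Each is a scalar times a product of matrices in the block-diagonal class, so $\Top_n\in\Fcal$ for every $n$ and every pattern $a^n$.

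\textbf{Cost count.} In the per-channel $2M\times 2M$ view, a product consists of $d$ independent multiplications of $2M\times2M$ matrices, each costing $(2M)^3=8M^3$ multiply-accumulates, for $8dM^3$ in total. Equivalently, in the $2\times2$ block view there are eight products $T_{ac}T'_{cb}$, each of which is $d$ products of $M\times M$ matrices costing $dM^3$; the additions are lower order. A dense product of $2m\times 2m$ matrices costs $(2m)^3=8d^3M^3$.

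I do not expect a genuine obstacle. The only step needing care is that $\Pi$ maps the gate $D_{a^n}$ to a diagonal matrix, so that gating does not break the channel structure. This holds because $D_{a^n}$ acts by a scalar on each modality block, independently of the channel index.
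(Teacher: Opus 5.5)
Your proposal is correct and takes essentially the same route as the paper's proof. Both conjugate the four $m\times m$ blocks by $\Pi$, use that block-diagonal matrices with a common partition are closed under sums, products and inverses, and count eight block products of $d$ independent $M\times M$ multiplications; your per-channel $2M\times 2M$ count gives the same $8dM^3$. Your explicit check that $\Pi D_{a^n}\Pi^\top$ is diagonal is a useful addition, because the paper infers the channel-block structure of $P_n$ from \eqref{eq:channelblock}, which is stated only for the ungated $P$.
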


\begin{proof}
Let $T, T' \in \Fcal$. Conjugating by $\blkdiag(\Pi,\Pi)$ maps each of the four
$m\times m$ blocks of $T$ and $T'$ to block-diagonal matrices with $d$ blocks of
size $M\times M$. Block-diagonal matrices with a common partition are closed
under addition and multiplication, and each block of the product $TT'$ is a sum
of two products of such matrices, hence again block diagonal with the same
partition; therefore $TT' \in \Fcal$. Additivity and $I_{2m} \in \Fcal$ are
immediate. That $\Top_n \in \Fcal$ follows from \eqref{eq:recurrence}: $S_n$ and
$P_n$ are channel-block-diagonal by \eqref{eq:channelblock} and
Lemma~\ref{lem:psd}, and every block of $\Top_n$ is a scalar multiple of $S_n$
or of $P_nS_n$. For the cost, forming $TT'$ requires $8$ products of
$m \times m$ channel-block-diagonal matrices, each of which is $d$ independent
$M\times M$ products at $M^3$ multiply-accumulates, giving $8dM^3$; for dense
blocks the same $8$ products cost $m^3 = d^3M^3$ each.
\end{proof}

\begin{proposition}[Complexity and parameter count]\label{prop:cost}
One GCO layer requires $\Theta(N d M^3)$ arithmetic work and
$\lceil \log_2 N\rceil$ sequential depth, with $\Theta(N d M^2)$ memory for the
scan carries. An uncoupled oscillatory layer of the same total state width
$m = Md$ requires $\Theta(Nm)$ work, so the overhead of coupling is a factor
$\Theta(M^2)$, independent of $N$ and $d$, and the within-visit attention adds
$\Theta(N M^2 d)$ at constant depth. Of the $\Theta(M^2h^2 + M^2dh)$ parameters
per layer, only $\binom{M}{2}d$ belong to the coupling: for $M=4$ and $d=h=64$
that is $384$ out of $2.8\times10^5$, about $0.14\%$.
\end{proposition}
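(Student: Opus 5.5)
The plan is to count costs block by block: first the construction of each step's transition operator, then the associative scan, then the attention and the parameters.

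\textbf{Building the operators.} By \eqref{eq:channelblock}, each gated stiffness $P_n$ splits into $d$ independent $M\times M$ blocks $P_n^{(r)}$. Lemma~\ref{lem:psd} makes every $I_M+\Delta t_n^2 P_n^{(r)}$ symmetric positive definite, so a Cholesky factorisation gives $S_n^{(r)}$ in $\Theta(M^3)$. Forming $P_n^{(r)}S_n^{(r)}$ costs another $\Theta(M^3)$. Hence assembling $\Top_n$ in its channel-block form costs $\Theta(dM^3)$ per visit and $\Theta(NdM^3)$ over the sequence. The forcing vectors $\Top_nF_n$ cost only $\Theta(dM^2)$ per visit.

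\textbf{The scan.} The scan is evaluated with the operator \eqref{eq:scanop}. By Lemma~\ref{lem:closure}, every intermediate matrix stays in $\Fcal$, so each combine is one $\Fcal$-product at $8dM^3$ operations plus a matrix--vector product at $O(dM^2)$. A work-efficient (Blelloch) scan performs $\Theta(N)$ combines in $\lceil\log_2 N\rceil$ rounds. This gives $\Theta(NdM^3)$ work and logarithmic depth; the lower bound on work comes from the $N$ unavoidable factorisations. Each carry is an element of $\Fcal$, stored as $4d$ blocks of size $M\times M$ together with a vector of length $2m$. That is $\Theta(dM^2)$ per element and $\Theta(NdM^2)$ in total.

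\textbf{Comparison with the uncoupled layer.} When $C=0$, all blocks are diagonal, so each element costs $\Theta(m)=\Theta(Md)$. The ratio of costs is then
\[
\frac{dM^3}{Md}=M^2,
\]
which is independent of $N$ and $d$.

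\textbf{Attention.} At each visit there are $M$ tokens of width $d$. The $M^2$ logits cost $\Theta(M^2d)$, and the $Q$/$K$/$V$ projections add $\Theta(Md^2)$. The claimed order $\Theta(NM^2d)$ should therefore be read in the regime the paper uses, or with the projections counted separately; this is the step most likely to need a stated caveat. All visits are processed in parallel, so the depth is constant.

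\textbf{Parameters.} I will enumerate the parameters directly:
\begin{itemize}
\item $B$ has $Mdh$ entries, $C_{\mathrm{ro}}$ has $Mh\cdot Md$, $D_{\mathrm{ro}}$ has $(Mh)^2$, and the GLU weights have $2(Mh)^2$. Together these give $\Theta(M^2h^2+M^2dh)$.
\item $\hat\alpha$ has $Md$ entries.
\item $\hat c$ has only its strict upper triangle free, since symmetry is imposed by mirroring under \eqref{eq:param}. That is $\binom{M}{2}d$ entries.
\end{itemize}
Substituting $M=4$ and $d=h=64$ gives $6\cdot 64=384$ coupling parameters. For the total, $3(Mh)^2=196{,}608$ and $M^2dh=65{,}536$, and the smaller terms add up to about $2.8\times10^5$, so the coupling share is about $0.14\%$.

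The main obstacle is the exact constant in the attention term and the parameter total, since both depend on bookkeeping conventions that the paper does not pin down. The asymptotic claims follow from Lemma~\ref{lem:closure} and Lemma~\ref{lem:psd}.
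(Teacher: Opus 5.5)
Your proposal is correct and is essentially the paper's own argument: channel-wise Cholesky for the $S_n^{(r)}$, closure of $\Fcal$ under the combine (Lemma~\ref{lem:closure}) for the $\Theta(NdM^3)$ work and $\Theta(NdM^2)$ carry memory, diagonal blocks at $C=0$ for the $\Theta(M^2)$ ratio, and the same term-by-term enumeration giving $279{,}168$ parameters in total. Your attention caveat is right but its hedge points the wrong way: the $\Theta(Md^2)$ per-visit projections exceed $\Theta(M^2d)$ by a factor $d/M=16$ exactly in the paper's regime, so $\Theta(NM^2d)$ holds only under the paper's narrower count of scores and value aggregation; the same narrow reading is implicit elsewhere, since forming $f^n$ costs $\Theta(NMdh)$, which $\Theta(NdM^3)$ does not absorb when $h>M^2$, and a Blelloch scan needs about $2\lceil\log_2 N\rceil$ rounds, so exact $\lceil\log_2 N\rceil$ depth at $\Theta(N)$ work calls for a Ladner--Fischer-type scan.
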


\begin{proof}
By \eqref{eq:recurrence} each $S_n^{(r)}$ is an $M\times M$ symmetric positive
definite inverse costing $\Theta(M^3)$ by Cholesky, and over $d$ channels and
$N$ visits this is $\Theta(NdM^3)$. Each $\Top_n$ lies in $\Fcal$ by
Lemma~\ref{lem:closure}, whose products cost $\Theta(dM^3)$; the associative
scan performs $\Theta(N)$ such operations at depth $\lceil\log_2 N\rceil$,
giving $\Theta(NdM^3)$ work and $\Theta(\log N)$ depth. Each scan carry stores
$4d$ blocks of size $M\times M$, hence $\Theta(NdM^2)$ memory. For the uncoupled
model, $C = 0$ makes every $S_n^{(r)}$ diagonal, so all products cost
$\Theta(dM) = \Theta(m)$ and the total is $\Theta(Nm)$. The attention block
\eqref{eq:attn} computes $M\times M$ scores at each of $N$ visits with
$d$-dimensional values, hence $\Theta(NM^2d)$ at constant depth.

For the parameter count, the free parameters of one GCO layer are
$\hat\alpha \in \Rb^{M \times d}$; $\hat c$ on the strict upper triangle in
$(k,j)$ for each channel, giving $\binom{M}{2}d$; the forcing matrices
$B_k \in \Rb^{d\times h}$ for $k \in [M]$, giving $Mdh$; the readout
$C_{\mathrm{ro}} \in \Rb^{Mh \times m}$ and skip
$D_{\mathrm{ro}} \in \Rb^{Mh \times Mh}$, giving $Mhm + M^2h^2$; and the two GLU
matrices in $\Rb^{Mh\times Mh}$, giving $2M^2h^2$. Since $m = Md$, we have
$Mhm = M^2dh$, so the total is
$Md + \binom{M}{2}d + M^2dh + Mdh + 3M^2h^2 = \Theta(M^2h^2 + M^2dh)$.
Substituting $M=4$ and $d=h=64$ gives
$256 + 384 + 65536 + 16384 + 196608 = 279{,}168 \approx 2.8\times10^5$, of which
the $\binom{4}{2}\cdot 64 = 384$ coupling parameters are $0.14\%$.
\end{proof}

Expressivity is not lost. Setting $\hat c \equiv 0$ in \eqref{eq:param} gives
$C = 0$ and reduces a CAMOS layer to an uncoupled oscillatory layer with
strictly positive diagonal state matrix, so CAMOS blocks inherit universality
within continuous causal operators. This
certifies only that Assumptions~\ref{as:sym} and \ref{as:budget} cost nothing in
expressivity, and it is weak here because a symmetric coupled system is
orthogonally equivalent to an uncoupled one when everything is observed. The
representational gain is therefore due to gating
(Theorem~\ref{thm:additivity}), since the gate acts in the modality basis and
does not commute with the diagonalizing rotation $U^{(r)}$ of
Proposition~\ref{prop:spectral-structure}.

\section{Dense and Low-Rank Couplings}
\label{app:dense}

If $C_{kj} \in \Rb^{d\times d}$ is dense rather than diagonal, $P$ does not
block-diagonalize under $\Pi$ and three things change.

\paragraph{Stability.} Assumption~\ref{as:budget} is replaced by a spectral
condition. If $C = C^\top$ and $\|C\|_2 \le (1-\epsilon)\lambda_{\min}(A)$, then
for every $a \in [0,1]^M$,
$\lambda_{\min}(P(a)) \ge \lambda_{\min}(A) - \|C\|_2 \ge
\epsilon\lambda_{\min}(A) > 0$ using $\|D_a\|_2 \le 1$, and
Lemma~\ref{lem:psd} and everything downstream go through with
$\mu := \epsilon\lambda_{\min}(A)$. Enforcing this requires spectral
normalization of $C$, hence a power iteration at every training step rather than
the closed form \eqref{eq:param}. It also interacts badly with the standard
$\alpha \sim \mathcal{U}([0,1])$ initialization, since $\lambda_{\min}(A)$ is
then near zero and the admissible coupling collapses.

\paragraph{Cost.} $S_n$ becomes a dense $m \times m$ inverse costing
$\Theta(d^3M^3)$, and the scan combine leaves $\Fcal$ and requires dense
$2m\times2m$ products at the same order. The overhead relative to the
channel-factored model is $\Theta(d^2)$, about a factor of $4000$ per layer at
$d = 64$.

\paragraph{Why low rank does not help.} The parameterization
$C_{kj} = Q_{kj}Q_{kj}^\top$ with $Q_{kj}\in\Rb^{d\times \rho}$ reduces the
parameter count from $d^2$ to $d\rho$ per block but not the scan cost, because
low-rank-plus-diagonal structure is not closed under multiplication: if
$T = D_1 + U_1V_1^\top$ and $T' = D_2 + U_2V_2^\top$ with
$\mathrm{rank}\,U_i = \rho$, then
$TT' = D_1D_2 + D_1U_2V_2^\top + U_1V_1^\top D_2 + U_1(V_1^\top U_2)V_2^\top$
has rank up to $3\rho$ in its non-diagonal part, so the carried operator's rank
triples at every combine and reaches $N^{\log_2 3}\rho$ after a full scan.
Woodbury inversion is available at each individual step but does not survive the
scan. This is worth stating because a low-rank PSD coupling is the natural first
choice, and it is where intuition from static fusion layers fails in a
sequential model.

\paragraph{Intermediate option.} Partitioning the $d$ channels into $d/\rho$
groups with dense coupling inside each group interpolates between the extremes:
$\Fcal$ becomes block-diagonal matrices with $d/\rho$ blocks of size $\rho M$,
the cost becomes $\Theta(N d \rho^2 M^3)$, and $\rho = 1$ recovers the
channel-factored model.

\section{Algorithms}
\label{app:algo}

\begin{algorithm}[h]
\caption{One gated coupled oscillator (GCO) layer, forward pass}
\label{alg:gco}
\begin{algorithmic}[1]
\Require projected inputs $\{\tilde u_k^n\}$, availability $\{a_k^n\}$, gaps
  $\{\Delta t_n\}$, initial state $x^0$, parameters
  $\hat\alpha, \hat c, \{B_k\}$, budget $\epsilon$, floor $\delta$
\Ensure states $\{x^n = (z^n, y^n)\}_{n=1}^N$
\State $\alpha_{k,r} \gets \spls(\hat\alpha_{k,r})$ for all $k, r$
  \Comment{$\Theta(Md)$}
\State $\sigma_{k,r} \gets \sum_{l\neq k}|\hat c_{kl,r}|$;\quad
  $c_{kj,r} \gets \dfrac{(1-\epsilon)\min\{\alpha_{k,r},\alpha_{j,r}\}}
                        {\max\{\sigma_{k,r},\sigma_{j,r},\delta\}}\,\hat c_{kj,r}$
  \Comment{Assumptions~\ref{as:sym}, \ref{as:budget} hold by construction}
\For{$n = 1,\dots,N$ \textbf{in parallel}}
  \For{$r = 1,\dots,d$ \textbf{in parallel}}
    \State $[P_n^{(r)}]_{kk} \gets \alpha_{k,r}$;\quad
           $[P_n^{(r)}]_{kj} \gets -\,a_k^n a_j^n\, c_{kj,r}$ for $k\neq j$
           \Comment{symmetric product gate}
    \State $S_n^{(r)} \gets \textsc{CholSolve}\bigl(I_M + \Delta t_n^2 P_n^{(r)},\, I_M\bigr)$
           \Comment{SPD by Lemma~\ref{lem:psd}; $\Theta(M^3)$}
  \EndFor
  \State assemble $\Top_n$ from $\{S_n^{(r)}\}, \{P_n^{(r)}\}$ by \eqref{eq:recurrence}
  \State $f^n \gets \sum_k a_k^n\,\iota_k B_k \tilde u_k^n$;\quad
         $v_n \gets \Top_n (\Delta t_n f^n,\, 0)^\top$
\EndFor
\State $\{x^n\} \gets \textsc{AssocScan}_{\bullet}\bigl([(\Top_1, v_1 + \Top_1 x^0),
  (\Top_2, v_2), \dots, (\Top_N, v_N)]\bigr)$
  \Comment{$\lceil\log_2 N\rceil$ depth, $\Theta(NdM^3)$ work}
\State \Return $\{x^n\}$
\end{algorithmic}
\end{algorithm}

\begin{algorithm}[h]
\caption{CAMOS forward pass for one subject}
\label{alg:camos}
\begin{algorithmic}[1]
\Require $\{u_k^n\}$, $\{a_k^n\}$, $\{t_n\}$, static $s$
\Ensure $\{\hat p^n\}$, $\hat q$, $\{\hat u_k^{n+h}\}$
\State $\Delta t_n \gets t_n - t_{n-1}$ for all $n$
\State $\tilde u_k^{(0),n} \gets E_k u_k^n + e_k$ if $a_k^n = 1$, else $0$
  \Comment{\eqref{eq:proj}, parallel in $n$}
\State $x^0 \gets W_2^s\,\mathrm{GELU}(W_1^s s + b_1^s) + b_2^s$
  \Comment{\eqref{eq:static}}
\For{$\ell = 1,\dots,L$}
  \State $\{x^{(\ell),n}\} \gets$ Algorithm~\ref{alg:gco} with inputs
    $\{\tilde u^{(\ell-1),n}\}$ and layer-$\ell$ parameters
  \State $w^{(\ell),n} \gets C^{(\ell)}_{\mathrm{ro}} y^{(\ell),n}
    + D^{(\ell)}_{\mathrm{ro}} \tilde u^{(\ell-1),n}$
  \State $\tilde u^{(\ell),n} \gets \tilde u^{(\ell-1),n}
    + \mathrm{Drop}(\mathrm{GLU}(\mathrm{GELU}(w^{(\ell),n})))$
\EndFor
\State $h_k^n \gets \iota_k^\top y^{(L),n}$;\quad
  $\bar h_k^n \gets h_k^n + \mathrm{Attn}(h^n)_k$
  \Comment{\eqref{eq:attn}, parallel in $n$}
\State $\hat p^n, \hat q, \hat u_k^{n+h} \gets$ heads
  \eqref{eq:head-st}, \eqref{eq:head-cv}, \eqref{eq:head-fc}
\end{algorithmic}
\end{algorithm}

\section{Implementation Details}
\label{app:impl}

All models are implemented in PyTorch 2.3 \citep{paszke2019pytorch} and trained
on a single NVIDIA A100 40GB GPU, with the associative scan implemented as a
Blelloch up-sweep and down-sweep over batched $2M\times2M$ channel blocks and
the Cholesky solves of \eqref{eq:recurrence} batched over $(n,r)$ in float32.
Table~\ref{tab:hparams} lists the complete configuration; values marked
``grid'' are selected on the validation split, and the identical grid and trial
budget are applied to every baseline so that no model receives a larger search.
The validation selection criterion is the unweighted mean of staging macro-F1,
landmark prediction AUROC and one minus the normalized forecasting MAE. Staging is scored
by accuracy and macro-averaged F1, precision, recall and specificity;
landmark prediction by AUROC and AUPRC; forecasting by per-modality MAE and RMSE in the
normalized feature space, averaged over observed targets only.

The frequency shift $\alpha_0 > 0$ in Table~\ref{tab:hparams} matters: the
standard oscillatory initialization $\alpha \sim \mathcal{U}([0,1])$ places mass
near zero, and the admissible coupling in \eqref{eq:param} is proportional to
$\min\{\alpha_{k,r},\alpha_{j,r}\}$, so a large fraction of channels would begin
with no coupling budget and never recover.

\begin{table}[h]
\caption{Complete CAMOS configuration.}
\label{tab:hparams}
\centering
\small
\begin{tabular}{@{}llp{0.44\textwidth}@{}}
\toprule
Group & Setting & Value \\
\midrule
\multirow{6}{*}{Architecture}
 & GCO blocks $L$              & grid $\{2, 4, 6\}$ \\
 & oscillators per modality $d$& grid $\{32, 64, 128\}$ \\
 & projection width $h$        & grid $\{32, 64, 128\}$ \\
 & attention heads $H$         & $4$ \\
 & dropout                     & $0.1$ \\
 & static encoder              & $2$-layer MLP, width $2m$, GELU \\
\midrule
\multirow{4}{*}{Coupling}
 & budget $\epsilon$           & $0.2$ \\
 & numerical floor $\delta$    & $10^{-6}$ \\
 & gate                        & hard, $g_{kj}^n = a_k^n a_j^n$ \\
 & structure                   & channel-factored, symmetric \\
\midrule
\multirow{4}{*}{Initialization}
 & $\alpha_{k,r}$              & $\mathcal{U}([\alpha_0, \alpha_0+1])$, $\alpha_0 = 0.1$ \\
 & $\hat c_{kj,r}$             & $\mathcal{N}(0, 1/M)$, symmetrized \\
 & attention bias $\beta_0$    & $2.0$, learnable via $\spls$ \\
 & remaining weights           & PyTorch defaults \\
\midrule
\multirow{7}{*}{Optimization}
 & optimizer                   & AdamW, $\beta = (0.9, 0.999)$, $\varepsilon=10^{-8}$ \\
 & weight decay                & $10^{-2}$, excluding $\hat\alpha$, $\hat c$, biases \\
 & peak learning rate          & grid $\{10^{-4}, 3\cdot10^{-4}, 10^{-3}\}$ \\
 & schedule                    & cosine, $5\%$ linear warmup \\
 & batch size                  & $64$ subjects \\
 & gradient clipping           & global norm $1.0$ \\
 & epochs / early stopping     & max $200$, patience $20$ \\
\midrule
\multirow{4}{*}{Objective}
 & $(\lambda_{\mathrm{st}}, \lambda_{\mathrm{cv}}, \lambda_{\mathrm{fc}})$
                               & $(1,\ 0.5,\ 1)$ \\
 & $\lambda_{\mathrm{reg}}$    & $10^{-3}$ ($\ell_1$ on $\hat c$) \\
 & forecasting horizons        & $H_{\max} = 3$, discount $\gamma_h = 2^{-(h-1)}$ \\
 & class weighting             & inverse frequency on the training split \\
\midrule
\multirow{4}{*}{Protocol}
 & split                       & subject-level $70/15/15$, stratified \\
 & seeds                       & $5$ \\
 & selection criterion         & mean of staging macro-F1, landmark prediction AUROC,
                                 $1 - $ normalized forecasting MAE \\
 & precision                   & float32 throughout, including Cholesky \\
\bottomrule
\end{tabular}
\end{table}

\paragraph{Numerical notes.} The Cholesky factorizations in
\eqref{eq:recurrence} are the only place where conditioning could be a concern.
By Lemma~\ref{lem:psd}, $I_M + \Delta t_n^2 P_n^{(r)}$ has condition number at
most $(1 + \Delta t_n^2 L_P)/(1 + \Delta t_n^2\mu)$, bounded above by
$L_P/\mu = (2-\epsilon)\,\mathrm{cond}(A)/\epsilon$ independently of
$\Delta t_n$; with $\epsilon = 0.2$ and the initialization of
Table~\ref{tab:hparams} this is below $200$, comfortably inside float32. This is
a practical dividend of the budget: an unconstrained coupling would allow
$I + \Delta t^2 P$ to become near-singular precisely at long gaps, where the
model is most needed.

\paragraph{Baseline adaptation.} LinOSS-IM and LinOSS-IMEX receive the
concatenation $[u_1^n; \dots; u_M^n] \in \Rb^{p}$ with unobserved entries set to
zero, the availability vector $a^n$ appended as $M$ channels, and $\Delta t_n$
appended as one channel and used as the per-step discretization interval.
MedFuse and DrFuse are given per-modality gated recurrent encoders over the
visit sequence, so that they see the same temporal information as the SSMs
rather than a single pooled vector.

\section{Dataset Construction}
\label{app:data}

\paragraph{ADNI extraction.} From the ADNIMERGE table we retain the columns
listed in Section~\ref{sec:datasets} together with the visit code, examination
date, baseline diagnosis and current diagnosis. Visits are ordered by
examination date and $t_n$ is expressed in years from the subject's baseline
visit. A subject is included if it has at least three visits with a recorded
diagnosis and, at each retained visit, at least one modality fully observed.
Volumetric MRI measures are divided by intracranial volume before normalization,
which removes head-size variation that would otherwise dominate the leading
principal direction. All continuous features are then robust $z$-scored,
$u \mapsto (u - \mathrm{med})/\mathrm{IQR}$, with statistics computed on the
training split only and applied unchanged to validation and test; the robust
form is preferred because several ADNI biomarkers, CSF amyloid, are
censored at assay limits and have heavy tails. Categorical static covariates are
one-hot encoded, and \textit{APOE4} is kept as its integer allele count in
$\{0,1,2\}$, which encodes the known dose response.

\begin{table}[h]
\caption{Cohort statistics. $M$: number of modalities; $p=\sum_k p_k$: total raw
feature dimension; missing rate: percentage of modality-visit pairs unobserved.}
\label{tab:datasets}
\centering\small
\setlength{\tabcolsep}{4pt}
\begin{tabular}{llcccccc c}
\toprule
Dataset & Role & $M$ & $p$ & Subjects & Visits & Mean visits/subj. & Max visits/subj. & Missing (\%) \\
\midrule
ADNI    & train / val / test & 4 & 19 & 2749 & 7432 & 2.7 & 15 & 32.5 \\
OASIS-3 & zero-shot          & 3 & 19 & 1377 & 8522  & 6.2 & 20 & 28 \\
\bottomrule
\end{tabular}
\end{table}

\paragraph{Availability convention.} $a_k^n = 1$ if and only if every feature of
modality $k$ is recorded at visit $n$. Per-feature masking was rejected because
acquisition is organized by procedure: a lumbar puncture yields the whole CSF
panel or none of it, and a PET session yields the whole tracer panel or none of
it. Modelling availability at the procedure level is therefore both more
faithful and what makes the modality-level gate of Section~\ref{sec:gating}
meaningful.

\paragraph{Label construction.} Staging labels are the recorded diagnosis at
each visit, mapped to $\{\mathrm{CN}, \mathrm{MCI}, \mathrm{AD}\}$; visits
without a recorded diagnosis contribute to the forecasting loss but not to
$\mathcal{L}_{\mathrm{st}}$. For landmark prediction, the index visit is the first visit
at which a subject is diagnosed MCI; the label is $1$ if AD is recorded at any
visit within three years of the index visit, and $0$ if the subject has at least
three years of subsequent follow-up without an AD diagnosis. Subjects censored
before three years without converting are excluded from the landmark prediction task
only, so no conversion label is ever imputed. Reversions from MCI to CN are
retained in the staging task with their recorded label and do not affect the
conversion label unless a later AD diagnosis occurs within the window.

\paragraph{OASIS-3 mapping.} For OASIS-3 \citep{lamontagne2019oasis3} we use the
same modalities and, where a feature has no exact ADNI counterpart, the nearest
available measure of the same construct, refitting only the normalization
statistics; the model is otherwise transferred without retraining, so this is a
zero-shot evaluation.

\section{Extended Results}
\label{app:extended}

In this section, we report all the baselines on both cohorts, across the four
families: uncoupled oscillatory SSMs (LinOSS-IM,
LinOSS-IMEX, D-LinOSS), general and coupled SSMs (S5, Mamba, Coupled Mamba),
irregular-time and missing-value models (GRU-D, mTAN, Raindrop, ODE-RNN), and
clinical fusion models (MedFuse, DrFuse). All entries are means over five seeds
on the subject-level splits of Appendix~\ref{app:data}.

\subsection{Staging with all the baselines}
\label{app:staging-ext}

Table~\ref{tab:staging_extended} extends the staging comparison of
Section~\ref{sec:res-staging}. On ADNI the strongest baselines come from three
different families, DrFuse ($0.7322$ accuracy), LinOSS-IMEX ($0.7206$) and
Mamba ($0.7187$), so no single design axis accounts for baseline performance,
and CAMOS exceeds all twelve on every metric. On OASIS-3 the picture changes:
accuracy is compressed between $0.6263$ and $0.7525$ because one class
dominates, while macro F1 separates the models from $0.2805$ to $0.5557$. Every
baseline falls below macro F1 $0.44$ and macro recall $0.44$, with DrFuse at
exactly $1/3$ recall and $2/3$ specificity, the signature of predicting a single
class. CAMOS is the only model that retains discrimination among the three
stages under transfer.

\begin{table}[t]
\caption{Same-visit CN/MCI/AD staging on ADNI (in-distribution) and OASIS-3
(zero-shot).}
\label{tab:staging_extended}
\centering
\small
\setlength{\tabcolsep}{3pt}
\begin{tabular}{lccccc}
\toprule
Models & Accuracy \up & Macro F1 \up & Macro Prec.\ \up & Macro Recall \up & Macro Spec.\ \up \\
\midrule
\multicolumn{6}{l}{\textbf{ADNI}} \\
\cmidrule(r){1-1}
LinOSS-IM & $0.6593_{\pm 0.0041}$ & $0.6815_{\pm 0.0039}$ & $0.6804_{\pm 0.0045}$ & $0.6864_{\pm 0.0037}$ & $0.8147_{\pm 0.0021}$ \\
LinOSS-IMEX & $0.7206_{\pm 0.0058}$ & $0.7234_{\pm 0.0061}$ & $0.7394_{\pm 0.0055}$ & $0.7424_{\pm 0.0059}$ & $0.8513_{\pm 0.0030}$ \\
D-LinOSS & $0.6820_{\pm 0.0324}$ & $0.6882_{\pm 0.0271}$ & $0.6912_{\pm 0.0294}$ & $0.6975_{\pm 0.0281}$ & $0.8392_{\pm 0.0152}$ \\
S5 & $0.6562_{\pm 0.0226}$ & $0.6696_{\pm 0.0196}$ & $0.6731_{\pm 0.0207}$ & $0.6804_{\pm 0.0196}$ & $0.8215_{\pm 0.0118}$ \\
Mamba & $0.7187_{\pm 0.0203}$ & $0.7304_{\pm 0.0163}$ & $0.7386_{\pm 0.0174}$ & $0.7412_{\pm 0.0169}$ & $0.8561_{\pm 0.0097}$ \\
Coupled Mamba & $0.7002_{\pm 0.0282}$ & $0.7052_{\pm 0.0209}$ & $0.7127_{\pm 0.0233}$ & $0.7169_{\pm 0.0214}$ & $0.8447_{\pm 0.0126}$ \\
GRU-D & $0.6192_{\pm 0.0356}$ & $0.6251_{\pm 0.0391}$ & $0.6318_{\pm 0.0372}$ & $0.6402_{\pm 0.0365}$ & $0.8043_{\pm 0.0198}$ \\
mTAN & $0.6931_{\pm 0.0182}$ & $0.7037_{\pm 0.0165}$ & $0.7094_{\pm 0.0171}$ & $0.7158_{\pm 0.0163}$ & $0.8489_{\pm 0.0092}$ \\
Raindrop & $0.6776_{\pm 0.0174}$ & $0.6902_{\pm 0.0172}$ & $0.6953_{\pm 0.0182}$ & $0.7021_{\pm 0.0176}$ & $0.8374_{\pm 0.0101}$ \\
ODE-RNN & $0.6619_{\pm 0.0194}$ & $0.6682_{\pm 0.0172}$ & $0.6729_{\pm 0.0184}$ & $0.6817_{\pm 0.0179}$ & $0.8268_{\pm 0.0105}$ \\
MedFuse & $0.6648_{\pm 0.0072}$ & $0.6697_{\pm 0.0076}$ & $0.6564_{\pm 0.0069}$ & $0.7139_{\pm 0.0070}$ & $0.8284_{\pm 0.0034}$ \\
DrFuse & $0.7322_{\pm 0.0049}$ & $0.7354_{\pm 0.0046}$ & $0.7488_{\pm 0.0044}$ & $0.7683_{\pm 0.0042}$ & $0.8599_{\pm 0.0024}$ \\
\textbf{CAMOS} & $\best{0.7824}_{\pm \best{0.0016}}$ & $\best{0.7790}_{\pm \best{0.0018}}$ & $\best{0.7970}_{\pm \best{0.0022}}$ & $\best{0.7786}_{\pm \best{0.0025}}$ & $\best{0.8836}_{\pm \best{0.0013}}$ \\
\midrule
\multicolumn{6}{l}{\textbf{OASIS-3}} \\
\cmidrule(r){1-1}
LinOSS-IM & $0.7271_{\pm 0.0017}$ & $0.3253_{\pm 0.0053}$ & $0.5334_{\pm 0.0082}$ & $0.3541_{\pm 0.0025}$ & $0.6773_{\pm 0.0014}$ \\
LinOSS-IMEX & $0.7231_{\pm 0.0022}$ & $0.3027_{\pm 0.0064}$ & $0.4222_{\pm 0.0094}$ & $0.3414_{\pm 0.0028}$ & $0.6756_{\pm 0.0016}$ \\
D-LinOSS & $0.7022_{\pm 0.0343}$ & $0.3754_{\pm 0.0376}$ & $0.4127_{\pm 0.0412}$ & $0.3896_{\pm 0.0284}$ & $0.6948_{\pm 0.0163}$ \\
S5 & $0.7525_{\pm 0.0290}$ & $0.4047_{\pm 0.0396}$ & $0.4518_{\pm 0.0437}$ & $0.4102_{\pm 0.0315}$ & $0.7051_{\pm 0.0172}$ \\
Mamba & $0.7345_{\pm 0.0262}$ & $0.4312_{\pm 0.0492}$ & $0.4863_{\pm 0.0521}$ & $0.4389_{\pm 0.0406}$ & $0.7194_{\pm 0.0208}$ \\
Coupled Mamba & $0.7237_{\pm 0.0636}$ & $0.3594_{\pm 0.0667}$ & $0.3924_{\pm 0.0703}$ & $0.3781_{\pm 0.0452}$ & $0.6890_{\pm 0.0231}$ \\
GRU-D & $0.7259_{\pm 0.0494}$ & $0.3731_{\pm 0.0363}$ & $0.4035_{\pm 0.0398}$ & $0.3852_{\pm 0.0317}$ & $0.6926_{\pm 0.0159}$ \\
mTAN & $0.7371_{\pm 0.0376}$ & $0.2931_{\pm 0.0110}$ & $0.2846_{\pm 0.0137}$ & $0.3409_{\pm 0.0068}$ & $0.6705_{\pm 0.0034}$ \\
Raindrop & $0.6263_{\pm 0.0377}$ & $0.3411_{\pm 0.0298}$ & $0.3627_{\pm 0.0312}$ & $0.3597_{\pm 0.0243}$ & $0.6798_{\pm 0.0121}$ \\
ODE-RNN & $0.7154_{\pm 0.0403}$ & $0.3330_{\pm 0.0124}$ & $0.3468_{\pm 0.0154}$ & $0.3502_{\pm 0.0097}$ & $0.6751_{\pm 0.0048}$ \\
MedFuse & $0.6682_{\pm 0.0032}$ & $0.3176_{\pm 0.0059}$ & $0.2871_{\pm 0.0091}$ & $0.3558_{\pm 0.0026}$ & $0.6844_{\pm 0.0019}$ \\
DrFuse & $0.7263_{\pm 0.0023}$ & $0.2805_{\pm 0.0081}$ & $0.2421_{\pm 0.0086}$ & $0.3333_{\pm 0.0000}$ & $0.6667_{\pm 0.0000}$ \\
\textbf{CAMOS} & $\best{0.7600}_{\pm \best{0.0014}}$ & $\best{0.5557}_{\pm \best{0.0028}}$ & $\best{0.6280}_{\pm \best{0.0031}}$ & $\best{0.5261}_{\pm \best{0.0030}}$ & $\best{0.7682}_{\pm \best{0.0015}}$ \\
\bottomrule
\end{tabular}
\end{table}

\subsection{Landmark prediction with all the baselines}
\label{app:conversion-ext}

Table~\ref{tab:conversion_extended} extends Table~\ref{tab:conversion}, and
Figure~\ref{fig:roc_pr} shows the corresponding ROC and precision-recall curves.
The two metrics behave differently in each cohort. On ADNI, AUROC spans
$0.8795$ to $0.9611$ while AUPRC spans $0.6187$ to $0.8687$, and the ordering is
not preserved: mTAN is the strongest baseline on AUPRC ($0.7750$) but only
third on AUROC. On OASIS-3 the gap widens, with baseline AUPRC between $0.0801$
and $0.5982$ at AUROC between $0.7896$ and $0.9560$, indicating few positive
cases in the transfer cohort. Baseline standard deviations there reach
$\pm0.188$ on AUPRC, against $\pm0.009$ for CAMOS, so the transfer margin should
be read alongside that variance.

\begin{figure}[t]
\centering
\begin{subfigure}[t]{0.5\textwidth}
    \centering
    \includegraphics[width=\linewidth]{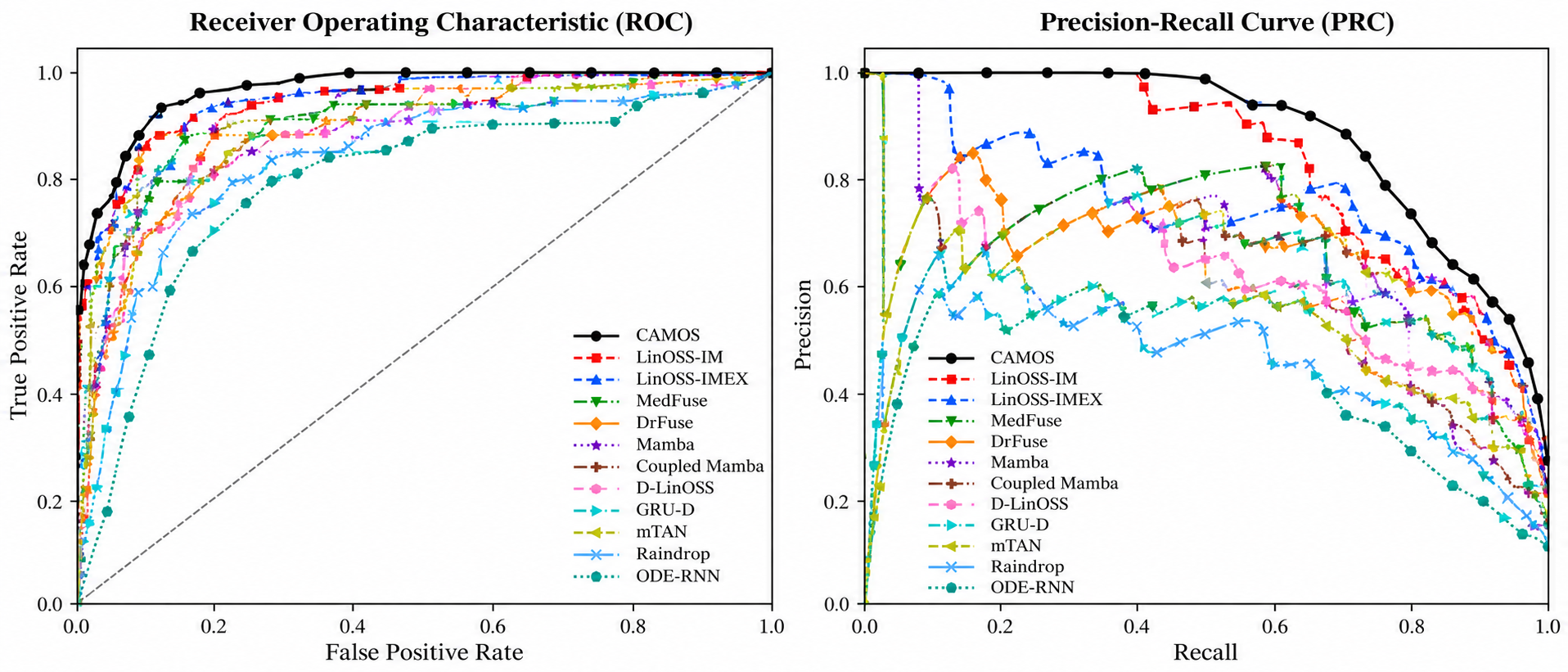}
\end{subfigure}
\hfill
\begin{subfigure}[t]{0.49\textwidth}
    \centering
    \includegraphics[width=\linewidth]{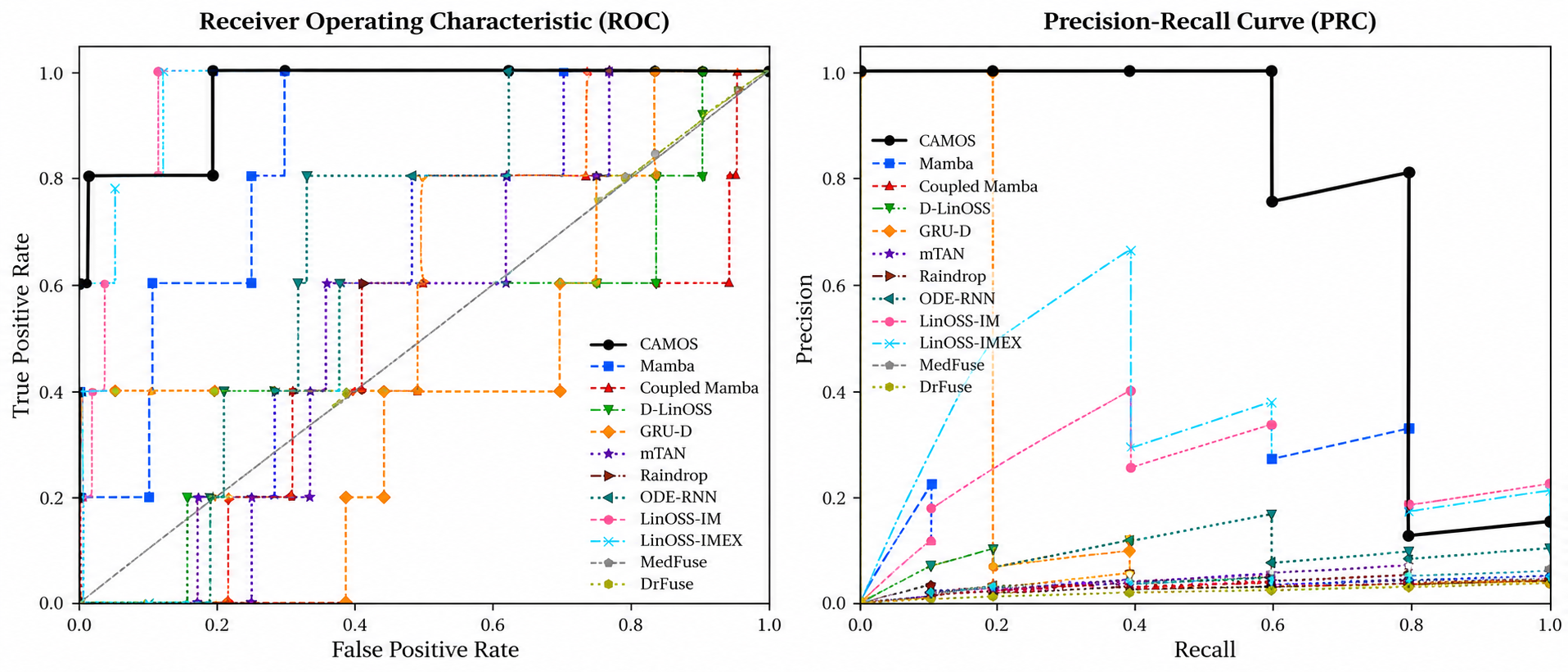}
\end{subfigure}
\caption{ROC and PR curves for 3-year MCI-to-AD landmark prediction on ADNI (left) and OASIS-3 (right).}
\label{fig:roc_pr}
\end{figure}

\begin{table}[t]
\caption{3-year MCI-to-AD conversion on ADNI (in-distribution) and OASIS-3
(zero-shot).}
\label{tab:conversion_extended}
\centering
\small
\setlength{\tabcolsep}{5pt}
\begin{tabular}{lcccc}
\toprule
& \multicolumn{2}{c}{\textbf{ADNI}} & \multicolumn{2}{c}{\textbf{OASIS-3}} \\
\cmidrule(lr){2-3}\cmidrule(lr){4-5}
Models & AUROC \up & AUPRC \up & AUROC \up & AUPRC \up \\
\midrule
LinOSS-IM     & $0.9223_{\pm 0.0031}$ & $0.7102_{\pm 0.0087}$ & $0.9533_{\pm 0.0024}$ & $0.3588_{\pm 0.0142}$ \\
LinOSS-IMEX   & $0.9307_{\pm 0.0027}$ & $0.7389_{\pm 0.0074}$ & $0.9560_{\pm 0.0029}$ & $0.4185_{\pm 0.0163}$ \\
D-LinOSS      & $0.9142_{\pm 0.0187}$ & $0.7022_{\pm 0.0721}$ & $0.8236_{\pm 0.0412}$ & $0.1134_{\pm 0.0481}$ \\
S5            & $0.9087_{\pm 0.0163}$ & $0.6925_{\pm 0.06201}$ & $0.9412_{\pm 0.0265}$ & $0.5982_{\pm 0.1889}$ \\
Mamba         & $0.8795_{\pm 0.0204}$ & $0.6187_{\pm 0.0686}$ & $0.8418_{\pm 0.0537}$ & $0.1260_{\pm 0.1104}$ \\
Coupled Mamba & $0.8913_{\pm 0.0142}$ & $0.6463_{\pm 0.0423}$ & $0.8307_{\pm 0.0581}$ & $0.1177_{\pm 0.1189}$ \\
GRU-D         & $0.9128_{\pm 0.0196}$ & $0.7014_{\pm 0.0678}$ & $0.8654_{\pm 0.0476}$ & $0.1825_{\pm 0.1054}$ \\
mTAN          & $0.9346_{\pm 0.0172}$ & $0.7750_{\pm 0.0667}$ & $0.8372_{\pm 0.0503}$ & $0.1224_{\pm 0.1031}$ \\
Raindrop      & $0.9061_{\pm 0.0158}$ & $0.6909_{\pm 0.0584}$ & $0.7896_{\pm 0.0348}$ & $0.0801_{\pm 0.0322}$ \\
ODE-RNN       & $0.9215_{\pm 0.0211}$ & $0.7250_{\pm 0.0752}$ & $0.8519_{\pm 0.0392}$ & $0.1308_{\pm 0.0676}$ \\
MedFuse       & $0.9134_{\pm 0.0045}$ & $0.6902_{\pm 0.0102}$ & $0.8467_{\pm 0.0058}$ & $0.2594_{\pm 0.0121}$ \\
DrFuse        & $0.9386_{\pm 0.0022}$ & $0.7689_{\pm 0.0069}$ & $0.8787_{\pm 0.0047}$ & $0.3183_{\pm 0.0135}$ \\
\textbf{CAMOS} & $\best{0.9611}_{\pm \best{0.0014}}$ & $\best{0.8687}_{\pm \best{0.0041}}$ & $\best{0.9600}_{\pm \best{0.0018}}$ & $\best{0.7894}_{\pm \best{0.0093}}$ \\
\bottomrule
\end{tabular}
\end{table}

\subsection{Longitudinal forecasting with all the baselines}
\label{app:forecast-ext}

Tables~\ref{tab:forecast_extended} and \ref{tab:oasis_forecasting} extend
Figure~\ref{fig:forecast} to all baselines. On ADNI, the baselines are tightly
clustered on the two sparse modalities, within $0.112$ MAE on CSF and, aside
from S5, Mamba and Coupled Mamba, within $0.091$ on genetics, while CAMOS
separates by $0.115$ and $0.106$ from the best of them. On OASIS-3 the
clustering is tighter still: eleven of twelve baselines lie within $0.041$ MAE
of each other on imaging and within $0.039$ on clinical measures, which suggests
they largely reproduce the last observed value. CAMOS departs from that cluster
on clinical measures ($0.3756$ against a baseline minimum of $0.6569$) and
genetics ($0.2847$ against $0.7711$), and only marginally on imaging ($0.7333$
against $0.7705$).

\begin{table}[t]
\caption{Next-visit ($h=1$) forecasting on ADNI per modality.}
\label{tab:forecast_extended}
\centering
\small
\setlength{\tabcolsep}{3.6pt}
\begin{tabular}{l cc cc cc cc}
\toprule
& \multicolumn{2}{c}{Cognition \dn} & \multicolumn{2}{c}{CSF \dn}
& \multicolumn{2}{c}{Genetics \dn} & \multicolumn{2}{c}{Imaging \dn} \\
\cmidrule(lr){2-3}\cmidrule(lr){4-5}\cmidrule(lr){6-7}\cmidrule(lr){8-9}
Model & MAE & RMSE & MAE & RMSE & MAE & RMSE & MAE & RMSE \\
\midrule
LinOSS-IM     & 0.4471 & 0.7704 & 0.7715 & 1.0832 & 0.8270 & 0.9662 & 0.4468 & 0.8457 \\
LinOSS-IMEX   & 0.4343 & 0.7526 & 0.7676 & 1.0795 & 0.8183 & 0.9607 & 0.5567 & 0.8053 \\
D-LinOSS      & 0.5579 & 0.8736 & 0.7651 & 1.0724 & 0.7802 & 0.9731 & 0.6017 & 0.8612 \\
S5            & 0.4973 & 0.8012 & 0.7591 & 1.0689 & 0.5147 & 0.9712 & 0.5945 & 0.8547 \\
Mamba         & 0.4932 & 0.7894 & 0.7056 & 1.0180 & 0.5964 & 0.9480 & 0.5646 & 0.8273 \\
Coupled Mamba & 0.4976 & 0.7948 & 0.7139 & 1.0518 & 0.6166 & 0.9688 & 0.5764 & 0.8394 \\
GRU-D         & 0.5695 & 0.8891 & 0.7349 & 1.0612 & 0.8011 & 0.9804 & 0.5867 & 0.8481 \\
mTAN          & 0.5728 & 0.8824 & 0.7517 & 1.0703 & 0.8000 & 0.9779 & 0.5668 & 0.8325 \\
Raindrop      & 0.6561 & 0.9967 & 0.8179 & 1.1386 & 0.7823 & 0.9856 & 0.6781 & 0.9412 \\
ODE-RNN       & 0.5578 & 0.8649 & 0.7471 & 1.0655 & 0.7360 & 0.9694 & 0.5771 & 0.8436 \\
MedFuse       & 0.4199 & 0.7136 & 0.7873 & 1.1452 & 0.7644 & 0.9743 & 0.4538 & 0.7828 \\
DrFuse        & 0.4176 & 0.6586 & 0.7895 & 1.0443 & 0.7535 & 0.9610 & 0.5562 & 0.7832 \\
\textbf{CAMOS} & \best{0.3588} & \best{0.6124} & \best{0.5906} & \best{0.8537} & \best{0.4088} & \best{0.5930} & \best{0.4146} & \best{0.6902} \\
\bottomrule
\end{tabular}
\end{table}

\begin{table}[hbt!]
\centering
\small
\caption{Next-visit ($h=1$) forecasting on OASIS-3 per modality.}
\label{tab:oasis_forecasting}
\begin{tabular}{l cc cc cc}
\toprule
& \multicolumn{2}{c}{Imaging $\downarrow$} & \multicolumn{2}{c}{Cognition $\downarrow$} & \multicolumn{2}{c}{Genetics $\downarrow$} \\
\cmidrule(lr){2-3} \cmidrule(lr){4-5} \cmidrule(lr){6-7}
Model & MAE & RMSE & MAE & RMSE & MAE & RMSE \\
\midrule
LinOSS-IM     & 0.7748 & 0.9968 & 0.6719 & 0.9356 & 0.7993 & 1.0398 \\
LinOSS-IMEX   & 0.7745 & 0.9966 & 0.6712 & 0.9352 & 0.8011 & 1.0408 \\
D-LinOSS      & 0.7721 & 0.9937 & 0.6789 & 0.9387 & 0.7904 & 1.0341 \\
S5            & 0.8116 & 1.0342 & 0.6916 & 0.9512 & 0.7840 & 1.0296 \\
Mamba         & 0.7866 & 1.0081 & 0.6867 & 0.9458 & 0.7711 & 1.0213 \\
Coupled Mamba & 0.7874 & 1.0094 & 0.6959 & 0.9547 & 0.8026 & 1.0419 \\
GRU-D         & 0.7846 & 1.0057 & 0.6854 & 0.9441 & 0.8054 & 1.0437 \\
mTAN          & 0.7705 & 0.9921 & 0.6749 & 0.9369 & 0.8048 & 1.0429 \\
Raindrop      & 0.8968 & 1.1273 & 0.8194 & 1.0836 & 0.9028 & 1.1542 \\
ODE-RNN       & 0.7902 & 1.0126 & 0.6775 & 0.9372 & 0.7974 & 1.0385 \\
MedFuse       & 0.7721 & 0.9925 & 0.6765 & 0.9194 & 0.8035 & 1.0377 \\
DrFuse        & 0.7729 & 0.9929 & 0.6569 & 0.9161 & 0.8158 & 1.0446 \\
\textbf{CAMOS} & \best{0.7333} & \best{0.9464} & \best{0.3756} & \best{0.6401} & \best{0.2847} & \best{0.4927} \\
\bottomrule
\end{tabular}
\end{table}

\subsection{Ablation studies on OASIS-3}
\label{app:ablation-oasis}

Table~\ref{tab:ablation_oasis} repeats the ablations of
Section~\ref{sec:ablation} under zero-shot transfer, and the ordering matches
ADNI. The IMEX discretization is the most damaging change, lowering macro F1
from $0.5573$ to $0.4880$ and raising forecasting MAE by $7.3\%$, $56.1\%$ and
$119.1\%$ on imaging, clinical measures and genetics. Removing the gate is the
next most damaging, and removing coupling or replacing the symmetric product by
the asymmetric gate costs less. Two qualifications apply. The margins between
the full model and the three non-IMEX variants are smaller than the seed
standard deviations, which reach $\pm0.034$ on macro F1 and $\pm0.112$ on
AUPRC, so only the discretization effect is resolved at this sample size. The
asymmetric gate breaks the symmetry required by Lemma~\ref{lem:psd}, but we
observed no divergence, so that guarantee acts as a safeguard rather than a
practical necessity at this scale.

\begin{table}[t]
\caption{Ablation study evaluating the contribution of coupling, gating, and
discretisation choices on OASIS-3.}
\label{tab:ablation_oasis}
\centering
\small
\setlength{\tabcolsep}{4pt}
\resizebox{\textwidth}{!}{%
\begin{tabular}{@{}p{0.30\textwidth}cccccc@{}}
\toprule
& \multicolumn{1}{c}{Staging} & \multicolumn{2}{c}{Landmark Prediction}
& \multicolumn{3}{c}{Longitudinal Forecasting} \\
\cmidrule(lr){2-2}\cmidrule(lr){3-4}\cmidrule(lr){5-7}
Variants & Macro F1 \up & AUROC \up & AUPRC \up
& Img. MAE \dn & Clin. MAE \dn & Gen. MAE \dn \\
\midrule
\textit{w/o} coupling ($C \equiv 0$)
& $0.5422_{\pm 0.0285}$ & $0.9678_{\pm 0.0203}$ & $0.7677_{\pm 0.0802}$ & $0.7647_{\pm 0.0197}$ & $0.3947_{\pm 0.0097}$ & $0.3214_{\pm 0.0056}$ \\
\textit{w} coupling ungated ($g_{kj}^{n} \equiv 1$)
& $0.5341_{\pm 0.0339}$ & $0.9602_{\pm 0.0350}$ & $0.7324_{\pm 0.0991}$ & $0.7798_{\pm 0.0201}$ & $0.4075_{\pm 0.0116}$ & $0.3392_{\pm 0.0076}$ \\
\textit{w} asymmetric gate ($g_{kj}^{n}=a_j^n$)
& $0.5493_{\pm 0.0341}$ & $0.9661_{\pm 0.0361}$ & $0.7586_{\pm 0.1121}$ & $0.7702_{\pm 0.0201}$ & $0.3989_{\pm 0.0116}$ & $0.3298_{\pm 0.0076}$ \\
\textit{w} IMEX discretization
& $0.4880_{\pm 0.0425}$ & $0.9535_{\pm 0.0308}$ & $0.7069_{\pm 0.1782}$ & $0.8173_{\pm 0.0255}$ & $0.6110_{\pm 0.0265}$ & $0.7032_{\pm 0.0131}$ \\
\midrule
\textbf{CAMOS}
& $\best{0.5573}_{\pm \best{0.0260}}$
& $\best{0.9707}_{\pm \best{0.0210}}$
& $\best{0.7777}_{\pm \best{0.0626}}$
& $\best{0.7614}_{\pm \best{0.0190}}$
& $\best{0.3913}_{\pm \best{0.0111}}$
& $\best{0.3210}_{\pm \best{0.0045}}$ \\
\bottomrule
\end{tabular}%
}
\end{table}

\end{document}

%% file: math_commands.tex
\usepackage{amsmath,amsfonts,bm}

\def\eqref#1{equation~\ref{#1}}

\def\1{\bm{1}}

\DeclareMathAlphabet{\mathsfit}{\encodingdefault}{\sfdefault}{m}{sl}
\SetMathAlphabet{\mathsfit}{bold}{\encodingdefault}{\sfdefault}{bx}{n}



%% file: figures/fig_separation.tex
\begin{tikzpicture}[
  font=\scriptsize,
  >={Stealth[length=1.5mm]},
  st/.style={draw, circle, minimum size=6.2mm, inner sep=0pt, font=\tiny, fill=white},
  op/.style={draw, rounded corners=1pt, minimum width=7mm, minimum height=4.6mm,
             inner sep=1pt, font=\tiny},
  gate/.style={draw, rectangle, minimum size=3.2mm, inner sep=0pt, font=\tiny,
               fill=orange!25, draw=orange!70!black},
  ar/.style={->, line width=0.45pt}
]

\begin{scope}
  \node[font=\scriptsize\bfseries, anchor=west] at (-0.35,1.95)
    {(a) Input gating: class $\mathcal{A}$ (LinOSS, S4/S5, LRU)};

  \foreach \i in {0,1,2,3}{
    \node[op, fill=gray!12] (T\i) at (\i*1.45,0.55) {$\mathcal{T}_{\the\numexpr\i+1\relax}$};
  }
  \foreach \i [evaluate=\i as \j using int(\i+1)] in {0,1,2}{
    \draw[ar] (T\i) -- (T\j);
  }
  \draw[ar] (-0.95,0.55) -- (T0);
  \draw[ar] (T3) -- ++(0.85,0) node[right, font=\tiny] {$o$};

  \foreach \i in {0,1,2,3}{
    \node[gate] (g\i) at (\i*1.45,-0.55) {$a^{\the\numexpr\i+1\relax}$};
    \draw[ar] (g\i) -- (T\i);
    \draw[ar] (\i*1.45,-1.35) -- node[right, font=\tiny, pos=0.15]{$u^{\the\numexpr\i+1\relax}$} (g\i);
  }
  \node[font=\tiny, text=black!70, align=center, anchor=north] at (2.2,-1.72)
    {$a$ multiplies the \emph{forcing} only;\\
     $\mathcal{T}_n$ is independent of $a$};
  \node[draw=black!35, rounded corners=2pt, fill=gray!6, inner sep=3pt, anchor=north,
        align=center, font=\tiny] at (2.2,-2.45)
    {$\displaystyle o=\sum_{k,m} a_k^m\,\psi_{k,m}(u)$\\[1.5pt]
     \textbf{additive} in the availability pattern};
\end{scope}

\begin{scope}[shift={(7.6,0)}]
  \node[font=\scriptsize\bfseries, anchor=west] at (-0.35,1.95)
    {(b) Transition gating: CAMOS};

  \foreach \i in {0,1,2,3}{
    \node[op, fill=cyan!14, draw=cyan!60!black, line width=0.6pt]
      (U\i) at (\i*1.45,0.55) {$\mathcal{T}_{\the\numexpr\i+1\relax}(a^{\the\numexpr\i+1\relax})$};
  }
  \foreach \i [evaluate=\i as \j using int(\i+1)] in {0,1,2}{
    \draw[ar] (U\i) -- (U\j);
  }
  \draw[ar] (-0.95,0.55) -- (U0);
  \draw[ar] (U3) -- ++(0.85,0) node[right, font=\tiny] {$o$};

  \foreach \i in {0,1,2,3}{
    \node[gate] (h\i) at (\i*1.45,-0.55) {$a^{\the\numexpr\i+1\relax}$};
    \draw[ar] (h\i) -- (U\i);
    \draw[ar, orange!75!black, line width=0.7pt] (h\i) to[out=35,in=-35] (U\i.south east);
    \draw[ar] (\i*1.45,-1.35) -- node[right, font=\tiny, pos=0.15]{$u^{\the\numexpr\i+1\relax}$} (h\i);
  }
  \node[font=\tiny, text=orange!60!black, align=center, anchor=north] at (2.2,-1.72)
    {$a$ gates the coupling \emph{inside} $P_n$,\\
     so $\mathcal{T}_n$ itself depends on $a^n$};
  \node[draw=cyan!60!black, rounded corners=2pt, fill=cyan!8, inner sep=3pt, anchor=north,
        align=center, font=\tiny] at (2.2,-2.45)
    {$\displaystyle o=\!\!\prod_{n}\mathcal{T}_n(a^n)x^0+\dots$\\[1.5pt]
     \textbf{non-additive}: mixed differences $\neq 0$};
\end{scope}

\end{tikzpicture}

%% file: iclr2027_conference.bib
@inproceedings{rusch2025linoss,
  title     = {Oscillatory State-Space Models},
  author    = {Rusch, T. Konstantin and Rus, Daniela},
  booktitle = {The Thirteenth International Conference on Learning Representations (ICLR)},
  year      = {2025},
  url       = {https://openreview.net/forum?id=GRMfXcAAFh}
}

@article{boyer2025dlinoss,
  title   = {Learning to Dissipate Energy in Oscillatory State-Space Models},
  author  = {Boyer, Jared and Rusch, T. Konstantin and Rus, Daniela},
  journal = {arXiv preprint arXiv:2505.12171},
  year    = {2025},
  url     = {https://arxiv.org/abs/2505.12171}
}

@inproceedings{gu2022s4,
  title     = {Efficiently Modeling Long Sequences with Structured State Spaces},
  author    = {Gu, Albert and Goel, Karan and R{\'e}, Christopher},
  booktitle = {International Conference on Learning Representations (ICLR)},
  year      = {2022},
  url       = {https://openreview.net/forum?id=uYLFoz1vlAC}
}

@inproceedings{smith2023s5,
  title     = {Simplified State Space Layers for Sequence Modeling},
  author    = {Smith, Jimmy T.H. and Warrington, Andrew and Linderman, Scott},
  booktitle = {The Eleventh International Conference on Learning Representations (ICLR)},
  year      = {2023},
  url       = {https://openreview.net/forum?id=Ai8Hw3AXqks}
}

@inproceedings{orvieto2023lru,
  title     = {Resurrecting Recurrent Neural Networks for Long Sequences},
  author    = {Orvieto, Antonio and Smith, Samuel L and Gu, Albert and Fernando, Anushan and Gulcehre, Caglar and Pascanu, Razvan and De, Soham},
  booktitle = {International Conference on Machine Learning (ICML)},
  pages     = {26670--26698},
  year      = {2023}
}

@inproceedings{gu2024mamba,
  title     = {Mamba: Linear-Time Sequence Modeling with Selective State Spaces},
  author    = {Gu, Albert and Dao, Tri},
  booktitle = {First Conference on Language Modeling (COLM)},
  year      = {2024},
  url       = {https://arxiv.org/abs/2312.00752}
}

@inproceedings{dao2024mamba2,
  title     = {Transformers are {SSM}s: Generalized Models and Efficient Algorithms Through Structured State Space Duality},
  author    = {Dao, Tri and Gu, Albert},
  booktitle = {International Conference on Machine Learning (ICML)},
  year      = {2024}
}

@inproceedings{gu2020hippo,
  title     = {{HiPPO}: Recurrent Memory with Optimal Polynomial Projections},
  author    = {Gu, Albert and Dao, Tri and Ermon, Stefano and Rudra, Atri and R{\'e}, Christopher},
  booktitle = {Advances in Neural Information Processing Systems (NeurIPS)},
  volume    = {33},
  pages     = {1474--1487},
  year      = {2020}
}

@inproceedings{chen2018neuralode,
  title     = {Neural Ordinary Differential Equations},
  author    = {Chen, Ricky T. Q. and Rubanova, Yulia and Bettencourt, Jesse and Duvenaud, David},
  booktitle = {Advances in Neural Information Processing Systems (NeurIPS)},
  volume    = {31},
  year      = {2018}
}

@inproceedings{rubanova2019latentode,
  title     = {Latent Ordinary Differential Equations for Irregularly-Sampled Time Series},
  author    = {Rubanova, Yulia and Chen, Ricky T. Q. and Duvenaud, David K},
  booktitle = {Advances in Neural Information Processing Systems (NeurIPS)},
  volume    = {32},
  year      = {2019}
}

@inproceedings{kidger2020ncde,
  title     = {Neural Controlled Differential Equations for Irregular Time Series},
  author    = {Kidger, Patrick and Morrill, James and Foster, James and Lyons, Terry},
  booktitle = {Advances in Neural Information Processing Systems (NeurIPS)},
  volume    = {33},
  pages     = {6696--6707},
  year      = {2020}
}

@inproceedings{walker2024logncde,
  title     = {Log Neural Controlled Differential Equations: The {L}ie Brackets Make a Difference},
  author    = {Walker, Benjamin and McLeod, Andrew D. and Qin, Tiexin and Cheng, Yichuan and Li, Haoliang and Lyons, Terry},
  booktitle = {International Conference on Machine Learning (ICML)},
  year      = {2024}
}

@inproceedings{debrouwer2019gruodebayes,
  title     = {{GRU-ODE-Bayes}: Continuous Modeling of Sporadically-Observed Time Series},
  author    = {De Brouwer, Edward and Simm, Jaak and Arany, Adam and Moreau, Yves},
  booktitle = {Advances in Neural Information Processing Systems (NeurIPS)},
  volume    = {32},
  year      = {2019}
}

@inproceedings{shukla2021mtan,
  title     = {Multi-Time Attention Networks for Irregularly Sampled Time Series},
  author    = {Shukla, Satya Narayan and Marlin, Benjamin M.},
  booktitle = {International Conference on Learning Representations (ICLR)},
  year      = {2021},
  url       = {https://openreview.net/forum?id=4c0J6lwQ4_}
}

@inproceedings{zhang2022raindrop,
  title     = {Graph-Guided Network for Irregularly Sampled Multivariate Time Series},
  author    = {Zhang, Xiang and Zeman, Marko and Tsiligkaridis, Theodoros and Zitnik, Marinka},
  booktitle = {International Conference on Learning Representations (ICLR)},
  year      = {2022},
  url       = {https://openreview.net/forum?id=Kwm8I7dU-l5}
}

@article{che2018grud,
  title   = {Recurrent Neural Networks for Multivariate Time Series with Missing Values},
  author  = {Che, Zhengping and Purushotham, Sanjay and Cho, Kyunghyun and Sontag, David and Liu, Yan},
  journal = {Scientific Reports},
  volume  = {8},
  number  = {1},
  pages   = {6085},
  year    = {2018},
  doi     = {10.1038/s41598-018-24271-9}
}

@inproceedings{yao2024drfuse,
  title     = {{DrFuse}: Learning Disentangled Representation for Clinical Multi-Modal Fusion with Missing Modality and Modal Inconsistency},
  author    = {Yao, Wenfang and Yin, Kejing and Cheung, William K. and Liu, Jia and Qin, Jing},
  booktitle = {Proceedings of the AAAI Conference on Artificial Intelligence},
  volume    = {38},
  pages     = {16416--16424},
  year      = {2024},
  doi       = {10.1609/aaai.v38i15.29578}
}

@inproceedings{hayat2022medfuse,
  title     = {{MedFuse}: Multi-modal Fusion with Clinical Time-series Data and Chest {X}-ray Images},
  author    = {Hayat, Nasir and Geras, Krzysztof J. and Shamout, Farah E.},
  booktitle = {Proceedings of the 7th Machine Learning for Healthcare Conference},
  series    = {Proceedings of Machine Learning Research},
  volume    = {182},
  pages     = {479--503},
  year      = {2022},
  publisher = {PMLR}
}

@inproceedings{li2024coupledmamba,
  title     = {Coupled Mamba: Enhanced Multimodal Fusion with Coupled State Space Model},
  author    = {Li, Wenbing and Zhou, Hang and Yu, Junqing and Song, Zikai and Yang, Wei},
  booktitle = {Advances in Neural Information Processing Systems (NeurIPS)},
  volume    = {37},
  year      = {2024},
  url       = {https://openreview.net/forum?id=UXEo3uNNIX}
}

@inproceedings{ma2021smil,
  title     = {{SMIL}: Multimodal Learning with Severely Missing Modality},
  author    = {Ma, Mengmeng and Ren, Jian and Zhao, Long and Tulyakov, Sergey and Wu, Cathy and Peng, Xi},
  booktitle = {Proceedings of the AAAI Conference on Artificial Intelligence},
  volume    = {35},
  pages     = {2302--2310},
  year      = {2021}
}

@inproceedings{zhang2022m3care,
  title     = {{M3Care}: Learning with Missing Modalities in Multimodal Healthcare Data},
  author    = {Zhang, Chaohe and Chu, Xu and Ma, Liantao and Zhu, Yinghao and Wang, Yasha and Zhao, Jiangtao and Wang, Junfeng},
  booktitle = {Proceedings of the 28th ACM SIGKDD Conference on Knowledge Discovery and Data Mining},
  pages     = {2418--2428},
  year      = {2022}
}

@inproceedings{wang2023shaspec,
  title     = {Multi-modal Learning with Missing Modality via Shared-Specific Feature Modelling},
  author    = {Wang, Hu and Chen, Yuanhong and Ma, Congbo and Avery, Jodie and Hull, Louise and Carneiro, Gustavo},
  booktitle = {Proceedings of the IEEE/CVF Conference on Computer Vision and Pattern Recognition (CVPR)},
  pages     = {15878--15887},
  year      = {2023}
}

@inproceedings{joze2020mmtm,
  title     = {{MMTM}: Multimodal Transfer Module for {CNN} Fusion},
  author    = {Joze, Hamid Reza Vaezi and Shaban, Amirreza and Iuzzolino, Michael L. and Koishida, Kazuhito},
  booktitle = {Proceedings of the IEEE/CVF Conference on Computer Vision and Pattern Recognition (CVPR)},
  pages     = {13289--13299},
  year      = {2020}
}

@inproceedings{polsterl2021daft,
  title     = {Combining 3{D} Image and Tabular Data via the Dynamic Affine Feature Map Transform},
  author    = {P{\"o}lsterl, Sebastian and Wolf, Tom Nuno and Wachinger, Christian},
  booktitle = {Medical Image Computing and Computer Assisted Intervention (MICCAI)},
  pages     = {688--698},
  year      = {2021},
  publisher = {Springer}
}

@article{petersen2010adni,
  title   = {{Alzheimer's Disease Neuroimaging Initiative (ADNI)}: Clinical Characterization},
  author  = {Petersen, Ronald C. and Aisen, Paul S. and Beckett, Laurel A. and Donohue, Michael C. and Gamst, Anthony C. and Harvey, Danielle J. and Jack, Clifford R. and Jagust, William J. and Shaw, Leslie M. and Toga, Arthur W. and Trojanowski, John Q. and Weiner, Michael W.},
  journal = {Neurology},
  volume  = {74},
  number  = {3},
  pages   = {201--209},
  year    = {2010},
  doi     = {10.1212/WNL.0b013e3181cb3e25}
}

@article{jack2010cascade,
  title   = {Hypothetical Model of Dynamic Biomarkers of the {A}lzheimer's Pathological Cascade},
  author  = {Jack, Clifford R. and Knopman, David S. and Jagust, William J. and Shaw, Leslie M. and Aisen, Paul S. and Weiner, Michael W. and Petersen, Ronald C. and Trojanowski, John Q.},
  journal = {The Lancet Neurology},
  volume  = {9},
  number  = {1},
  pages   = {119--128},
  year    = {2010},
  doi     = {10.1016/S1474-4422(09)70299-6}
}

@article{jack2013updated,
  title   = {Tracking Pathophysiological Processes in {A}lzheimer's Disease: An Updated Hypothetical Model of Dynamic Biomarkers},
  author  = {Jack, Clifford R. and Knopman, David S. and Jagust, William J. and Petersen, Ronald C. and Weiner, Michael W. and Aisen, Paul S. and Shaw, Leslie M. and Vemuri, Prashanthi and Wiste, Heather J. and Weigand, Stephen D. and Lesnick, Timothy G. and Pankratz, Vernon S. and Donohue, Michael C. and Trojanowski, John Q.},
  journal = {The Lancet Neurology},
  volume  = {12},
  number  = {2},
  pages   = {207--216},
  year    = {2013},
  doi     = {10.1016/S1474-4422(12)70291-0}
}

@article{hardy2002amyloid,
  title   = {The Amyloid Hypothesis of {A}lzheimer's Disease: Progress and Problems on the Road to Therapeutics},
  author  = {Hardy, John and Selkoe, Dennis J.},
  journal = {Science},
  volume  = {297},
  number  = {5580},
  pages   = {353--356},
  year    = {2002},
  doi     = {10.1126/science.1072994}
}

@inproceedings{vaswani2017attention,
  title     = {Attention Is All You Need},
  author    = {Vaswani, Ashish and Shazeer, Noam and Parmar, Niki and Uszkoreit, Jakob and Jones, Llion and Gomez, Aidan N. and Kaiser, {\L}ukasz and Polosukhin, Illia},
  booktitle = {Advances in Neural Information Processing Systems (NeurIPS)},
  volume    = {30},
  year      = {2017}
}

@inproceedings{dauphin2017glu,
  title     = {Language Modeling with Gated Convolutional Networks},
  author    = {Dauphin, Yann N. and Fan, Angela and Auli, Michael and Grangier, David},
  booktitle = {International Conference on Machine Learning (ICML)},
  pages     = {933--941},
  year      = {2017}
}

@article{hendrycks2016gelu,
  title   = {{G}aussian Error Linear Units ({GELU}s)},
  author  = {Hendrycks, Dan and Gimpel, Kevin},
  journal = {arXiv preprint arXiv:1606.08415},
  year    = {2016}
}

@article{blelloch1990prefix,
  title   = {Prefix Sums and Their Applications},
  author  = {Blelloch, Guy E.},
  journal = {Technical Report CMU-CS-90-190, School of Computer Science, Carnegie Mellon University},
  year    = {1990}
}

@article{kogge1973parallel,
  title   = {A Parallel Algorithm for the Efficient Solution of a General Class of Recurrence Equations},
  author  = {Kogge, Peter M. and Stone, Harold S.},
  journal = {IEEE Transactions on Computers},
  volume  = {C-22},
  number  = {8},
  pages   = {786--793},
  year    = {1973}
}

@article{martin2018parallel,
  title   = {Parallelizing Linear Recurrent Neural Nets Over Sequence Length},
  author  = {Martin, Eric and Cundy, Chris},
  journal = {arXiv preprint arXiv:1709.04057},
  year    = {2017}
}

@book{horn2012matrix,
  title     = {Matrix Analysis},
  author    = {Horn, Roger A. and Johnson, Charles R.},
  edition   = {2nd},
  publisher = {Cambridge University Press},
  year      = {2012}
}

@article{rubin1976missing,
  title   = {Inference and Missing Data},
  author  = {Rubin, Donald B.},
  journal = {Biometrika},
  volume  = {63},
  number  = {3},
  pages   = {581--592},
  year    = {1976}
}

@article{lamontagne2019oasis3,
  title   = {{OASIS-3}: Longitudinal Neuroimaging, Clinical, and Cognitive Dataset for Normal Aging and {A}lzheimer Disease},
  author  = {LaMontagne, Pamela J. and Benzinger, Tammie L.S. and Morris, John C. and Keefe, Sarah and Hornbeck, Russ and Xiong, Chengjie and Grant, Elizabeth and Hassenstab, Jason and Moulder, Krista and Vlassenko, Andrei G. and Raichle, Marcus E. and Cruchaga, Carlos and Marcus, Daniel},
  journal = {medRxiv},
  year    = {2019},
  doi     = {10.1101/2019.12.13.19014902}
}

@article{paszke2019pytorch,
  title   = {{PyTorch}: An Imperative Style, High-Performance Deep Learning Library},
  author  = {Paszke, Adam and Gross, Sam and Massa, Francisco and Lerer, Adam and Bradbury, James and Chanan, Gregory and Killeen, Trevor and Lin, Zeming and Gimelshein, Natalia and Antiga, Luca and Desmaison, Alban and K{\"o}pf, Andreas and Yang, Edward and DeVito, Zachary and Raison, Martin and Tejani, Alykhan and Chilamkurthy, Sasank and Steiner, Benoit and Fang, Lu and Bai, Junjie and Chintala, Soumith},
  journal = {Advances in Neural Information Processing Systems (NeurIPS)},
  volume  = {32},
  year    = {2019}
}
